\newtheorem{definition}{Definition}
\newtheorem{assumption}{Assumption}
\newtheorem{theorem}{Theorem}
\newtheorem{lemma}[theorem]{Lemma}
\newtheorem{proposition}{Proposition}
\newtheorem{remark}{Remark}
\newenvironment{assumptionp}[1]{
  
  \assumptionalt
}{\endassumptionalt}
\newenvironment{theoremp}[1]{
  
  \theoremalt
}{\endtheoremalt}
\newenvironment{corollaryp}[1]{
  
  \corollaryalt
}{\endcorollaryalt}
\let\@algcomment\relax
\newcommand\algcomment[1]{\def\@algcomment{\footnotesize#1}}
\renewcommand\fs@ruled{\def\@fs@cfont{\bfseries}\let\@fs@capt\floatc@ruled
  \def\@fs@pre{\hrule height.8pt depth0pt \kern2pt}%
  \def\@fs@post{}%
  \def\@fs@mid{\kern2pt\hrule\kern2pt}%
  \let\@fs@iftopcapt\iftrue}
\crefname{section}{Sec.}{Secs.}
\Crefname{section}{Section}{Sections}
\Crefname{table}{Table}{Tables}
\crefname{table}{Tab.}{Tabs.}
\begin{document}

\title{Partial Variance Reduction improves Non-Convex Federated learning on heterogeneous data}


\author{Bo Li$^{\ast}$ \qquad Mikkel N. Schmidt\qquad Tommy S. Alstrøm \\
Technical University of Denmark\\
{\tt\small \{blia, mnsc, tsal\}@dtu.dk } \\
\and 
Sebastian U. Stich \\
CISPA$^{\ddagger}$ \\
{\tt\small stich@cispa.de}\\
}
\maketitle

\footnotetext{\textit{$^\ast$ Work done while at CISPA}}
\footnotetext{\textit{$^\ddagger$ CISPA Helmholtz Center for Information Security}}

\begin{abstract}

Data heterogeneity across clients is a key challenge in federated learning. Prior works address this by either aligning client and server models or using control variates to correct client model drift. Although these methods achieve fast convergence in convex or simple non-convex problems, the performance in over-parameterized models such as deep neural networks is lacking. In this paper, we first revisit the widely used FedAvg algorithm in a deep neural network to understand how data heterogeneity influences the gradient updates across the neural network layers. We observe that while the feature extraction layers are learned efficiently by FedAvg, the substantial diversity of the final classification layers across clients impedes the performance. Motivated by this, we propose to correct model drift by variance reduction only on the final layers. We demonstrate that this significantly outperforms existing benchmarks at a similar or lower communication cost. We furthermore provide proof for the convergence rate of our algorithm.

\end{abstract}

\section{Introduction}
\label{sec:intro}
Federated learning (FL) is emerging as an essential distributed learning paradigm in large-scale machine learning. Unlike in traditional machine learning, where a model is trained on the collected centralized data, in federated learning, each client (\eg phones and institutions) learns a model with its local data. A centralized model is then obtained by aggregating the updates from all participating clients without ever requesting the client data, thereby ensuring a certain level of user privacy~\cite{DBLP:journals/corr/abs-1912-04977, DBLP:journals/corr/KonecnyMRR16}. Such an algorithm is especially beneficial for tasks where the data is sensitive, \eg chemical hazards detection and diseases diagnosis ~\cite{Sheller2020}.

\begin{figure}[ht!]
    \centering
    \includegraphics[width=.48\textwidth]{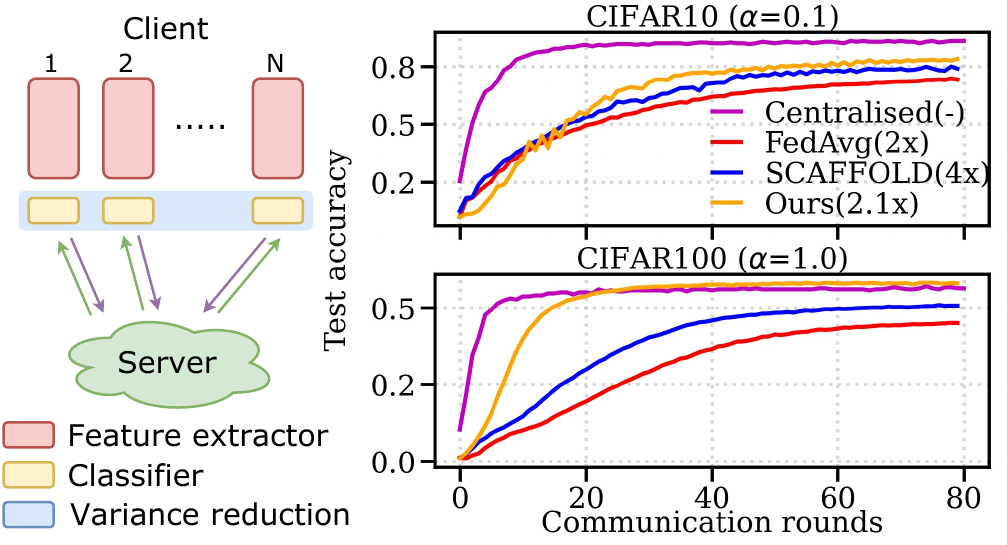}
    \caption{Our proposed FedPVR framework with the performance (communicated parameters per round client$\Longleftrightarrow$server). Smaller $\alpha$ corresponds to higher data heterogeneity. Our method achieves a better speedup than existing approaches by transmitting a slightly larger number of parameters than FedAvg.}
    \label{fig:concept}
\end{figure}

Two primary challenges in federated learning are i) handling data heterogeneity across clients~\cite{DBLP:journals/corr/abs-1912-04977} and ii) limiting the cost of communication between the server and clients~\cite{Halgamuge2009AnEO}. In this setting, FedAvg~\cite{DBLP:journals/corr/KonecnyMRR16} is one of the most widely used schemes: A server broadcasts its model to clients, which then update the model using their local data in a series of steps before sending their individual model to the server, where the models are aggregated by averaging the parameters. The process is repeated for multiple communication rounds. While it has shown great success in many applications, it tends to achieve subpar accuracy and convergence when the data are heterogeneous~\cite{DBLP:journals/corr/abs-1812-06127, DBLP:journals/corr/abs-1910-06378, DBLP:journals/corr/abs-2106-05001}. 

The slow and sometimes unstable convergence of Fed\-Avg can be caused by client drift~\cite{DBLP:journals/corr/abs-1910-06378} brought on by data heterogeneity. Numerous efforts have been made to improve FedAvg's performance in this setting. Prior works attempt to mitigate client drift by penalizing the distance between a client model and the server model~\cite{DBLP:journals/corr/abs-1812-06127, DBLP:journals/corr/abs-2103-16257} or by performing variance reduction techniques while updating client models~\cite{DBLP:journals/corr/abs-1910-06378, DBLP:journals/corr/abs-2111-04263, DBLP:journals/corr/ShamirS013}. These works demonstrate fast convergence on convex problems or for simple neural networks; however, their performance on deep neural networks, which are state-of-the-art for many centralized learning tasks~\cite{DBLP:journals/corr/HeZRS15, Simonyan15}, has yet to be well explored. Adapting techniques that perform well on convex problems to neural networks is non-trivial~\cite{DBLP:journals/corr/abs-1812-04529} due to their ``intriguing properties''~\cite{DBLP:journals/corr/SzegedyZSBEGF13} such as over-parametrization and permutation symmetries.

To overcome the above issues, we revisit the FedAvg algorithm with a deep neural network (VGG-11~\cite{Simonyan15}) under the assumption of data heterogeneity and full client participation. Specifically, we investigate which layers in a neural network are mostly influenced by data heterogeneity. We define \textit{drift diversity}, which measures the diversity of the directions and scales of the averaged gradients across clients per communication round. We observe that in the non-IID scenario, the deeper layers, especially the final classification layer, have the highest diversity across clients compared to an IID setting. This indicates that FedAvg learns good feature representations even in the non-IID scenario~\cite{https://doi.org/10.48550/arxiv.2205.13692} and that the significant variation of the deeper layers across clients is a primary cause of FedAvg's subpar performance.

Based on the above observations, we propose to align the classification layers across clients using variance reduction. Specifically, we estimate the average updating direction of the classifiers (the last several fully connected layers) at the client $\boldsymbol{c}_{i}$ and server level $\boldsymbol{c}$ and use their difference as a control variate~\cite{DBLP:journals/corr/abs-1910-06378} to reduce the variance of the classifiers across clients. We analyze our proposed algorithm and derive a convergence rate bound.

We perform experiments on the popular federated learning benchmark datasets CIFAR10~\cite{Krizhevsky2009LearningML} and CIFAR100~\cite{Krizhevsky2009LearningML} using two types of neural networks, VGG-11~\cite{Simonyan15} and ResNet-8~\cite{DBLP:journals/corr/HeZRS15}, and different levels of data heterogeneity across clients. We experimentally show that we require fewer communication rounds compared to the existing methods~\cite{DBLP:journals/corr/abs-1812-06127, DBLP:journals/corr/abs-1910-06378, DBLP:journals/corr/KonecnyMRR16} to achieve the same accuracy while transmitting a similar or slightly larger number of parameters between server and clients than FedAvg (see Fig.~\ref{fig:concept}). With a (large) fixed number of communication rounds, our method achieves on-par or better top-1 accuracy, and in some settings it even outperforms centralized learning. Using conformal prediction~\cite{angelopoulos2021uncertainty}, we show how performance can be improved further using adaptive prediction sets. 

We show that applying variance reduction on the last layers increases the diversity of the feature extraction layers. This diversity in the feature extraction layers may give each client more freedom to learn richer feature representations, and the uniformity in the classifier then ensures a less biased decision. We summarize our contributions here:
\begin{itemize}
    \item We present our algorithm for partial variance-reduced federated learning (FedPVR). We experimentally demonstrate that the key to the success of our algorithm is the diversity between the feature extraction layers and the alignment between the classifiers. 
    \item We prove the convergence rate in the convex settings and non-convex settings, precisely characterize its weak dependence on data-heterogeneity measures and show that FedPVR provably converges as fast as the centralized SGD baseline in most practical relevant cases. 
    \item We experimentally show that our algorithm is more communication efficient than previous works across various levels of data heterogeneity, datasets, and neural network architectures. In some cases where data heterogeneity exists, the proposed algorithm even performs slightly better than centralized learning.
\end{itemize}

\begin{figure*}[ht!]
    \centering
    \includegraphics{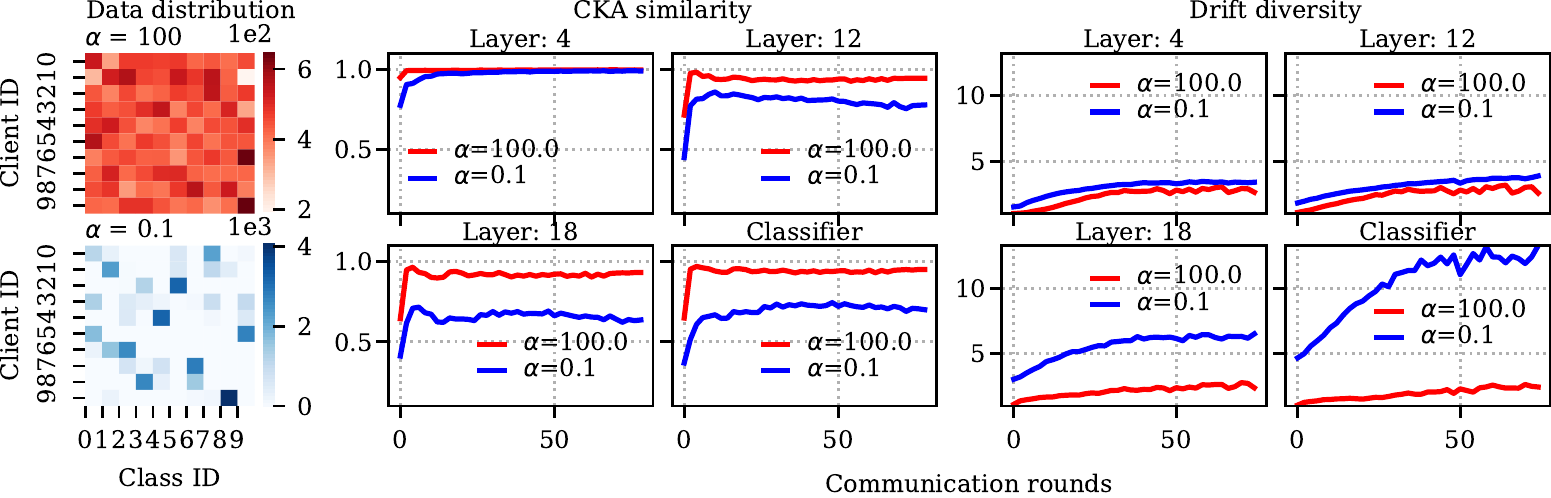}  
    \caption{Data distribution (number of images per client per class) with different levels of heterogeneity, client CKA similarity, and the drift diversity of each layer in VGG-11 (20 layers) with FedAvg. \textbf{Deep layers in an over-parameterised neural network have higher disagreement and variance when the clients are heterogeneous using FedAvg.}}
    \label{fig:cka_similarity}
\end{figure*}

\section{Related work}
\label{sec:related_work}
\subsection{Federated learning}
Federated learning (FL) is a fast-growing field~\cite{DBLP:journals/corr/abs-1912-04977, DBLP:journals/corr/abs-2107-06917}. We mainly describe FL methods in non-IID settings where the data is distributed heterogeneously across clients. Among the existing approaches, FedAvg~\cite{DBLP:journals/corr/McMahanMRA16} is the \textit{de facto} optimization technique. Despite its solid empirical performances in IID settings~\cite{DBLP:journals/corr/McMahanMRA16, DBLP:journals/corr/abs-1912-04977}, it tends to achieve a subpar accuracy-communication trade-off in non-IID scenarios.

Many works attempt to tackle FL when data is heterogeneous across clients~\cite{DBLP:journals/corr/abs-1812-06127, DBLP:journals/corr/abs-1910-06378, DBLP:conf/cvpr/GaoFLC0022, DBLP:journals/corr/abs-2111-04263, DBLP:journals/corr/abs-2108-04755, Varno2022MinimizingCD}. FedProx~\cite{DBLP:journals/corr/abs-1812-06127} proposes a temperature parameter and proximal regularization term to control the divergence between client and server models. However, the proximal term does not bring the alignment between the global and local optimal points~\cite{DBLP:journals/corr/abs-2111-04263}. 
Similarly, some works control the update direction by introducing client-dependent control variate~\cite{DBLP:journals/corr/abs-1910-06378, DBLP:journals/corr/ShamirS013, DBLP:journals/corr/KonecnyMRR16, DBLP:journals/corr/abs-2111-04263, mishchenko2022proxskip} that is also communicated between the server and clients. They have achieved a much faster convergence rate, but their performance in a non-convex setup, especially in deep neural networks, such as ResNet~\cite{DBLP:journals/corr/HeZRS15} and VGG~\cite{Simonyan15}, is not well explored. Besides, they suffer from a higher communication cost due to the transmission of the extra control variates, which may be a critical issue for resources-limited IoT mobile devices~\cite{Halgamuge2009AnEO}. Among these methods, SCAFFOLD~\cite{DBLP:journals/corr/abs-1910-06378} is the most closely related method to ours, and we give a more detailed comparison in section~\ref{sec:method} and~\ref{sec:experimental_results}.

Another line of work develops FL algorithms based on the characteristics, such as expressive feature representations~\cite{DBLP:journals/corr/abs-2010-15327} of neural networks. Collins et al.~\cite{https://doi.org/10.48550/arxiv.2205.13692} show that FedAvg is powerful in learning common data representations from clients' data. FedBabu~\cite{DBLP:journals/corr/abs-2106-06042}, TCT~\cite{https://doi.org/10.48550/arxiv.2207.06343}, and CCVR~\cite{DBLP:journals/corr/abs-2106-05001} propose to improve FL performance by finetuning the classifiers with a standalone dataset or features that are simulated based on the client models. However, preparing a standalone dataset/features that represents the data distribution across clients is challenging as this usually requires domain knowledge and may raise privacy concerns. Moon~\cite{DBLP:journals/corr/abs-2103-16257} encourages the similarity of the representations across different client models by using contrastive loss~\cite{DBLP:journals/corr/abs-2002-05709} but with the cost of three full-size models in memory on each client, which may limit its applicability in resource-limited devices.  

Other works focus on reducing the communication cost by compressing the transmitted gradients~\cite{https://doi.org/10.48550/arxiv.2002.11364, DBLP:journals/corr/abs-1901-09269, DBLP:journals/corr/abs-1911-08250, DBLP:journals/corr/Alistarh0TV16, StichCJ18sparseSGD}. They can reduce the communication bandwidth by adjusting the number of bits sent per iteration. These works are complementary to ours and can be easily integrated into our method to save communication costs.

\subsection{Variance reduction}

Stochastic variance reduction (SVR), such as SVRG~\cite{Johnson2013AcceleratingSG}, SAGA~\cite{DBLP:journals/corr/DefazioBL14}, and their variants, use control variate to reduce the variance of traditional stochastic gradient descent (SGD). These methods can remarkably achieve a linear convergence rate for strongly convex optimization problems compared to the sub-linear rate of SGD. Many federated learning algorithms, such as SCAFFOLD~\cite{DBLP:journals/corr/abs-1910-06378} and DANE~\cite{DBLP:journals/corr/ShamirS013}, have adapted the idea of variance reduction for the whole model and achieved good convergence on convex problems. However, as~\cite{DBLP:journals/corr/abs-1812-04529} demonstrated, naively applying variance reduction techniques gives no actual variance reduction and tends to result in a slower convergence in deep neural networks.
This suggests that adapting SVR techniques in deep neural networks for FL requires a more careful design.

\subsection{Conformal prediction}
Conformal prediction is a general framework that computes a prediction set guaranteed to include the true class with a high user-determined probability~\cite{angelopoulos2021uncertainty, 10.5555/3495724.3496026}. It requires no retraining of the models and achieves a finite-sum coverage guarantee~\cite{angelopoulos2021uncertainty}. As FL algorithms can hardly perform as well as centralized learning~\cite{DBLP:journals/corr/abs-2106-05001} when the data heterogeneity is high, we can integrate conformal prediction in FL to improve the empirical coverage by slightly increasing the predictive set size. This can be beneficial in sensitive use cases such as detecting chemical hazards, where it is better to give a prediction set that contains the correct class than producing a single but wrong prediction.

\section{Method}
\label{sec:method}

\subsection{Problem statement}
Given $N$ clients with full participation, we formalise the problem as minimizing the average of the stochastic functions with access to stochastic samples in Eq.~\ref{eq:problem} where $\boldsymbol{x}$ is the model parameters and $f_i$ represents the loss function at client $i$ with dataset $\mathcal{D}_i$,
\begin{equation}
    \min_{\boldsymbol{x}\in\mathbb{R}^d}\left(f(\boldsymbol{x}):=\frac{1}{N}\sum_{i=1}^N f_i(\boldsymbol{x})\right),
    \label{eq:problem}
\end{equation}
where $f_i(\boldsymbol{x}) := \mathbb{E}_{\mathcal{D}_i}[f_i(\boldsymbol{x};\mathcal{D}_i)]$.

\begin{table}[tb]
\vspace*{-3mm}
\caption{Notations used in this paper}
\vspace*{-3mm}
\resizebox{\linewidth}{!}{
\begin{tabular}{ll}
\toprule
$R, r$ & Number of communication rounds and round index  \\ 
$K, k$ & Number of local steps, local step index \\
$N, i$ & Number of clients, client index \\
$\boldsymbol{y}_{i,k}^r$ & client model $i$ at step $k$ and round $r$ \\
$\boldsymbol{x}^{r}$ & server model at round $r$  \\ 
$\boldsymbol{c}_i^r$, $\boldsymbol{c}^r$ & client and server control variate \\
\bottomrule
\end{tabular}}
\vspace*{-0.5cm}
\end{table}

\subsection{Motivation}
When the data $\{\mathcal{D}_i\}$ are heterogeneous across clients, FedAvg suffers from \emph{client drift}~\cite{DBLP:journals/corr/abs-1910-06378}, where the average of the local optimal $\bar{\boldsymbol{x}}^*=\frac{1}{N}\sum_{i\in N}\boldsymbol{x}_i^*$ is far from the global optimal $\boldsymbol{x}^*$. To understand what causes client drift, specifically which layers in a neural network are influenced most by the data heterogeneity, we perform a simple experiment using FedAvg and CIFAR10 datasets on a VGG-11. The detailed experimental setup can be found in section~\ref{sec:experimental_setup}. 

In an over-parameterized model, it is difficult to directly calculate client drift $||\bar{\boldsymbol{x}}^* - \boldsymbol{x}^*||^2$ as it is challenging to obtain the global optimum $\boldsymbol{x}^*$. We instead hypothesize that we can represent the influence of data heterogeneity on the model by measuring 1) drift diversity and 2) client model similarity. Drift diversity reflects the diversity in the amount each client model deviates from the server model after an update round.

\vspace*{-0.3cm}
\begin{definition}[Drift diversity]
We define the drift diversity across $N$ clients at round $r$ as:
\begin{equation}
    \xi^r := \frac{\sum_{i=1}^N||\boldsymbol{m}_i^r||^2}{||\sum_{i=1}^N \boldsymbol{m}_i^r||^2} \quad \boldsymbol{m}_i^r = \boldsymbol{y}_{i,K}^r -  \boldsymbol{x}^{r-1} 
    \label{eq:drift_diversity}
\end{equation}
\vspace*{-0.4cm}
\end{definition}
Drift diversity $\xi$ is high when all the clients update their models in different directions, i.e., when dot products between client updates $\boldsymbol{m}_i$ are small. When each client performs $K$ steps of vanilla SGD updates, $\xi$ depends on the directions and amplitude of the gradients over $N$ clients and is equivalent to $\frac{\sum_{i=1}^{N}||\sum_{k}g_i(\boldsymbol{y}_{i,k})||^2}{||\sum_{i=1}^N\sum_{k}g_i(\boldsymbol{y}_{i,k})||^2}$, where $g_i(\boldsymbol{y}_{i,k})$ is the stochastic mini-batch gradient.

After updating client models, we quantify the client model similarity using centred kernel alignment (CKA)~\cite{DBLP:journals/corr/abs-1905-00414} computed on a test dataset. CKA is a widely used permutation invariant metric for measuring the similarity between feature representations in neural networks~\cite{DBLP:journals/corr/abs-2106-05001,DBLP:journals/corr/abs-1905-00414, DBLP:journals/corr/abs-2010-15327}.

Fig.~\ref{fig:cka_similarity} shows the movement of $\xi$ and CKA across different levels of data heterogeneity using FedAvg. We observe that the similarity and diversity of the early layers (\eg layer index 4 and 12) are with a higher agreement between the IID ($\alpha=100.0$) and non-IID ($\alpha=0.1$) experiments, which indicates that FedAvg can still learn and extract good feature representations even when it is trained with non-IID data. The lower similarity on the deeper layers, especially the classifiers, suggests that these layers are strongly biased towards their local data distribution. When we only look at the model that is trained with $\alpha=0.1$, we see the highest diversity and variance on the classifiers across clients compared to the rest of the layers. Based on the above observations, we propose to align the classifiers across clients using variance reduction. We deploy client and server control variates to control the updating directions of the classifiers.

\subsection{Classifier variance reduction}
Our proposed algorithm (Alg.~\textcolor{red}{I}) consists of three parts: i) client updating (Eq.~\ref{eq:client_update_svr}-\ref{eq:client_update_sgd}) ii) client control variate updating, (Eq.~\ref{eq:control_variate_update}), and iii) server updating (Eq.~\ref{eq:server_update_x}-\ref{eq:server_update_c})

We first define a vector $\boldsymbol{p} \in \mathbb{R}^d$ that contains $0$ or $1$ with $v$ non-zero elements ($v\ll d$) in Eq.~\ref{eq:p_define}. We recover SCAFFOLD with $\boldsymbol{p} = \boldsymbol{1}$ and recover FedAvg with $\boldsymbol{p} = \boldsymbol{0}$. For the set of indices $j$ where $\boldsymbol{p}_j=1$ ($S_{\text{svr}}$ from Eq.~\ref{eq:indices}), we update the corresponding weights $\boldsymbol{y}_{i, S_{\text{svr}}}$ with variance reduction such that we maintain a state for each client ($\boldsymbol{c}_i \in \mathbb{R}^v$) and for the server ($\boldsymbol{c} \in \mathbb{R}^v$) in Eq.~\ref{eq:client_update_svr}. For the rest of the indices $S_{\text{sgd}}$ from Eq.~\ref{eq:indices}, we update the corresponding weights $\boldsymbol{y}_{i S_{\text{sgd}}}$ with SGD in Eq.~\ref{eq:client_update_sgd}. As the server variate $\boldsymbol{c}$ is an average of $\boldsymbol{c}_i$ across clients, we can safely initialise them as $\boldsymbol{0}$.

In each communication round, each client receives a copy of the server model $\boldsymbol{x}$ and the server control variate $\boldsymbol{c}$. They then perform $K$ model updating steps (see Eq.~\ref{eq:client_update_svr}-~\ref{eq:client_update_sgd} for one step) using cross-entropy as the loss function. Once this is finished, we calculate the updated client control variate $\boldsymbol{c}_i$ using Eq.~\ref{eq:control_variate_update}. The server then receives the updated $\boldsymbol{c}_{i}$ and $\boldsymbol{y}_{i}$ from all the clients for aggregation (Eq.~\ref{eq:server_update_x}-\ref{eq:server_update_c}). This completes one communication round.

\vspace*{-5mm}
\begin{flalign}
\boldsymbol{p} &:= \{0, 1\}^d, \quad v = \sum \boldsymbol{p} \label{eq:p_define}\\
S_{\text{svr}} &:= \{j: \boldsymbol{p}_j = 1\}, \quad S_{\text{sgd}} := \{j: \boldsymbol{p}_j = 0\} \label{eq:indices}\\
\boldsymbol{y}_{i, S_{\text{svr}}} &\leftarrow \boldsymbol{y}_{i, S_{\text{svr}}} - \eta_l (g_i(\boldsymbol{y}_i)_{S_{\text{svr}}} - \boldsymbol{c}_i + \boldsymbol{c}) \label{eq:client_update_svr}\\
\boldsymbol{y}_{i, S_{\text{sgd}}} &\leftarrow \boldsymbol{y}_{i, S_{\text{sgd}}} - \eta_l g_i(\boldsymbol{y}_i)_{S_{\text{sgd}}} \label{eq:client_update_sgd}\\
\boldsymbol{c}_i &\leftarrow \boldsymbol{c}_i - \boldsymbol{c} + \frac{1}{K\eta_l}\left(\boldsymbol{x}_{S_{\text{svr}}} - \boldsymbol{y}_{i, S_{\text{svr}}}\right)
\label{eq:control_variate_update} \\
\boldsymbol{x} &\leftarrow (1-\eta_g)\boldsymbol{x} + \frac{1}{N}\sum_{i\in N}\boldsymbol{y}_{i}
\label{eq:server_update_x}\\
\boldsymbol{c} &\leftarrow \frac{1}{N}\sum_{i\in N}\boldsymbol{c}_i
\label{eq:server_update_c}
\end{flalign}
\vspace*{-5mm}

\begin{algorithm}[ht!]
\floatname{algorithm}{Algorithm I}
\small
\caption{Partial variance reduction (FedPVR)}
\algcomment{In terms of implementation, we can simply assume the control variate for the block of weights that is updated with SGD as $\boldsymbol{0}$ and implement line 8 and 9 in one step}
\label{alg}

\hspace*{\algorithmicindent} \textbf{server}: initialise the server model $\boldsymbol{x}$, the control variate $\boldsymbol{c}$, and global step size $\eta_g$

\hspace*{\algorithmicindent} \textbf{client}: initialise control variate $\boldsymbol{c}_i$ and local step size $\eta_l$

\hspace*{\algorithmicindent} \textbf{mask}: $\boldsymbol{p}:=\{0, 1\}^d$, $S_{\text{sgd}}:=\{j: \boldsymbol{p}_j = 0\}$, $S_{\text{svr}}:= \{j: \boldsymbol{p}_j = 1\}$

\label{Algorithm:l_scaffold}
\begin{algorithmic}[1]
\Procedure{Model updating}{}
    \For {$r = 1 \to R $}
    \State \textbf{communicate} $\boldsymbol{x}$\ \textbf{and} $\boldsymbol{c}$ \textbf{to all clients} $i \in [N]$
    \For {On client $i \in [N]$ in parallel}
    \State $\boldsymbol{y}_i \leftarrow \boldsymbol{x}$
    \For {$k = 1 \to K$}
        \State compute minibatch gradient $g_i(\boldsymbol{y}_{i})$
        \State $\boldsymbol{y}_{i, S_{\text{sgd}}} \leftarrow \boldsymbol{y}_{i, S_{\text{sgd}}} - \eta_l g_i(\boldsymbol{y}_i)_{S_{\text{sgd}}}$
        \State $\boldsymbol{y}_{i, S_{\text{svr}}} \leftarrow \boldsymbol{y}_{i, S_{\text{svr}}} - \eta_l (g_i(\boldsymbol{y}_i)_{S_{\text{svr}}} - \boldsymbol{c}_i+\boldsymbol{c})$ 
    \EndFor
    \State $\boldsymbol{c}_{i} \leftarrow \boldsymbol{c}_{i} - \boldsymbol{c} + \frac{1}{K\eta_l}(\boldsymbol{x}_{S_{\text{svr}}} - \boldsymbol{y}_{i, S_{\text{svr}}})$
    \State \textbf{communicate} $\boldsymbol{y}_{i}, \boldsymbol{c}_{i}$
    \EndFor
    \State $\boldsymbol{x} \leftarrow (1-\eta_g)\boldsymbol{x} + \frac{1}{N}\sum_{i\in N}\boldsymbol{y}_{i}$
    \State $\boldsymbol{c} \leftarrow \frac{1}{N}\sum_{i\in N}\boldsymbol{c}_i$
    \EndFor
\EndProcedure
\Statex
\end{algorithmic}
  \vspace{-0.4cm}%
\end{algorithm}
\vspace*{-5mm}

\textbf{Ours vs SCAFFOLD~\cite{DBLP:journals/corr/abs-1910-06378}} While our work is similar to SCAFFOLD in the use of variance reduction, there are some fundamental differences. We both communicate control variates between the clients and server, but our control variate ($2v \leq 0.1d$ ) is significantly smaller than the one in SCAFFOLD ($2d$). This 2x decrease in bits can be critical for some low-power IoT devices as the communication may consume more energy~\cite{Halgamuge2009AnEO}. From the application point of view, SCAFFOLD achieved great success in convex or simple two layers problems. 
However, adapting the techniques that work well from convex problems to over-parameterized models is non-trivial~\cite{DBLP:journals/corr/SzegedyZSBEGF13}, and naively adapting variance reduction techniques on deep neural networks gives little or no convergence speedup~\cite{DBLP:journals/corr/abs-1812-04529}. Therefore, the significant improvement achieved by our method gives essential and non-trivial insight into what matters when tackling data heterogeneity in FL in over-parameterized models. 

\subsection{Convergence rate}
We state the convergence rate in this section. We assume functions $\{f_i\}$ are $\beta$-smooth following~\cite{DBLP:journals/corr/abs-1907-04232, DBLP:journals/corr/abs-2003-10422}. We then assume $g_i(\boldsymbol{x}):=\nabla f_i(x;\mathcal{D}_i)$ is an unbiased stochastic gradient of $f_i$ with variance bounded by $\sigma^2$. We assume strongly convexity ($\mu > 0$) and general convexity ($\mu=0$) for some of the results following~\cite{DBLP:journals/corr/abs-1910-06378}. Furthermore, we also make assumptions about the heterogeneity of the functions.  

For convex functions, we assume the heterogeneity of the function $\{f_i\}$ at the optimal point $\boldsymbol{x}^*$ (such a point always exists for a strongly convex function) following~\cite{DBLP:journals/corr/abs-2003-10422, DBLP:journals/corr/abs-1909-04746}. 
\vspace*{-2mm}

\begin{assumption}[$\zeta$-heterogeneity]
We define a measure of variance at the optimum $\boldsymbol{x}^*$ given $N$ clients as :
\vspace*{-2mm}
\begin{equation}
\zeta^2 := \frac{1}{N}\sum_{i=1}^N\mathbb{E}||\nabla f_i(\boldsymbol{x}^*)||^2 \,.
\end{equation}
\label{assump:zeta}
\end{assumption}
\vspace*{-6mm}
For the non-convex functions, such an unique optimal point $\boldsymbol{x}^*$ does not necessarily exist, so we generalize Assumption~\ref{assump:zeta} to Assumption~\ref{assump:zeta_hat}.

\begin{assumption}[$\hat{\zeta}$-heterogeneity]
We assume there exists constant $\hat{\zeta}$ such that $\forall \boldsymbol{x}\in \mathbb{R}^d$
\vspace*{-2mm}
\begin{equation}
\frac{1}{N}\sum_{i=1}^N\mathbb{E}||\nabla f_i(\boldsymbol{x})||^2 \leq \hat{\zeta}^2 \,.
\end{equation}
\label{assump:zeta_hat}
\end{assumption}
\vspace*{-4mm}

Given the mask $\boldsymbol{p}$ as defined in Eq.~\ref{eq:p_define}, we know $||\boldsymbol{p}\odot\boldsymbol{x}|| \leq ||\boldsymbol{x}||$. Therefore, we have the following propositions.

\begin{proposition}[Implication of Assumption~\ref{assump:zeta}]
Given the mask $\boldsymbol{p}$, we define the heterogeneity of the block of weights that are not variance reduced at the optimum $\boldsymbol{x}^*$ as:
\vspace*{-2mm}
\begin{equation}
    \zeta_{1-p}^2 := \frac{1}{N}\sum_{i=1}^{N}||(\boldsymbol{1}-\boldsymbol{p})\odot\nabla f_i(\boldsymbol{x^*})||^2 , 
\end{equation}
If Assumption~\ref{assump:zeta} holds, then it also holds that:
\vspace*{-2mm}
\begin{equation}
    \zeta_{1-p}^2 \leq \zeta^2\,.
\end{equation}
\label{proposition:zeta}
\end{proposition}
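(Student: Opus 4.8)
The plan is to reduce the claim to the coordinate-masking inequality already recorded just before the statement, namely $||\boldsymbol{p}\odot\boldsymbol{x}|| \leq ||\boldsymbol{x}||$ for any binary mask $\boldsymbol{p}\in\{0,1\}^d$. The first observation I would make is that $\boldsymbol{1}-\boldsymbol{p}$ is itself a valid binary mask: since each entry $\boldsymbol{p}_j\in\{0,1\}$, we have $(\boldsymbol{1}-\boldsymbol{p})_j\in\{0,1\}$ as well. Hence the same inequality applies verbatim with $\boldsymbol{p}$ replaced by $\boldsymbol{1}-\boldsymbol{p}$, giving $||(\boldsymbol{1}-\boldsymbol{p})\odot\boldsymbol{x}|| \leq ||\boldsymbol{x}||$.

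Next I would apply this inequality to the vector $\boldsymbol{x}=\nabla f_i(\boldsymbol{x}^*)$ for each client $i$ separately. Squaring both sides, which is valid since both quantities are nonnegative, and then taking the expectation over the stochastic sample preserves the inequality, so $\mathbb{E}||(\boldsymbol{1}-\boldsymbol{p})\odot\nabla f_i(\boldsymbol{x}^*)||^2 \leq \mathbb{E}||\nabla f_i(\boldsymbol{x}^*)||^2$. Averaging over $i=1,\dots,N$, recognising the right-hand side as the definition of $\zeta^2$ in Assumption~\ref{assump:zeta} and the left-hand side as $\zeta_{1-p}^2$, then yields $\zeta_{1-p}^2\leq\zeta^2$, which is exactly the claim.

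There is essentially no hard step here; the proposition is a direct consequence of coordinate projection being non-expansive in the Euclidean norm. The only points requiring a moment of care are to confirm that $\boldsymbol{1}-\boldsymbol{p}$ inherits the binary structure needed to invoke the stated inequality, and to handle the expectation consistently---the inequality holds pointwise for every realisation of the stochastic gradient, so it survives taking expectations and then averaging across clients. Notably, no convexity, smoothness, or optimality of $\boldsymbol{x}^*$ is actually used; the bound is a purely algebraic property of the Hadamard masking, so the same argument would go through at any fixed point, not only the optimum---which is precisely what makes the analogous non-convex implication built on Assumption~\ref{assump:zeta_hat} equally immediate.
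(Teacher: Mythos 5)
Your proof is correct and takes essentially the same route as the paper: the paper derives this proposition directly from the recorded masking inequality $||\boldsymbol{p}\odot\boldsymbol{x}||\leq||\boldsymbol{x}||$, implicitly applied with the complementary (still binary) mask $\boldsymbol{1}-\boldsymbol{p}$ to each $\nabla f_i(\boldsymbol{x}^*)$ and averaged over clients. Your write-up is in fact slightly more explicit than the paper's (which states the implication without spelling out the steps), and your closing observation that no optimality or convexity is used is also consistent with how the paper reuses the identical argument for the non-convex Proposition~\ref{proposition:zeta_hat}.
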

\vspace*{-6mm}

In Proposition~\ref{proposition:zeta}, $\zeta_{1-p}^2 = \zeta^2$ if $\boldsymbol{p} = \boldsymbol{0}$ and $\zeta_{1-p}^2 = 0$ if $\boldsymbol{p}=\boldsymbol{1}$. If $\boldsymbol{p} \neq \boldsymbol{0}$ and $\boldsymbol{p} \neq \boldsymbol{1}$, as the heterogeneity of the shallow weights is lower than the deeper weights~\cite{https://doi.org/10.48550/arxiv.2207.06343}, we have $\zeta_{1-p}^2 \leq \zeta^2$. Similarly, we can validate Proposition~\ref{proposition:zeta_hat}.

\begin{proposition}[Implication of Assumption~\ref{assump:zeta_hat}]
Given the mask $\boldsymbol{p}$, we assume there exists constant $\hat{\zeta}_{1-p}$ such that $\forall \boldsymbol{x} \in \mathbb{R}^d$, the heterogeneity of the block of weights that are not variance reduced:
\vspace*{-2mm}
\begin{equation}
    \frac{1}{N}\sum_{i=1}^{N}||(\boldsymbol{1}-\boldsymbol{p})\odot \nabla f_i(\boldsymbol{x})||^2  \leq \hat{\zeta}_{1-p} , 
\end{equation}
If Assumption~\ref{assump:zeta_hat} holds, then it also holds that:
\vspace*{-2mm}
\begin{equation}
    \hat{\zeta}_{1-p}^2 \leq \hat{\zeta}^2\,.
\end{equation}
\label{proposition:zeta_hat}
\end{proposition}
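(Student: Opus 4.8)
The plan is to reduce this to the elementary coordinate-masking inequality $\|\boldsymbol{p}\odot\boldsymbol{z}\|\le\|\boldsymbol{z}\|$ already noted just above the proposition, applied to the complement mask $\boldsymbol{1}-\boldsymbol{p}$. Since $\boldsymbol{p}\in\{0,1\}^d$, the vector $\boldsymbol{1}-\boldsymbol{p}$ is again a $\{0,1\}^d$ mask, so the same inequality holds verbatim with $\boldsymbol{1}-\boldsymbol{p}$ in place of $\boldsymbol{p}$. The whole argument mirrors the proof of Proposition~\ref{proposition:zeta}, the only new ingredient being that the bound must now hold uniformly over all $\boldsymbol{x}\in\mathbb{R}^d$ rather than at the single point $\boldsymbol{x}^*$.

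First I would fix an arbitrary $\boldsymbol{x}\in\mathbb{R}^d$ and an arbitrary client index $i$, and apply the squared masking inequality to $\boldsymbol{z}=\nabla f_i(\boldsymbol{x})$, giving $\|(\boldsymbol{1}-\boldsymbol{p})\odot\nabla f_i(\boldsymbol{x})\|^2\le\|\nabla f_i(\boldsymbol{x})\|^2$; concretely, zeroing out the coordinates indexed by $S_{\text{svr}}$ can only drop nonnegative terms from the sum of squared entries. Because this holds for every realization of the stochastic gradient, monotonicity of expectation preserves it, so $\mathbb{E}\|(\boldsymbol{1}-\boldsymbol{p})\odot\nabla f_i(\boldsymbol{x})\|^2\le\mathbb{E}\|\nabla f_i(\boldsymbol{x})\|^2$.

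Next I would average over $i=1,\dots,N$ and invoke Assumption~\ref{assump:zeta_hat} on the right-hand side, obtaining
\begin{equation*}
\frac{1}{N}\sum_{i=1}^N\mathbb{E}\|(\boldsymbol{1}-\boldsymbol{p})\odot\nabla f_i(\boldsymbol{x})\|^2\le\frac{1}{N}\sum_{i=1}^N\mathbb{E}\|\nabla f_i(\boldsymbol{x})\|^2\le\hat{\zeta}^2 .
\end{equation*}
Since $\boldsymbol{x}$ was arbitrary, the constant $\hat{\zeta}^2$ is itself a valid uniform bound for the left-hand quantity; taking the supremum over $\boldsymbol{x}$ then shows that the smallest admissible constant $\hat{\zeta}_{1-p}^2$ implicitly defined in the statement satisfies $\hat{\zeta}_{1-p}^2\le\hat{\zeta}^2$, which is the claim.

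I do not expect a genuine obstacle here: the result is essentially a one-line consequence of the masking inequality and is the $\hat{\zeta}$-analogue of Proposition~\ref{proposition:zeta}. The only points requiring a little care are (i) verifying that the masking inequality transfers to the complement mask $\boldsymbol{1}-\boldsymbol{p}$, and (ii) handling the passage from the pointwise bound to the uniform constant correctly, i.e. reading $\hat{\zeta}_{1-p}^2$ as the infimum of valid uniform bounds so that the inequality $\hat{\zeta}_{1-p}^2\le\hat{\zeta}^2$ is well posed.
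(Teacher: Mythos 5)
Your proof is correct and follows essentially the same route as the paper, which justifies this proposition purely by the masking inequality $\|\boldsymbol{p}\odot\boldsymbol{x}\|\le\|\boldsymbol{x}\|$ (applied here to the complement mask $\boldsymbol{1}-\boldsymbol{p}$, which is again binary), followed by taking expectations, averaging over clients, and invoking Assumption~\ref{assump:zeta_hat}. Your added care in reading $\hat{\zeta}_{1-p}^2$ as the smallest admissible uniform bound is a sensible tightening of the paper's informal statement, but it does not constitute a different argument.
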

\vspace*{-10mm}

\begin{theorem}
\label{theorem_convergence}
For any $\beta$-smooth function $\{f_i\}$, the output of \texttt{FedPVR} has expected error smaller than $\epsilon$ for $\eta_g = \sqrt{N}$ and some values of $\eta_l$, $R$ satisfying:
\begin{itemize}
    \item \textbf{Strongly convex:} $\eta_l \leq \min\left(\frac{1}{80K\eta_g\beta}, \frac{26}{20\mu K\eta_g}\right)$,
    \begin{equation}
        R = \tilde{\mathcal{O}}\left(\frac{\sigma^2}{\mu NK\epsilon} + \frac{\zeta_{1-p}^2}{\mu\epsilon} + \frac{\beta}{\mu}\right),
    \end{equation}
    \vspace*{-6mm}
    \item \textbf{General convex:} $\eta_l \leq \frac{1}{80K\eta_g\beta}$,
    \begin{equation}
        R = \mathcal{O}\left(\frac{\sigma^2D}{KN\epsilon^2} + \frac{\zeta_{1-p}^2D}{\epsilon^2} + \frac{\beta D}{\epsilon}  + F\right),
    \end{equation}
    \vspace*{-6mm}
    \item \textbf{Non-convex}: $\eta_l \leq \frac{1}{26K\eta_g\beta}$, and $R\geq 1$, then:
    \begin{equation}
        R=\mathcal{O}\left(\frac{\beta\sigma^2F}{KN\epsilon^2} + \frac{\beta\hat{\zeta}_{1-p}^2F}{N\epsilon^2} + \frac{\beta F}{\epsilon} \right),
    \end{equation}
\end{itemize}
\vspace*{-4mm}
Where $D := ||\boldsymbol{x}^0 - \boldsymbol{x}^*||^2$ and $F:=f(\boldsymbol{x}^0) - f^*$.
\end{theorem}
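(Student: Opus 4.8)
The plan is to follow the variance-reduction analysis of SCAFFOLD~\cite{DBLP:journals/corr/abs-1910-06378}, adapting every step to the coordinate split induced by the mask $\boldsymbol{p}$. First I would rewrite the local iteration (Eq.~\ref{eq:client_update_svr}--\ref{eq:client_update_sgd}) as a single update over $\mathbb{R}^d$ by zero-padding the control variates onto $S_{\text{sgd}}$, i.e.\ setting $\tilde{\boldsymbol{c}}_i := \boldsymbol{p}\odot\boldsymbol{c}_i$ and $\tilde{\boldsymbol{c}} := \boldsymbol{p}\odot\boldsymbol{c}$ so that $\boldsymbol{y}_{i}\leftarrow\boldsymbol{y}_i-\eta_l(g_i(\boldsymbol{y}_i)-\tilde{\boldsymbol{c}}_i+\tilde{\boldsymbol{c}})$. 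This exposes the algorithm as a hybrid in which the $S_{\text{svr}}$-block behaves like SCAFFOLD while the $S_{\text{sgd}}$-block behaves like FedAvg, and makes the server aggregation (Eq.~\ref{eq:server_update_x}--\ref{eq:server_update_c}) amenable to the standard two-level $(\eta_l,\eta_g)$ bookkeeping.

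Next I would prove the two workhorse lemmas. (i) A client-drift bound on $\mathbb{E}\lVert\boldsymbol{y}_{i,k}^r-\boldsymbol{x}^{r-1}\rVert^2$: expanding the recursion and using $\beta$-smoothness, the $S_{\text{svr}}$ coordinates contribute only the stochastic-noise scale $\sigma^2$ (the control variates cancel the between-client bias), whereas the $S_{\text{sgd}}$ coordinates contribute a heterogeneity term that, evaluated at $\boldsymbol{x}^*$ (convex) or uniformly (non-convex), collapses to $\zeta_{1-p}^2$ (resp.\ $\hat{\zeta}_{1-p}^2$) by Propositions~\ref{proposition:zeta}--\ref{proposition:zeta_hat}; this is exactly where $\zeta^2$ is replaced by the smaller $\zeta_{1-p}^2$. (ii) A control-variate-error recursion for $\mathcal{E}^r:=\frac{1}{N}\sum_i\mathbb{E}\lVert\boldsymbol{c}_i^r-\boldsymbol{p}\odot\nabla f_i(\boldsymbol{x}^*)\rVert^2$, showing that $\mathcal{E}^r$ is contracted up to $\mathcal{O}(\sigma^2/K)$ and the accumulated drift, so it can be absorbed into a Lyapunov function.

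I would then establish a one-round progress inequality. For the strongly convex and general convex cases I would use the potential $\Phi^r=\mathbb{E}\lVert\boldsymbol{x}^r-\boldsymbol{x}^*\rVert^2+\lambda\eta_g^2 K^2\eta_l^2\,\mathcal{E}^r$, insert the drift and control-variate lemmas, and invoke $\mu$-strong-convexity / convexity of $f$ to obtain $\Phi^r\le(1-\tfrac{\mu\eta_g K\eta_l}{c})\Phi^{r-1}+\text{(noise)}+\text{(het.)}$; unrolling with the linear-convergence step-size lemma of~\cite{DBLP:journals/corr/abs-1910-06378} (resp.\ the $\mu=0$ tuning lemma) yields the two stated rates, with the heterogeneity contribution $\tfrac{\zeta_{1-p}^2}{\mu\epsilon}$ (resp.\ $\tfrac{\zeta_{1-p}^2 D}{\epsilon^2}$). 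For the non-convex case no fixed $\boldsymbol{x}^*$ is available, so I would instead take the function value $f(\boldsymbol{x}^r)$ as the potential, use $\beta$-smoothness to get $\mathbb{E}[f(\boldsymbol{x}^r)]\le f(\boldsymbol{x}^{r-1})-c\,\eta_g K\eta_l\lVert\nabla f(\boldsymbol{x}^{r-1})\rVert^2+\text{(noise+het.)}$ with the heterogeneity now controlled uniformly by $\hat{\zeta}_{1-p}^2$ (Assumption~\ref{assump:zeta_hat}/Proposition~\ref{proposition:zeta_hat}), telescope, and tune $\eta_l$ to reach the $\lVert\nabla f\rVert^2\le\epsilon$ guarantee.

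The hard part will be step (i): because $\beta$-smoothness is a statement about the \emph{full} gradient $\nabla f_i$, the $S_{\text{svr}}$- and $S_{\text{sgd}}$-blocks do not decouple --- the drift of the SGD coordinates enters the gradient evaluated on the SVR coordinates and vice versa. I expect the main technical effort to lie in bounding these cross terms so that the between-client bias survives \emph{only} on $S_{\text{sgd}}$ (producing $\zeta_{1-p}^2$) and is fully cancelled on $S_{\text{svr}}$, while ensuring the coupling is absorbed into the $\sigma^2$ and drift terms without reintroducing the full $\zeta^2$. The step-size constraints ($\eta_l\lesssim 1/(K\eta_g\beta)$) are precisely what keep these cross terms subdominant.
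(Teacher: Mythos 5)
Your proposal is correct and follows essentially the same route as the paper's own proof: zero-padding the control variates to recover a full-dimensional SCAFFOLD-style update (the paper's Algorithm II), a client-drift lemma plus a control-variate-error bound in which the heterogeneity survives only on the $(\boldsymbol{1}-\boldsymbol{p})$ block (yielding $\zeta_{1-p}^2$, resp.\ $\hat{\zeta}_{1-p}^2$), a Lyapunov function of the form $\mathbb{E}\lVert\boldsymbol{x}^r-\boldsymbol{x}^*\rVert^2+\mathrm{const}\cdot\tilde{\eta}^2\,C_r$ (function value plus $\Theta_r$ in the non-convex case), and unrolling via the standard step-size tuning lemmas. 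The only cosmetic deviation is that under full participation the control-variate error needs no contraction argument --- $\boldsymbol{c}_i^r=\frac{1}{K}\sum_k\boldsymbol{p}\odot g_i(\boldsymbol{y}_{i,k-1}^r)$ is refreshed each round, so the paper bounds it directly in terms of the drift, the suboptimality gap, and $\zeta_{1-p}^2$.
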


Given the above assumptions, the convergence rate is given in Theorem~\ref{theorem_convergence}. When $\boldsymbol{p} = \boldsymbol{1}$, we recover SCAFFOLD convergence guarantee as $\zeta_{1-p}^2=0, \hat{\zeta}_{1-p}^2=0$. In the strongly convex case, the effect of the heterogeneity of the block of weights that are not variance reduced $\zeta_{1-p}^2$ becomes negligible if $\tilde{\mathcal{O}}\left(\frac{\zeta_{1-p}^2}{\epsilon}\right)$ is sufficiently smaller than $\tilde{\mathcal{O}}\left(\frac{\sigma^2}{NK\epsilon} \right)$. In such case, our rate is $\frac{\sigma^2}{NK\epsilon} + \frac{1}{\mu}$, which recovers the SCAFFOLD in the strongly convex without sampling and further matches that of SGD (with mini-batch size $K$ on each worker). We also recover the FedAvg rate\footnotemark{} at simple IID case. See Appendix.~\textcolor{red}{B} for the full proof.

\footnotetext{FedAvg at strongly convex case has the rate $R=\tilde{\mathcal{O}}(\frac{\sigma^2}{\mu KN\epsilon}  + \frac{\sqrt{\beta}G}{\mu\sqrt{\epsilon}} + \frac{\beta}{\mu})$ with $G$ measures the gradient dissimilarity. At simple IID case, G=0~\cite{DBLP:journals/corr/abs-1910-06378}.}
\section{Experimental setup}
\label{sec:experimental_setup}

We demonstrate the effectiveness of our approach with CIFAR10~\cite{Krizhevsky2009LearningML} and CIFAR100~\cite{Krizhevsky2009LearningML} on image classification tasks. We simulate the data heterogeneity scenario following~\cite{DBLP:conf/nips/LinKSJ20} by partitioning the data according to the Dirichlet distribution with the concentration parameter $\alpha$. The smaller the $\alpha$ is, the more imbalanced the data are distributed across clients. An example of the data distribution over multiple clients using the CIFAR10 dataset can be seen in Fig.~\ref{fig:cka_similarity}. In our experiment, we use $\alpha\in\{0.1, 0.5, 1.0\}$ as these are commonly used concentration parameters~\cite{DBLP:conf/nips/LinKSJ20}. Each client has its local data, and this data is kept to be the same during all the communication rounds. We hold out the test dataset at the server for evaluating the classification performance of the server model. Following~\cite{DBLP:conf/nips/LinKSJ20}, we perform the same data augmentation for all the experiments.

We use two models:\ VGG-11 and ResNet-8 following~\cite{DBLP:conf/nips/LinKSJ20}. We perform variance reduction for the last three layers in VGG-11 and the last layer in ResNet-8. We use 10 clients with full participation following~\cite{https://doi.org/10.48550/arxiv.2207.06343} (close to cross-silo setup) and a batch size of 256. Each client performs 10 local epochs of model updating. We set the server learning rate $\eta_g=1$ for all the models~\cite{DBLP:journals/corr/abs-1910-06378}. We tune the clients learning rate from $\{0.05, 0.1, 0.2, 0.3\}$ for each individual experiment. The learning rate schedule is experimentally chosen from constant, cosine decay~\cite{DBLP:journals/corr/LoshchilovH16a}, and multiple step decay~\cite{DBLP:conf/nips/LinKSJ20}. We compare our method with the representative federated learning algorithms FedAvg~\cite{DBLP:journals/corr/McMahanMRA16}, FedProx~\cite{DBLP:journals/corr/abs-1812-06127}, SCAFFOLD~\cite{DBLP:journals/corr/abs-1910-06378}, and FedDyn~\cite{DBLP:journals/corr/abs-2111-04263}. 
All the results are averaged over three repeated experiments with different random initialization. We leave $1\%$ of the training data from each client out as the validation data to tune the hyperparameters (learning rate and schedule) per client. See Appendix.~\textcolor{red}{C} for additional experimental setups. The code is at \url{github.com/lyn1874/fedpvr}. 
\vspace*{-0.18cm}

\section{Experimental results}
\label{sec:experimental_results}
\begin{table*}[ht!]
\caption{The required number of communication rounds (speedup compared to FedAvg) to achieve a certain level of top-1 accuracy ($66\%$ for the CIFAR10 dataset and $44\%$ for the CIFAR100 dataset). Our method requires fewer rounds to achieve the same accuracy.}
\vspace*{-3mm}
\label{tab:communication_round}
\resizebox{0.72\textwidth}{!}
{\begin{minipage}{\textwidth}
\begin{tabular}{@{}lcccccccc@{}}
\toprule
 & \multicolumn{4}{c}{CIFAR10 (66$\%$)} & \multicolumn{4}{c}{CIFAR100 (44$\%$)} \\ \cmidrule(lr){2-5}\cmidrule(l){6-9}
 & \multicolumn{2}{c}{$\alpha$=0.1} & \multicolumn{2}{c}{$\alpha$=0.5} & \multicolumn{2}{c}{$\alpha$=0.1} & \multicolumn{2}{c}{$\alpha$=1.0} \\ 
 \cmidrule(lr){2-3}\cmidrule(lr){4-5}\cmidrule(lr){6-7}\cmidrule(l){8-9}
 & VGG-11 & ResNet-8 & VGG-11 & ResNet-8 & VGG-11 & ResNet-8 & VGG-11 & ResNet-8 \\
 \midrule
 & No. rounds & No. rounds & No. rounds & No. rounds & No. rounds & No. rounds & No. rounds & No. rounds \\
FedAvg &\progressbar[linecolor=white, filledcolor=black, ticksheight=0.0, heighta=8pt, width=2.5em]{0.55}{\hspace{0.1cm}\raisebox{1.5pt}{$55(1.0\text{x})$}}
& \progressbar[linecolor=white, filledcolor=black, ticksheight=0.0, heighta=8pt, width=2.5em]{0.9}{\hspace{0.1cm}\raisebox{1.5pt}{$90(1.0\text{x})$}} &\progressbar[linecolor=white, filledcolor=black, ticksheight=0.0, heighta=8pt, width=2.5em]{0.15}{\hspace{0.1cm}\raisebox{1.5pt}{$15(1.0\text{x})$}}  &\progressbar[linecolor=white, filledcolor=black, ticksheight=0.0, heighta=8pt, width=2.5em]{0.15}{\hspace{0.1cm}\raisebox{1.5pt}{$15(1.0\text{x})$}}  &\progressbar[linecolor=white, filledcolor=black, ticksheight=0.0, heighta=8pt, width=2.5em]{1.0}{\hspace{0.1cm}\raisebox{1.5pt}{$100+(1.0\text{x})$}}  
&\progressbar[linecolor=white, filledcolor=black, ticksheight=0.0, heighta=8pt, width=2.5em]{1.0}{\hspace{0.1cm}\raisebox{1.5pt}{$100+(1.0\text{x})$}} &\progressbar[linecolor=white, filledcolor=black, ticksheight=0.0, heighta=8pt, width=2.5em]{0.8}{\hspace{0.1cm}\raisebox{1.5pt}{$80(1.0\text{x})$}}  &\progressbar[linecolor=white, filledcolor=black, ticksheight=0.0, heighta=8pt, width=2.5em]{0.56}{\hspace{0.1cm}\raisebox{1.5pt}{$56(1.0\text{x})$}} \\
FedProx &\progressbar[linecolor=white, filledcolor=black, ticksheight=0.0, heighta=8pt, width=2.5em]{0.52}{\hspace{0.1cm}\raisebox{1.5pt}{$52(1.1\text{x})$}}  &
\progressbar[linecolor=white, filledcolor=black, ticksheight=0.0, heighta=8pt, width=2.5em]{0.75}{\hspace{0.1cm}\raisebox{1.5pt}{$75(1.2\text{x})$}}&
\progressbar[linecolor=white, filledcolor=black, ticksheight=0.0, heighta=8pt, width=2.5em]{0.16}{\hspace{0.1cm}\raisebox{1.5pt}{$16(0.9\text{x})$}}  &
\progressbar[linecolor=white, filledcolor=black, ticksheight=0.0, heighta=8pt, width=2.5em]{0.2}{\hspace{0.1cm}\raisebox{1.5pt}{$20(0.8\text{x})$}}&
\progressbar[linecolor=white, filledcolor=black, ticksheight=0.0, heighta=8pt, width=2.5em]{1.0}{\hspace{0.1cm}\raisebox{1.5pt}{$100+(1.0\text{x})$}}  &
\progressbar[linecolor=white, filledcolor=black, ticksheight=0.0, heighta=8pt, width=2.5em]{1.0}{\hspace{0.1cm}\raisebox{1.5pt}{$100+(1.0\text{x})$}}&
\progressbar[linecolor=white, filledcolor=black, ticksheight=0.0, heighta=8pt, width=2.5em]{0.8}{\hspace{0.1cm}\raisebox{1.5pt}{$80(1.0\text{x})$}}&
\progressbar[linecolor=white, filledcolor=black, ticksheight=0.0, heighta=8pt, width=2.5em]{0.59}{\hspace{0.1cm}\raisebox{1.5pt}{$59(0.9\text{x})$}}  \\
SCAFFOLD & \progressbar[linecolor=white, filledcolor=black, ticksheight=0.0, heighta=8pt, width=2.5em]{0.39}{\hspace{0.1cm}\raisebox{1.5pt}{$39(1.4\text{x})$}} & \progressbar[linecolor=white, filledcolor=black, ticksheight=0.0, heighta=8pt, width=2.5em]{0.57}{\hspace{0.1cm}\raisebox{1.5pt}{$57(1.6\text{x})$}} &\progressbar[linecolor=white, filledcolor=black, ticksheight=0.0, heighta=8pt, width=2.5em]{0.14}{\hspace{0.1cm}\raisebox{1.5pt}{$14(1.0\text{x})$}}  & \progressbar[linecolor=white, filledcolor=black, ticksheight=0.0, heighta=8pt, width=2.5em]{0.09}{\hspace{0.1cm}\raisebox{1.5pt}{\hspace{4pt}$9(1.7\text{x})$}} &\progressbar[linecolor=white, filledcolor=black, ticksheight=0.0, heighta=8pt, width=2.5em]{0.8}{\hspace{0.1cm}\raisebox{1.5pt}{\hspace{7.5pt}$80(>1.3\text{x})$}}  &\progressbar[linecolor=white, filledcolor=black, ticksheight=0.0, heighta=8pt, width=2.5em]{0.61}{\hspace{0.1cm}\raisebox{1.5pt}{\hspace{4.3pt}$\textcolor{red}{\mathbf{61(>1.6\text{x})}}$}}  &\progressbar[linecolor=white, filledcolor=black, ticksheight=0.0, heighta=8pt, width=2.5em]{0.36}{\hspace{0.1cm}\raisebox{1.5pt}{$36(2.2\text{x})$}}  & \progressbar[linecolor=white, filledcolor=black, ticksheight=0.0, heighta=8pt, width=2.5em]{0.41}{\hspace{0.1cm}\raisebox{1.5pt}{$25(2.2\text{x})$}} \\
FedDyn & \progressbar[linecolor=white, filledcolor=black, ticksheight=0.0, heighta=8pt, width=2.5em]{0.27}{\hspace{0.1cm}\raisebox{1.5pt}{$\textcolor{red}{\textbf{27(2.0\text{x})}}$}} & \progressbar[linecolor=white, filledcolor=black, ticksheight=0.0, heighta=8pt, width=2.5em]{0.67}{\hspace{0.1cm}\raisebox{1.5pt}{$67(1.3\text{x})$}} &\progressbar[linecolor=white, filledcolor=black, ticksheight=0.0, heighta=8pt, width=2.5em]{0.15}{\hspace{0.1cm}\raisebox{1.5pt}{$15(1.0\text{x})$}}  & \progressbar[linecolor=white, filledcolor=black, ticksheight=0.0, heighta=8pt, width=2.5em]{0.34}{\hspace{-0.05cm}\raisebox{1.5pt}{\hspace{4pt}$34(0.4\text{x})$}} & \progressbar[linecolor=white, filledcolor=black, ticksheight=0.0, heighta=8pt, width=2.5em]{0.8}{\hspace{0.35cm}\raisebox{1.5pt}{\hspace{7.5pt}$80+ (-)$}}  &\progressbar[linecolor=white, filledcolor=black, ticksheight=0.0, heighta=8pt, width=2.5em]{0.8}{\hspace{0.35cm}\raisebox{1.5pt}{\hspace{7.5pt}$80+ (-)$}}  &\progressbar[linecolor=white, filledcolor=black, ticksheight=0.0, heighta=8pt, width=2.5em]{0.24}{\hspace{0.1cm}\raisebox{1.5pt}{$24(3.3\text{x})$}}  & \progressbar[linecolor=white, filledcolor=black, ticksheight=0.0, heighta=8pt, width=2.5em]{0.51}{\hspace{0.1cm}\raisebox{1.5pt}{$51(1.1\text{x})$}} \\
Ours & \progressbar[linecolor=white, filledcolor=black, ticksheight=0.0, heighta=8pt, width=2.5em]{0.27}{\hspace{0.1cm}\raisebox{1.5pt}{$\textcolor{red}{\textbf{27(2.0\text{x})}}$}} & \progressbar[linecolor=white, filledcolor=black, ticksheight=0.0, heighta=8pt, width=2.5em]{0.5}{\hspace{0.1cm}\raisebox{1.5pt}{$\textcolor{red}{\textbf{50(1.8\text{x})}}$}} & \progressbar[linecolor=white, filledcolor=black, ticksheight=0.0, heighta=8pt, width=2.5em]{0.09}{\hspace{0.1cm}\raisebox{1.5pt}{\hspace{4pt}$\textcolor{red}{\textbf{9(1.6\text{x})}}$}} &\progressbar[linecolor=white, filledcolor=black, ticksheight=0.0, heighta=8pt, width=2.5em]{0.05}{\hspace{0.1cm}\raisebox{1.5pt}{\hspace{4pt}$\textcolor{red}{\textbf{5(3.0\text{x})}}$}}  & \progressbar[linecolor=white, filledcolor=black, ticksheight=0.0, heighta=8pt, width=2.5em]{0.37}{\hspace{0.1cm}\raisebox{1.5pt}{\hspace{4.3pt}$\textcolor{red}{\mathbf{37(>2.7\text{x})}}$}} &\progressbar[linecolor=white, filledcolor=black, ticksheight=0.0, heighta=8pt, width=2.5em]{0.66}{\hspace{0.1cm}\raisebox{1.5pt}{\hspace{7.5pt}$66(>1.5\text{x})$}}  &\progressbar[linecolor=white, filledcolor=black, ticksheight=0.0, heighta=8pt, width=2.5em]{0.12}{\hspace{0.1cm}\raisebox{1.5pt}{$\textcolor{red}{\textbf{12(6.7\text{x})}}$}}  & \progressbar[linecolor=white, filledcolor=black, ticksheight=0.0, heighta=8pt, width=2.5em]{0.15}{\hspace{0.1cm}\raisebox{1.5pt}{$\textcolor{red}{\textbf{15(3.7\text{x}})}$}} \\ \bottomrule
\end{tabular}
\end{minipage}
}
\end{table*}

We demonstrate the performance of our proposed approach in the FL setup with data heterogeneity in this section. We compare our method with the existing state-of-the-art algorithms on various datasets and deep neural networks. For the baseline approaches, we finetune the hyperparameters and only show the best performance we get. Our main findings are 1) we are more communication efficient than the baseline approaches, 2) conformal prediction is an effective tool to improve FL performance in high data heterogeneity scenarios, and 3) the benefit of the trade-off between diversity and uniformity for using deep neural networks in FL.

\subsection{Communication efficiency and accuracy}
We first report the number of rounds required to achieve a certain level of Top 1\% accuracy ($66\%$ for CIFAR10 and $44\%$ for CIFAR100) in Table.~\ref{tab:communication_round}. An algorithm is more communication efficient if it requires less number of rounds to achieve the same accuracy and/or if it transmits fewer number of parameters between the clients and server. Compared to the baseline approaches, we require much fewer number of rounds for almost all types of data heterogeneity and models. We can achieve a speedup between $1.5$ and $6.7$ than FedAvg. We also observe that ResNet-8 tends to converge slower than VGG-11, which may be due to the aggregation of the Batch Normalization layers that are discrepant between the local data distribution~\cite{DBLP:conf/nips/LinKSJ20}. 

We next compare the top-1 accuracy between centralized learning and federated learning algorithms. For the centralized learning experiment, we tune the learning rate from $\{0.01, 0.05, 0.1\}$ and report the best test accuracy based on the validation dataset. 
We train the model for 800 epochs which is as same as the total number of epochs in the federated learning algorithms (80 communication rounds x 10 local epochs). The results are shown in Table.~\ref{tab:top_1_accuracy}. We also show the number of copies of the parameters that need to be transmitted between the server and clients (\eg 2x means we communicate $\boldsymbol{x}$ and $\boldsymbol{y}_{i}$)

Table.~\ref{tab:top_1_accuracy} shows that our approach achieves a much better Top-1 accuracy compared to FedAvg with transmitting a similar or slightly bigger number of parameters between the server and client per round.  Our method also achieves slightly better accuracy than centralized learning when the data is less heterogeneous (\eg $\alpha=0.5$ for CIFAR10 and $\alpha=1.0$ for CIFAR100).

\begin{table*}[ht!]
\caption{The top-1 accuracy (\%) after running 80 communication rounds using different methods on CIFAR10 and CIFAR100, together with the number of communicated parameters between the client and the server. We train the centralised model for 800 epochs (= 80 rounds x 10 local epochs in FL). Higher accuracy is better, and we highlight the best accuracy in red colour.}
\label{tab:top_1_accuracy}
\vspace*{-3mm}
\resizebox{0.9\textwidth}{!}
{\begin{minipage}{\textwidth}
\begin{tabular}{@{}lcccccccccc@{}}
\toprule
 & \multicolumn{5}{c}{VGG-11} & \multicolumn{5}{c}{ResNet-8} \\ 
 \cmidrule(lr){2-6}\cmidrule(l){7-11}
 & \multicolumn{2}{c}{CIFAR10} & \multicolumn{2}{c}{CIFAR100} & server$\Leftrightarrow$client & \multicolumn{2}{c}{CIFAR10} & \multicolumn{2}{c}{CIFAR100} & server$\Leftrightarrow$client \\ 
 \cmidrule(lr){2-3}\cmidrule(lr){4-5}
 \cmidrule(lr){7-8}\cmidrule(lr){9-10}
 & $\alpha=0.1$ & $\alpha=0.5$ & $\alpha=0.1$ & $\alpha=1.0$ & & $\alpha=0.1$ & $\alpha=0.5$ & $\alpha=0.1$ & $\alpha=1.0$ & \\ 
 \midrule
Centralised & \multicolumn{2}{c}{\textcolor{red}{\textbf{87.5}}} & \multicolumn{2}{c}{56.3} & - & \multicolumn{2}{c}{83.4} & \multicolumn{2}{c}{\textcolor{red}{\textbf{56.8}}} & - \\ 
FedAvg & 69.3 & 80.9  & 34.3 & 45.0 & 2x & 64.9  & 79.1  & 38.8  & 47.0 & 2x\\
Fedprox & 72.1 & 80.4 &  35.0  & 43.2 &2x & 66.1 & 77.9 & 42.0 & 47.2  & 2x \\
SCAFFOLD & 74.1 & 83.5 &  43.4 & 50.6 &4x &66.6  & 80.3 & 43.8 & 52.3  & 4x \\
FedDyn & 77.4 & 80.1 &43.8 &45.2 & 2x & 63.8 &72.9 &36.4 &48.1  & 2x \\
Ours & 78.2 & 84.9 & 43.5 & \textcolor{red}{\textbf{58.0}} & 2.1x & 69.3 & \textcolor{red}{\textbf{83.6}} & 43.5 & 52.3 & 2.02x \\ \bottomrule
\end{tabular}
\end{minipage}}
\end{table*}

\subsection{Conformal prediction}
When the data heterogeneity is high across clients, it is difficult for a federated learning algorithm to match the centralized learning performance~\cite{DBLP:journals/corr/abs-2106-05001}. Therefore, we demonstrate the benefit of using simple post-processing conformal prediction to improve the model performance.

We examine the relationship between the empirical coverage and the average predictive set size for the server model after 80 communication rounds for each federated learning algorithm. The empirical coverage is the percentage of the data samples where the correct prediction is in the predictive set, and the average predictive size is the average of the length of the predictive sets over all the test images~\cite{angelopoulos2021uncertainty}. 
See Appendix.~\textcolor{red}{C} for more information about conformal prediction setup and results.

The results for when $\alpha=0.1$ for both datasets and architectures are shown in Fig.~\ref{fig:conformal_prediction}. We show that by slightly increasing the predictive set size, we can achieve a similar accuracy as the centralized performance. Besides, our approach tends to surpass the centralized top-1 performance similar to or faster than other approaches. 
In sensitive use cases such as chemical threat detection, conformal prediction is a valuable tool to achieve certified accuracy at the cost of a slightly larger predictive set size.

\begin{figure}[ht!]
    \centering
    \includegraphics[width=.48\textwidth]{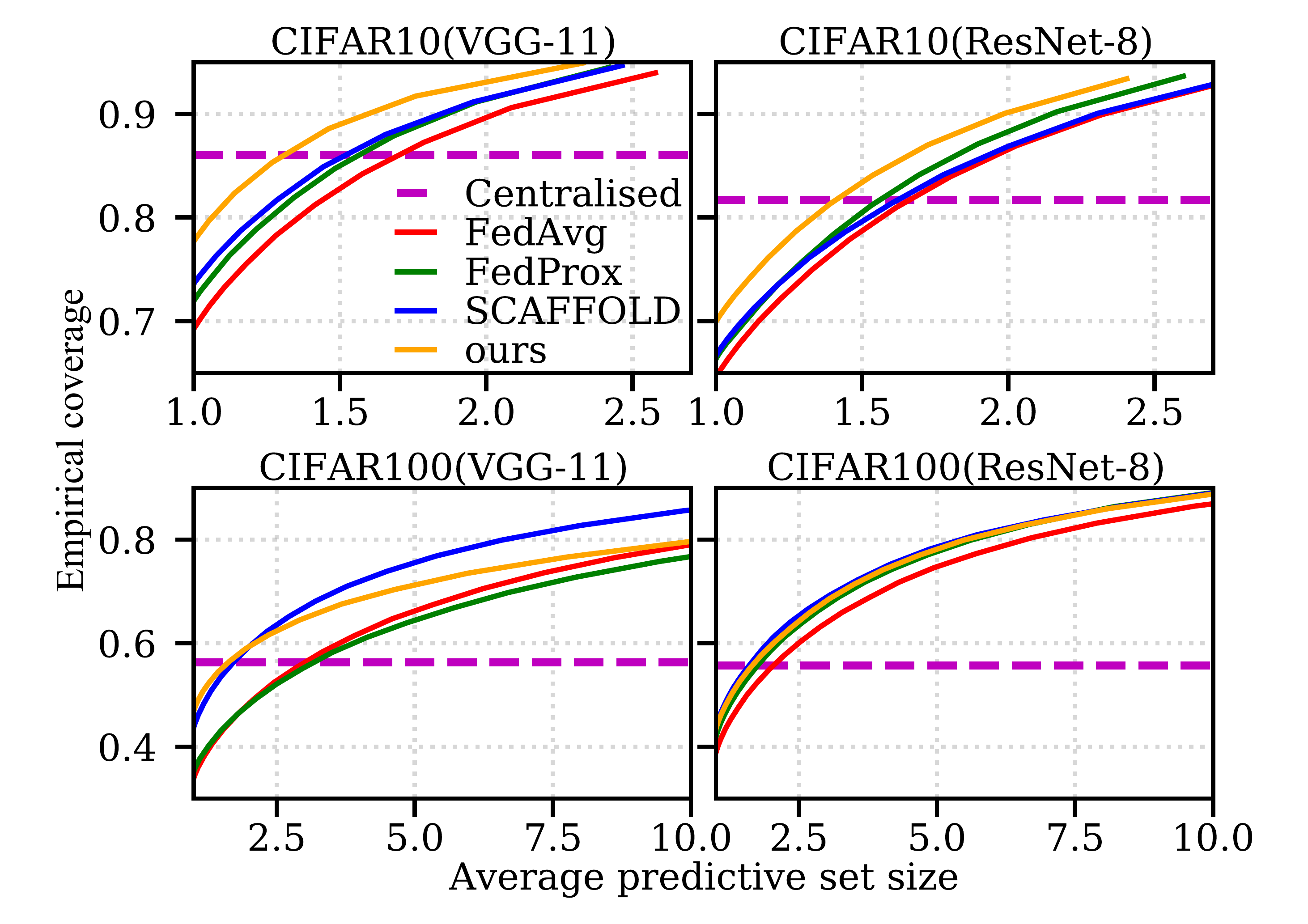}
    \caption{Relation between average predictive size and empirical coverage when $\alpha=0.1$. By slightly increasing the predictive set size, we can achieve a similar performance as the centralised model (Top-1 accuracy) even if the data are heterogeneously distributed across clients. Our method is similar to or faster than other approaches to surpass the centralised Top-1 accuracy. } 
    \label{fig:conformal_prediction}
\end{figure}
\vspace*{-4mm}

\subsection{Diversity and uniformity}

\begin{figure}[ht!]
    \centering
    \includegraphics{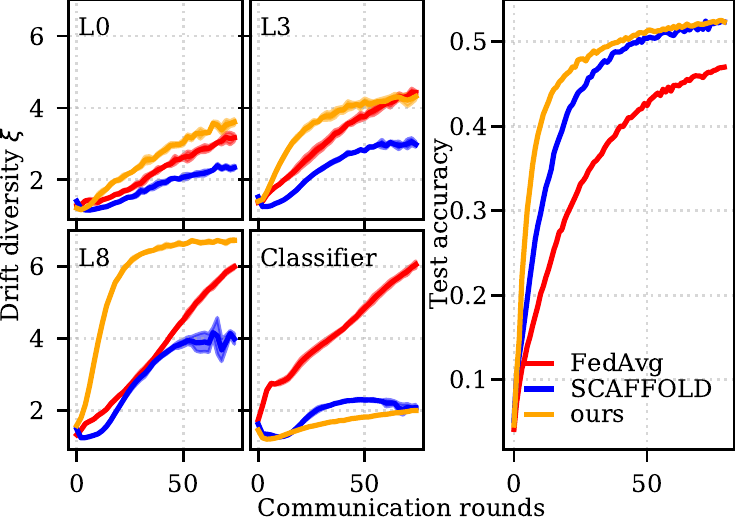}
    \caption{Drift diversity and learning curve for ResNet-8 on CIFAR100 with $\alpha=1.0$. Compared to FedAvg, SCAFFOLD and our method can both improve the agreement between the classifiers. Compared to SCAFFOLD, our method results in a higher gradient diversity at the early stage of the communication, which tends to boost the learning speed as the curvature of the drift diversity seem to match the learning curve.}
    \label{fig:diversity}
\end{figure}

We have shown that our algorithm achieves a better speedup and performance against the existing approaches with only lightweight modifications to FedAvg. We next investigate what factors lead to better accuracy. Specifically, we calculate the drift diversity  $\xi$ across clients after each communication round using Eq.~\ref{eq:drift_diversity} and average $\xi$ across three runs. We show the result of using ResNet-8 and CIFAR100 with $\alpha=1.0$ in Fig.~\ref{fig:diversity}.

Fig.~\ref{fig:diversity} shows the drift diversity for different layers in ResNet-8 and the testing accuracy along the communication rounds. We observe that classifiers have the highest diversity in FedAvg against other layers and methods. SCAFFOLD, which applies control variate on the entire model, can effectively reduce the disagreement of the directions and scales of the averaged gradient across clients. Our proposed algorithm that performs variance reduction only on the classifiers can reduce the diversity of the classifiers even further but increase the diversity of the feature extraction layers. This high diversity tends to boost the learning speed as the curvature of the diversity movement (Fig.~\ref{fig:diversity} left) seems to match the learning curve (Fig.~\ref{fig:diversity} right). Based on this observation, we hypothesize that this diversity along the feature extractor and the uniformity of the classifier is the main reason for our better speedup.

To test this hypothesis, we perform an experiment where we use variance reduction starting from different layers of a neural network. If the starting position of the use of variance reduction influences the learning speed, it indicates where in a neural network we need more diversity and where we need more uniformity. We here show the result of using VGG-11 on CIFAR100 with $\alpha=1.0$ as there are more layers in VGG-11. The result is shown in Fig.~\ref{fig:influence_of_svr} where $\texttt{SVR:}16\rightarrow20$ is corresponding to our approach and $\texttt{SVR:}0\rightarrow20$ is corresponding to SCAFFOLD that applies variance reduction for the entire model. Results for using ResNet-8 is shown in Appendix.~\textcolor{red}{C}.

\begin{figure}[ht!]
    \centering
    \includegraphics{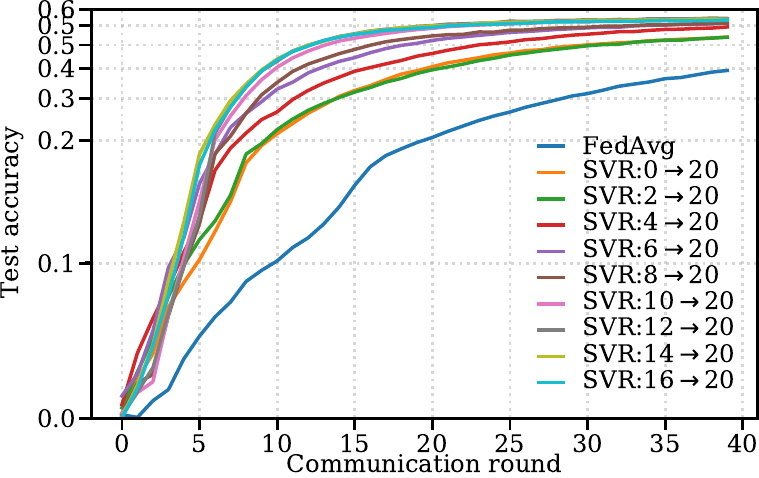}
    \caption{Influence of using stochastic variance reduction(SVR) on layers that start from different positions in a neural network on the learning speed. \texttt{SVR:0$\rightarrow$20} applies variance reduction on the entire model (SCAFFOLD). \texttt{SVR:16$\rightarrow$20} applies variance reduction from the layer index 16 to 20 (ours). The later we apply variance reduction, the better performance speedup we obtain. However, no variance reduction (FedAvg) performs the worst here.}
    \label{fig:influence_of_svr}
\end{figure}

We see from Fig.~\ref{fig:influence_of_svr} that the deeper in a neural network we apply variance reduction, the better learning speedup we can obtain. There is no clear performance difference between where to activate the variance reduction when the layer index is over 10. However, applying no variance reduction (FedAvg) achieves by far the worst performance. We believe that these experimental results indicate that in a distributed optimization framework, to boost the learning speed of an over-parameterized model, we need some levels of diversity in the middle and early layers for learning richer feature representation and some degrees of uniformity in the classifiers for making a less biased decision. 

\section{Conclusion}
\label{sec:conclusion}

In this work, we studied stochastic gradient descent learning for deep neural network classifiers in a federated learning setting, where each client updates its local model using stochastic gradient descent on local data. A central model is periodically updated (by averaging local model parameters) and broadcast to the clients under a communication bandwidth constraint. When data is homogeneous across clients, this procedure is comparable to centralized learning in terms of efficiency; however, when data is heterogeneous, learning is impeded. Our hypothesis for the primary reason for this is that when the local models are out of alignment, updating the central model by averaging is ineffective and sometimes even destructive. 
 
Examining the diversity across clients of their local model updates and their learned feature representations, we found that the misalignment between models is much stronger in the last few neural network layers than in the rest of the network. This finding inspired us to experiment with aligning the local models using a partial variance reduction technique applied only on the last layers, which we named FedPVR. We found that this led to a substantial improvement in convergence speed compared to the competing federated learning methods. In some cases, our method even outperformed centralized learning. We derived a bound on the convergence rate of our proposed method, which matches the rates for SGD when the gradient diversity across clients is sufficiently low. Compared with FedAvg, the communication cost of our method is only marginally worse, as it requires transmitting control variates for the last layers.
 
We believe our FedPVR algorithm strikes a good balance between simplicity and efficiency, requiring only a minor modification to the established FedAvg method; however, in our further research, we plan to pursue more optimal methods for aligning and guiding the local learning algorithms, \eg using adaptive procedures. Furthermore, the degree of over-parameterization in the neural network layers (\eg feature extraction vs bottlenecks) may also play an important role, which we would like to understand better.

\section*{Acknowledgements}
The first three authors thank for financial support from the European Union’s Horizon 2020 research and innovation programme under grant agreement no. 883390 (H2020-SU-SECU-2019 SERSing Project). BL thanks for the financial support from the Otto Mønsted Foundation. 
\newpage

{\small
\bibliographystyle{ieee_fullname}
\bibliography{egbib2}

\begin{thebibliography}{10}\itemsep=-1pt

\bibitem{DBLP:journals/corr/abs-2111-04263}
Durmus Alp~Emre Acar, Yue Zhao, Ramon~Matas Navarro, Matthew Mattina, Paul~N.
  Whatmough, and Venkatesh Saligrama.
\newblock Federated learning based on dynamic regularization.
\newblock In {\em 9th International Conference on Learning Representations,
  {ICLR} 2021, Virtual Event, Austria, May 3-7, 2021}. OpenReview.net, 2021.

\bibitem{DBLP:journals/corr/Alistarh0TV16}
Dan Alistarh, Jerry Li, Ryota Tomioka, and Milan Vojnovic.
\newblock {QSGD:} randomized quantization for communication-optimal stochastic
  gradient descent.
\newblock {\em CoRR}, abs/1610.02132, 2016.

\bibitem{angelopoulos2021uncertainty}
Anastasios~Nikolas Angelopoulos, Stephen Bates, Michael Jordan, and Jitendra
  Malik.
\newblock Uncertainty sets for image classifiers using conformal prediction.
\newblock In {\em International Conference on Learning Representations}, 2021.

\bibitem{DBLP:journals/corr/abs-2002-05709}
Ting Chen, Simon Kornblith, Mohammad Norouzi, and Geoffrey~E. Hinton.
\newblock A simple framework for contrastive learning of visual
  representations.
\newblock In {\em Proceedings of the 37th International Conference on Machine
  Learning, {ICML} 2020, 13-18 July 2020, Virtual Event}, volume 119 of {\em
  Proceedings of Machine Learning Research}, pages 1597--1607. {PMLR}, 2020.

\bibitem{https://doi.org/10.48550/arxiv.2205.13692}
Liam Collins, Hamed Hassani, Aryan Mokhtari, and Sanjay Shakkottai.
\newblock Fedavg with fine tuning: Local updates lead to representation
  learning.
\newblock In Alice~H. Oh, Alekh Agarwal, Danielle Belgrave, and Kyunghyun Cho,
  editors, {\em Advances in Neural Information Processing Systems}, 2022.

\bibitem{DBLP:journals/corr/DefazioBL14}
Aaron Defazio, Francis~R. Bach, and Simon Lacoste{-}Julien.
\newblock {SAGA:} {A} fast incremental gradient method with support for
  non-strongly convex composite objectives.
\newblock In Zoubin Ghahramani, Max Welling, Corinna Cortes, Neil~D. Lawrence,
  and Kilian~Q. Weinberger, editors, {\em Advances in Neural Information
  Processing Systems 27: Annual Conference on Neural Information Processing
  Systems 2014, December 8-13 2014, Montreal, Quebec, Canada}, pages
  1646--1654, 2014.

\bibitem{DBLP:journals/corr/abs-1812-04529}
Aaron Defazio and L{\'{e}}on Bottou.
\newblock On the ineffectiveness of variance reduced optimization for deep
  learning.
\newblock In Hanna~M. Wallach, Hugo Larochelle, Alina Beygelzimer, Florence
  d'Alch{\'{e}}{-}Buc, Emily~B. Fox, and Roman Garnett, editors, {\em Advances
  in Neural Information Processing Systems 32: Annual Conference on Neural
  Information Processing Systems 2019, NeurIPS 2019, December 8-14, 2019,
  Vancouver, BC, Canada}, pages 1753--1763, 2019.

\bibitem{DBLP:journals/corr/abs-1911-08250}
Aritra Dutta, El~Houcine Bergou, Ahmed~M. Abdelmoniem, Chen{-}Yu Ho,
  Atal~Narayan Sahu, Marco Canini, and Panos Kalnis.
\newblock On the discrepancy between the theoretical analysis and practical
  implementations of compressed communication for distributed deep learning.
\newblock In {\em The Thirty-Fourth {AAAI} Conference on Artificial
  Intelligence, {AAAI} 2020, The Thirty-Second Innovative Applications of
  Artificial Intelligence Conference, {IAAI} 2020, The Tenth {AAAI} Symposium
  on Educational Advances in Artificial Intelligence, {EAAI} 2020, New York,
  NY, USA, February 7-12, 2020}, pages 3817--3824. {AAAI} Press, 2020.

\bibitem{DBLP:conf/cvpr/GaoFLC0022}
Liang Gao, Huazhu Fu, Li Li, Yingwen Chen, Ming Xu, and Cheng{-}Zhong Xu.
\newblock Feddc: Federated learning with non-iid data via local drift
  decoupling and correction.
\newblock In {\em {IEEE/CVF} Conference on Computer Vision and Pattern
  Recognition, {CVPR} 2022, New Orleans, LA, USA, June 18-24, 2022}, pages
  10102--10111. {IEEE}, 2022.

\bibitem{Halgamuge2009AnEO}
Malka~N. Halgamuge, Moshe Zukerman, Kotagiri Ramamohanarao, and Hai~Le Vu.
\newblock An estimation of sensor energy consumption.
\newblock {\em Progress in Electromagnetics Research B}, 12:259--295, 2009.

\bibitem{DBLP:journals/corr/HeZRS15}
Kaiming He, Xiangyu Zhang, Shaoqing Ren, and Jian Sun.
\newblock Deep residual learning for image recognition.
\newblock In {\em 2016 {IEEE} Conference on Computer Vision and Pattern
  Recognition, {CVPR} 2016, Las Vegas, NV, USA, June 27-30, 2016}, pages
  770--778. {IEEE} Computer Society, 2016.

\bibitem{Johnson2013AcceleratingSG}
Rie Johnson and Tong Zhang.
\newblock Accelerating stochastic gradient descent using predictive variance
  reduction.
\newblock In {\em NIPS}, 2013.

\bibitem{DBLP:journals/corr/abs-1912-04977}
Peter Kairouz, H.~Brendan McMahan, Brendan Avent, Aur{\'{e}}lien Bellet, Mehdi
  Bennis, Arjun~Nitin Bhagoji, Kallista~A. Bonawitz, Zachary Charles, Graham
  Cormode, Rachel Cummings, Rafael G.~L. D'Oliveira, Salim~El Rouayheb, David
  Evans, Josh Gardner, Zachary Garrett, Adri{\`{a}} Gasc{\'{o}}n, Badih Ghazi,
  Phillip~B. Gibbons, Marco Gruteser, Za{\"{\i}}d Harchaoui, Chaoyang He, Lie
  He, Zhouyuan Huo, Ben Hutchinson, Justin Hsu, Martin Jaggi, Tara Javidi,
  Gauri Joshi, Mikhail Khodak, Jakub Kone{\v{c}}n{\'y}, Aleksandra Korolova,
  Farinaz Koushanfar, Sanmi Koyejo, Tancr{\`{e}}de Lepoint, Yang Liu, Prateek
  Mittal, Mehryar Mohri, Richard Nock, Ayfer {\"{O}}zg{\"{u}}r, Rasmus Pagh,
  Mariana Raykova, Hang Qi, Daniel Ramage, Ramesh Raskar, Dawn Song, Weikang
  Song, Sebastian~U. Stich, Ziteng Sun, Ananda~Theertha Suresh, Florian
  Tram{\`{e}}r, Praneeth Vepakomma, Jianyu Wang, Li Xiong, Zheng Xu, Qiang
  Yang, Felix~X. Yu, Han Yu, and Sen Zhao.
\newblock Advances and open problems in federated learning.
\newblock {\em CoRR}, abs/1912.04977, 2019.

\bibitem{DBLP:journals/corr/abs-1910-06378}
Sai~Praneeth Karimireddy, Satyen Kale, Mehryar Mohri, Sashank Reddi, Sebastian
  Stich, and Ananda~Theertha Suresh.
\newblock {SCAFFOLD}: Stochastic controlled averaging for federated learning.
\newblock In Hal~Daumé III and Aarti Singh, editors, {\em Proceedings of the
  37th International Conference on Machine Learning}, volume 119 of {\em
  Proceedings of Machine Learning Research}, pages 5132--5143. PMLR, 13--18 Jul
  2020.

\bibitem{DBLP:journals/corr/abs-1909-04746}
Ahmed Khaled, Konstantin Mishchenko, and Peter Richt{\'{a}}rik.
\newblock Better communication complexity for local {SGD}.
\newblock {\em CoRR}, abs/1909.04746, 2019.

\bibitem{DBLP:journals/corr/abs-2003-10422}
Anastasia Koloskova, Nicolas Loizou, Sadra Boreiri, Martin Jaggi, and Sebastian
  Stich.
\newblock A unified theory of decentralized {SGD} with changing topology and
  local updates.
\newblock In Hal~Daumé III and Aarti Singh, editors, {\em Proceedings of the
  37th International Conference on Machine Learning}, volume 119 of {\em
  Proceedings of Machine Learning Research}, pages 5381--5393. PMLR, 13--18 Jul
  2020.

\bibitem{DBLP:journals/corr/KonecnyMRR16}
Jakub Kone{\v{c}}n{\'y}, H.~Brendan McMahan, Daniel Ramage, and Peter
  Richt{\'{a}}rik.
\newblock Federated optimization: Distributed machine learning for on-device
  intelligence.
\newblock {\em CoRR}, abs/1610.02527, 2016.

\bibitem{DBLP:journals/corr/abs-1905-00414}
Simon Kornblith, Mohammad Norouzi, Honglak Lee, and Geoffrey~E. Hinton.
\newblock Similarity of neural network representations revisited.
\newblock In Kamalika Chaudhuri and Ruslan Salakhutdinov, editors, {\em
  Proceedings of the 36th International Conference on Machine Learning, {ICML}
  2019, 9-15 June 2019, Long Beach, California, {USA}}, volume~97 of {\em
  Proceedings of Machine Learning Research}, pages 3519--3529. {PMLR}, 2019.

\bibitem{Krizhevsky2009LearningML}
Alex Krizhevsky.
\newblock Learning multiple layers of features from tiny images.
\newblock 2009.

\bibitem{DBLP:journals/corr/abs-2103-16257}
Qinbin Li, Bingsheng He, and Dawn Song.
\newblock Model-contrastive federated learning.
\newblock In {\em Proceedings of the IEEE/CVF Conference on Computer Vision and
  Pattern Recognition}, 2021.

\bibitem{https://doi.org/10.48550/arxiv.2002.11364}
Zhize Li, Dmitry Kovalev, Xun Qian, and Peter Richt{\'{a}}rik.
\newblock Acceleration for compressed gradient descent in distributed and
  federated optimization.
\newblock In {\em Proceedings of the 37th International Conference on Machine
  Learning, {ICML} 2020, 13-18 July 2020, Virtual Event}, volume 119 of {\em
  Proceedings of Machine Learning Research}, pages 5895--5904. {PMLR}, 2020.

\bibitem{DBLP:conf/nips/LinKSJ20}
Tao Lin, Lingjing Kong, Sebastian~U. Stich, and Martin Jaggi.
\newblock Ensemble distillation for robust model fusion in federated learning.
\newblock In Hugo Larochelle, Marc'Aurelio Ranzato, Raia Hadsell,
  Maria{-}Florina Balcan, and Hsuan{-}Tien Lin, editors, {\em Advances in
  Neural Information Processing Systems 33: Annual Conference on Neural
  Information Processing Systems 2020, NeurIPS 2020, December 6-12, 2020,
  virtual}, 2020.

\bibitem{DBLP:journals/corr/LoshchilovH16a}
Ilya Loshchilov and Frank Hutter.
\newblock {SGDR:} stochastic gradient descent with warm restarts.
\newblock In {\em 5th International Conference on Learning Representations,
  {ICLR} 2017, Toulon, France, April 24-26, 2017, Conference Track
  Proceedings}. OpenReview.net, 2017.

\bibitem{DBLP:journals/corr/abs-2106-05001}
Mi Luo, Fei Chen, D. Hu, Yifan Zhang, Jian Liang, and Jiashi Feng.
\newblock No fear of heterogeneity: Classifier calibration for federated
  learning with non-iid data.
\newblock In {\em Neural Information Processing Systems}, 2021.

\bibitem{DBLP:journals/corr/McMahanMRA16}
H.~Brendan McMahan, Eider Moore, Daniel Ramage, and Blaise~Ag{\"{u}}era y
  Arcas.
\newblock Federated learning of deep networks using model averaging.
\newblock {\em CoRR}, abs/1602.05629, 2016.

\bibitem{DBLP:journals/corr/abs-1901-09269}
Konstantin Mishchenko, Eduard Gorbunov, Martin Tak{\'{a}}c, and Peter
  Richt{\'{a}}rik.
\newblock Distributed learning with compressed gradient differences.
\newblock {\em CoRR}, abs/1901.09269, 2019.

\bibitem{mishchenko2022proxskip}
Konstantin Mishchenko, Grigory Malinovsky, Sebastian Stich, and Peter
  Richt{\'a}rik.
\newblock {ProxSkip}: Yes! local gradient steps provably lead to communication
  acceleration! finally!
\newblock {\em International Conference on Machine Learning (ICML)}, 2022.

\bibitem{DBLP:journals/corr/abs-2010-15327}
Thao Nguyen, Maithra Raghu, and Simon Kornblith.
\newblock Do wide and deep networks learn the same things? uncovering how
  neural network representations vary with width and depth.
\newblock In {\em 9th International Conference on Learning Representations,
  {ICLR} 2021, Virtual Event, Austria, May 3-7, 2021}. OpenReview.net, 2021.

\bibitem{DBLP:journals/corr/abs-2106-06042}
Jaehoon Oh, Sangmook Kim, and Se{-}Young Yun.
\newblock Fedbabu: Towards enhanced representation for federated image
  classification.
\newblock {\em CoRR}, abs/2106.06042, 2021.

\bibitem{10.5555/3495724.3496026}
Yaniv Romano, Matteo Sesia, and Emmanuel~J. Cand\`{e}s.
\newblock Classification with valid and adaptive coverage.
\newblock In {\em Proceedings of the 34th International Conference on Neural
  Information Processing Systems}, NIPS'20, Red Hook, NY, USA, 2020. Curran
  Associates Inc.

\bibitem{DBLP:journals/corr/abs-1812-06127}
Anit~Kumar Sahu, Tian Li, Maziar Sanjabi, Manzil Zaheer, Ameet Talwalkar, and
  Virginia Smith.
\newblock On the convergence of federated optimization in heterogeneous
  networks.
\newblock {\em CoRR}, abs/1812.06127, 2018.

\bibitem{DBLP:journals/corr/ShamirS013}
Ohad Shamir, Nathan Srebro, and Tong Zhang.
\newblock Communication-efficient distributed optimization using an approximate
  newton-type method.
\newblock In {\em Proceedings of the 31th International Conference on Machine
  Learning, {ICML} 2014, Beijing, China, 21-26 June 2014}, volume~32 of {\em
  {JMLR} Workshop and Conference Proceedings}, pages 1000--1008. JMLR.org,
  2014.

\bibitem{Sheller2020}
Micah~J. Sheller, Brandon Edwards, G.~Anthony Reina, Jason Martin, Sarthak
  Pati, Aikaterini Kotrotsou, Mikhail Milchenko, Weilin Xu, Daniel Marcus,
  Rivka~R. Colen, and Spyridon Bakas.
\newblock Federated learning in medicine: facilitating multi-institutional
  collaborations without sharing patient data.
\newblock {\em Scientific Reports}, 10(1):12598, Jul 2020.

\bibitem{Simonyan15}
Karen Simonyan and Andrew Zisserman.
\newblock Very deep convolutional networks for large-scale image recognition.
\newblock In {\em International Conference on Learning Representations}, 2015.

\bibitem{DBLP:journals/corr/abs-1907-04232}
Sebastian~U. Stich.
\newblock Unified optimal analysis of the (stochastic) gradient method.
\newblock {\em CoRR}, abs/1907.04232, 2019.

\bibitem{StichCJ18sparseSGD}
Sebastian~U. Stich, Jean-Baptiste Cordonnier, and Martin Jaggi.
\newblock Sparsified {SGD} with memory.
\newblock In {\em Advances in Neural Information Processing Systems 31
  (NeurIPS)}, pages 4452--4463. Curran Associates, Inc., 2018.

\bibitem{DBLP:journals/corr/abs-1909-05350}
Sebastian~U. Stich and Sai~Praneeth Karimireddy.
\newblock The error-feedback framework: Better rates for sgd with delayed
  gradients and compressed updates.
\newblock {\em J. Mach. Learn. Res.}, 21(1), jun 2022.

\bibitem{DBLP:journals/corr/SzegedyZSBEGF13}
Christian Szegedy, Wojciech Zaremba, Ilya Sutskever, Joan Bruna, Dumitru Erhan,
  Ian~J. Goodfellow, and Rob Fergus.
\newblock Intriguing properties of neural networks.
\newblock In Yoshua Bengio and Yann LeCun, editors, {\em 2nd International
  Conference on Learning Representations, {ICLR} 2014, Banff, AB, Canada, April
  14-16, 2014, Conference Track Proceedings}, 2014.

\bibitem{Varno2022MinimizingCD}
F Varno, M Saghayi, L Rafiee, S Gupta, S Matwin, and M Havaei.
\newblock Minimizing client drift in federated learning via adaptive bias
  estimation.
\newblock {\em ArXiv}, abs/2204.13170, 2022.

\bibitem{DBLP:journals/corr/abs-2107-06917}
Jianyu Wang, Zachary Charles, Zheng Xu, Gauri Joshi, H.~Brendan McMahan,
  Blaise~Ag{\"{u}}era y Arcas, Maruan Al{-}Shedivat, Galen Andrew, Salman
  Avestimehr, Katharine Daly, Deepesh Data, Suhas~N. Diggavi, Hubert Eichner,
  Advait Gadhikar, Zachary Garrett, Antonious~M. Girgis, Filip Hanzely, Andrew
  Hard, Chaoyang He, Samuel Horv{\'{a}}th, Zhouyuan Huo, Alex Ingerman, Martin
  Jaggi, Tara Javidi, Peter Kairouz, Satyen Kale, Sai~Praneeth Karimireddy,
  Jakub Kone{\v{c}}n{\'y}, Sanmi Koyejo, Tian Li, Luyang Liu, Mehryar Mohri,
  Hang Qi, Sashank~J. Reddi, Peter Richt{\'{a}}rik, Karan Singhal, Virginia
  Smith, Mahdi Soltanolkotabi, Weikang Song, Ananda~Theertha Suresh,
  Sebastian~U. Stich, Ameet Talwalkar, Hongyi Wang, Blake~E. Woodworth,
  Shanshan Wu, Felix~X. Yu, Honglin Yuan, Manzil Zaheer, Mi Zhang, Tong Zhang,
  Chunxiang Zheng, Chen Zhu, and Wennan Zhu.
\newblock A field guide to federated optimization.
\newblock {\em CoRR}, abs/2107.06917, 2021.

\bibitem{https://doi.org/10.48550/arxiv.2207.06343}
Yaodong Yu, Alexander Wei, Sai~Praneeth Karimireddy, Yi Ma, and Michael Jordan.
\newblock {TCT}: Convexifying federated learning using bootstrapped neural
  tangent kernels.
\newblock In Alice~H. Oh, Alekh Agarwal, Danielle Belgrave, and Kyunghyun Cho,
  editors, {\em Advances in Neural Information Processing Systems}, 2022.

\bibitem{DBLP:journals/corr/abs-2108-04755}
Haoyu Zhao, Zhize Li, and Peter Richt{\'{a}}rik.
\newblock Fedpage: {A} fast local stochastic gradient method for
  communication-efficient federated learning.
\newblock {\em CoRR}, abs/2108.04755, 2021.

\end{thebibliography}
}

\newpage
\onecolumn

\appendix
\numberwithin{equation}{section}
\numberwithin{figure}{section}
\numberwithin{table}{section}

\newpage
{\hspace{-0.5cm}\Huge\textbf{Appendix}}
\vspace{0.5cm}
\section{Technicalities}
We first summarize the assumptions that are needed for the proof of convergence in the Section~\ref{sec:assumption} based on the previous literature~\cite{DBLP:journals/corr/abs-2003-10422, DBLP:journals/corr/abs-1910-06378}. We then demonstrate the implications of these assumptions for our proof in the Section~\ref{sec:implication}. Following that, we summarize some of the useful and well-known lemmas in the Section~\ref{sec:lemma}. 

\subsection{Assumptions}
\label{sec:assumption}
\noindent\textbf{Assumptions on the objective function}

For some of our results we assume (strong) convexity.
\begin{assumptionp}{A-1}[$\mu$-convex]
\label{assum:mu_convex}
$f_i$ is $\mu$-convex for $\mu \geq 0$ and satisfies:
\begin{equation}
    \langle \nabla f_i(\boldsymbol{x}), \boldsymbol{y}-\boldsymbol{x}\rangle \leq -(f_i(\boldsymbol{x}) - f_i(\boldsymbol{y}) + \frac{\mu}{2}||\boldsymbol{x}-\boldsymbol{y}||^2), \quad \forall \boldsymbol{x}, \boldsymbol{y}\in \mathbb{R}^d, i\in [N] \,.
\end{equation}
Here, we allow $\mu=0$ (we refer generate convex case as when $\mu = 0$)
\end{assumptionp}

For all our theoretical analysis, we assume $f_i$ is smooth.

\begin{assumptionp}{A-2}[$\beta$-smooth]
\label{assum:beta_smooth}
$f_i$ is $\beta$-smooth and satisfy:
\begin{equation}
    ||\nabla f_i(\boldsymbol{x}) - \nabla f_i(\boldsymbol{y})|| \leq \beta||\boldsymbol{x}-\boldsymbol{y}||, \quad \forall \boldsymbol{x}, \boldsymbol{y}\in \mathbb{R}^d, i\in [N]  ,.
\end{equation}
\begin{equation}
    f_i(\boldsymbol{y}) \leq f_i(\boldsymbol{x}) + \langle \nabla f_i(\boldsymbol{x}), \boldsymbol{y}-\boldsymbol{x}\rangle + \frac{\beta}{2}||\boldsymbol{y}-\boldsymbol{x}||^2, \quad \forall \boldsymbol{x}, \boldsymbol{y}\in \mathbb{R}^d, i\in [N]  \,.
\end{equation}
\end{assumptionp}

\begin{remark}
If functions $\{f_i\}$ are convex and $\boldsymbol{x}^*$ is a minimizer of $f$, then $\sum_{i}\nabla f_i(\boldsymbol{x}^*) = 0$, Assumption.~\ref{assum:beta_smooth} implies:
\begin{equation}
    \frac{1}{N}\sum_{i=1}^N ||\nabla f_i(\boldsymbol{x}) - \nabla f_i(\boldsymbol{x}^*)||^2 \leq 2\beta(f(\boldsymbol{x}) - f^*) \quad \forall \boldsymbol{x} \in \mathbb{R}^d \,.
\end{equation}
\end{remark}

\noindent\textbf{Assumptions on the noise}

For the convergence analysis of SGD on convex functions, it is usually enough to assume a bound on the noise on the optimum only(~\cite{DBLP:journals/corr/abs-2003-10422, DBLP:journals/corr/abs-1907-04232}). Similarity, to express the function heterogeneity at the optimum point $\boldsymbol{x}^*$ (such a point always exist for the convex function), we make the following assumption.

\begin{assumptionp}{A-3}[$\zeta$-heterogeneity]
\label{assum:zeta}
We define a measure of variance at the optimum $\boldsymbol{x}^*$ given $N$ clients as :
\begin{equation}
\zeta^2 := \frac{1}{N}\sum_{i=1}^{N}||\nabla f_i(\boldsymbol{x}^*)||^2 \,.
\end{equation}
\end{assumptionp}

For the non-convex function, such an unique optimum point $\boldsymbol{x}^*$ does not necessarily exist, so we generalise assumption~\ref{assum:zeta} to:
\begin{assumptionp}{A-4}[$\hat{\zeta}$-heterogeneity]
\label{assum:zeta_hat}
We assume that there exists constants $\hat{\zeta}$ such that $\forall \boldsymbol{x} \in \mathbb{R}^d$:
\begin{equation}
    \frac{1}{N}\sum_{i=1}^{N}||\nabla f_i(\boldsymbol{x})||^2 \leq \hat{\zeta}^2 \,.
\end{equation}
\end{assumptionp}

Another assumption that is usually common is to assume the stochastic gradients are bounded as:
\begin{assumptionp}{A-5}[bounded variance]
\label{assum:sigma_bound}
$g_i(\boldsymbol{x}):= \nabla f_i(\boldsymbol{x};\mathcal{D}_i)$ is unbiased stochastic gradient of $f_i$ with bounded variance:
\begin{equation}
    \mathbb{E}_{\mathcal{D}_i}[||g_i(\boldsymbol{x}) - \nabla f_i(\boldsymbol{x})||^2] \leq \sigma^2, \quad \forall x\in \mathbb{R}^d, i\in [N] \,.
\end{equation}
\end{assumptionp}

\subsection{Implications of the assumptions}
\label{sec:implication}

Given a binary mask $\boldsymbol{p}$ that has the same length as $\boldsymbol{x}$, it holds that:
\begin{equation}
    ||\boldsymbol{p}\odot \boldsymbol{x}|| \leq ||\boldsymbol{x}||\,.
    \label{eq:p_rule}
\end{equation}
Based on Eq.~\ref{eq:p_rule}, we have the following propositions:

\begin{proposition}[Implications of the smoothness Assumption~\ref{assum:beta_smooth}]
Given a binary mask $\boldsymbol{p}$, we define the block of weights that are variance reduced as $\beta_p$-smooth:
\begin{equation}
    ||\boldsymbol{p}\odot(\nabla f_i(\boldsymbol{x}) - f_i(\boldsymbol{y}))|| \leq \beta_p||\boldsymbol{x}-\boldsymbol{y}||, \quad \forall \boldsymbol{x}, \boldsymbol{y}\in \mathbb{R}^d, i\in [N] \,.
\end{equation}

If Assumption.~\ref{assum:beta_smooth} holds, then it also holds that:
\begin{equation}
    \beta_p \leq \beta \,.
\end{equation}
\end{proposition}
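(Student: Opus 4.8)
The plan is to establish $\beta_p \leq \beta$ by chaining the masking inequality of Eq.~\ref{eq:p_rule} with the gradient-Lipschitz form of Assumption~\ref{assum:beta_smooth}. The guiding observation is that applying a binary mask $\boldsymbol{p}$ to a vector only zeros out coordinates, so it can never increase the Euclidean norm; this is exactly what Eq.~\ref{eq:p_rule} records.

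First I would fix arbitrary $\boldsymbol{x}, \boldsymbol{y} \in \mathbb{R}^d$ and $i \in [N]$, and write $\boldsymbol{v} := \nabla f_i(\boldsymbol{x}) - \nabla f_i(\boldsymbol{y})$. Applying Eq.~\ref{eq:p_rule} with this choice of vector gives $||\boldsymbol{p} \odot \boldsymbol{v}|| \leq ||\boldsymbol{v}||$. Next I invoke the Lipschitz-gradient inequality from Assumption~\ref{assum:beta_smooth} to bound $||\boldsymbol{v}|| = ||\nabla f_i(\boldsymbol{x}) - \nabla f_i(\boldsymbol{y})|| \leq \beta||\boldsymbol{x}-\boldsymbol{y}||$. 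Composing the two bounds yields $||\boldsymbol{p}\odot(\nabla f_i(\boldsymbol{x}) - \nabla f_i(\boldsymbol{y}))|| \leq \beta||\boldsymbol{x}-\boldsymbol{y}||$ uniformly in $\boldsymbol{x}, \boldsymbol{y}$ and $i$.

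Since $\beta_p$ denotes the smallest constant for which the masked smoothness inequality holds, and the displayed chain shows that $\beta$ is itself an admissible constant, I conclude $\beta_p \leq \beta$. There is essentially no obstacle here: the argument is a one-line composition of the coordinate-projection property with the gradient-Lipschitz assumption. The only subtlety worth flagging is to read $\beta_p$ as the infimum over admissible smoothness constants, so that the inequality $\beta_p \leq \beta$ is well-posed rather than vacuous; I would also silently correct the apparent typographical slip in the statement, where $\nabla f_i(\boldsymbol{x}) - f_i(\boldsymbol{y})$ should read $\nabla f_i(\boldsymbol{x}) - \nabla f_i(\boldsymbol{y})$ for the expression to be a well-defined vector difference.
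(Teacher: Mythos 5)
Your proposal is correct and takes essentially the same route as the paper: the paper presents this proposition as an immediate consequence of the masking inequality $||\boldsymbol{p}\odot \boldsymbol{v}|| \leq ||\boldsymbol{v}||$ chained with the $\beta$-smoothness bound, which is exactly the two-step composition you spell out. Your flagged subtleties (reading $\beta_p$ as the smallest admissible constant so the conclusion is well-posed, and correcting the typo $\nabla f_i(\boldsymbol{x}) - f_i(\boldsymbol{y})$ to $\nabla f_i(\boldsymbol{x}) - \nabla f_i(\boldsymbol{y})$) both match the paper's evident intent.
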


\begin{proposition}[Implication of the \textbf{convex} function heterogeneity Assumption~\ref{assum:zeta}]
Given a binary mask $\boldsymbol{p}$, we define the heterogeneity of the block of weights that are not variance reduced at the optimum $\boldsymbol{x}^*$ as:
\begin{equation}
    \zeta_{1-p}^2 := \frac{1}{N}\sum_{i=1}^{N}||(\boldsymbol{1}-\boldsymbol{p})\odot\nabla f_i(\boldsymbol{x^*})||^2 , 
\end{equation}
If Assumption~\ref{assum:zeta} holds, then it also holds that:
\begin{equation}
    \zeta_{1-p}^2 \leq \zeta^2\,.
\end{equation}
\end{proposition}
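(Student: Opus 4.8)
The plan is to reduce the statement to a direct, per-client application of the masking inequality in Eq.~\ref{eq:p_rule}. The central observation is that since $\boldsymbol{p}\in\{0,1\}^d$ is binary, its complement $\boldsymbol{1}-\boldsymbol{p}$ is \emph{also} a binary $\{0,1\}^d$ mask. Hence Eq.~\ref{eq:p_rule}, which was stated for an arbitrary binary mask, applies verbatim with $\boldsymbol{p}$ replaced by $\boldsymbol{1}-\boldsymbol{p}$ and the argument vector replaced by the gradient $\nabla f_i(\boldsymbol{x}^*)$.

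Concretely, first I would fix an arbitrary client index $i\in[N]$ and apply Eq.~\ref{eq:p_rule} with mask $\boldsymbol{1}-\boldsymbol{p}$ to the vector $\nabla f_i(\boldsymbol{x}^*)$, yielding
\begin{equation}
||(\boldsymbol{1}-\boldsymbol{p})\odot\nabla f_i(\boldsymbol{x}^*)|| \leq ||\nabla f_i(\boldsymbol{x}^*)||.
\end{equation}
Squaring (both sides are nonnegative, so the inequality is preserved) gives the per-client bound $||(\boldsymbol{1}-\boldsymbol{p})\odot\nabla f_i(\boldsymbol{x}^*)||^2 \leq ||\nabla f_i(\boldsymbol{x}^*)||^2$. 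Second, I would average this inequality over all $N$ clients; since averaging a family of valid inequalities preserves them, this immediately yields
\begin{equation}
\zeta_{1-p}^2 = \frac{1}{N}\sum_{i=1}^{N}||(\boldsymbol{1}-\boldsymbol{p})\odot\nabla f_i(\boldsymbol{x}^*)||^2 \leq \frac{1}{N}\sum_{i=1}^{N}||\nabla f_i(\boldsymbol{x}^*)||^2 = \zeta^2,
\end{equation}
which is exactly the claim.

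There is no substantive obstacle here; the result is essentially a one-line corollary of Eq.~\ref{eq:p_rule} once one notices that the complement of a binary mask is again binary. If one wanted a fully self-contained argument, the only thing worth spelling out is Eq.~\ref{eq:p_rule} itself: for any binary mask $\boldsymbol{q}\in\{0,1\}^d$ one has $||\boldsymbol{q}\odot\boldsymbol{x}||^2 = \sum_{j}\boldsymbol{q}_j^2\, \boldsymbol{x}_j^2 = \sum_{j:\boldsymbol{q}_j=1}\boldsymbol{x}_j^2 \leq \sum_j \boldsymbol{x}_j^2 = ||\boldsymbol{x}||^2$, because masking merely discards a subset of the (nonnegative) squared coordinates. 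The equality cases flagged after the proposition then follow by inspection: $\boldsymbol{p}=\boldsymbol{0}$ makes $\boldsymbol{1}-\boldsymbol{p}=\boldsymbol{1}$ so no coordinates are dropped and $\zeta_{1-p}^2=\zeta^2$, while $\boldsymbol{p}=\boldsymbol{1}$ makes $\boldsymbol{1}-\boldsymbol{p}=\boldsymbol{0}$ so all coordinates are dropped and $\zeta_{1-p}^2=0$.
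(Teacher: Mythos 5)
Your proof is correct and is essentially the paper's own argument: the paper derives this proposition directly from the masking inequality $||\boldsymbol{p}\odot\boldsymbol{x}|| \leq ||\boldsymbol{x}||$ (Eq.~\ref{eq:p_rule}), and you simply spell out the details — noting that $\boldsymbol{1}-\boldsymbol{p}$ is itself a binary mask, applying the inequality coordinate-wise per client, squaring, and averaging over $i\in[N]$. Nothing is missing; your write-up is just a more explicit version of the paper's one-line justification.
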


\begin{proposition}[Implication of the \textbf{non-convex} function heterogeneity Assumption~\ref{assum:zeta_hat}]
Given a binary mask $\boldsymbol{p}$, we assume the heterogeneity of the block of weights that are not variance reduced as:
\begin{equation}
    \frac{1}{N}\sum_{i=1}^{N}||(\boldsymbol{1}-\boldsymbol{p})\odot \nabla f_i(\boldsymbol{x})||^2  \leq \hat{\zeta}_{1-p} , 
\end{equation}
If Assumption~\ref{assum:zeta_hat} holds, then it also holds that:
\begin{equation}
    \hat{\zeta}_{1-p}^2 \leq \hat{\zeta}^2\,.
\end{equation}
\end{proposition}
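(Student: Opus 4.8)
The plan is to deduce this entirely from the coordinate-wise masking inequality in Eq.~\ref{eq:p_rule}, exactly as the two preceding propositions are obtained. The only structural observation I need is that the complementary mask $\boldsymbol{1}-\boldsymbol{p}$ is itself a binary $\{0,1\}^d$ vector, so Eq.~\ref{eq:p_rule} applies verbatim with $\boldsymbol{1}-\boldsymbol{p}$ in place of $\boldsymbol{p}$: zeroing out any subset of coordinates can only shrink the Euclidean norm.

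First I would fix an arbitrary $\boldsymbol{x}\in\mathbb{R}^d$ and apply Eq.~\ref{eq:p_rule} to each per-client gradient $\nabla f_i(\boldsymbol{x})$, giving $||(\boldsymbol{1}-\boldsymbol{p})\odot\nabla f_i(\boldsymbol{x})|| \leq ||\nabla f_i(\boldsymbol{x})||$ for every $i\in[N]$. Since both sides are nonnegative, squaring preserves the inequality; summing over $i$ and dividing by $N$ then yields
\begin{equation}
\frac{1}{N}\sum_{i=1}^{N}||(\boldsymbol{1}-\boldsymbol{p})\odot\nabla f_i(\boldsymbol{x})||^2 \;\leq\; \frac{1}{N}\sum_{i=1}^{N}||\nabla f_i(\boldsymbol{x})||^2 .
\end{equation}
Invoking Assumption~\ref{assum:zeta_hat} bounds the right-hand side by $\hat{\zeta}^2$, and because this holds for every $\boldsymbol{x}$ the same constant is valid uniformly. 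Hence the smallest $\hat{\zeta}_{1-p}^2$ for which the displayed hypothesis of the proposition holds satisfies $\hat{\zeta}_{1-p}^2 \leq \hat{\zeta}^2$, which is the claim.

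I do not anticipate a genuine obstacle here: the argument is a one-line consequence of Eq.~\ref{eq:p_rule} together with the definition of $\hat{\zeta}$. The only point that warrants a moment's care is that the bound in Assumption~\ref{assum:zeta_hat} must hold for \emph{all} $\boldsymbol{x}\in\mathbb{R}^d$, so I must keep $\boldsymbol{x}$ arbitrary throughout and only extract the uniform constant at the end, rather than fixing a particular point. This is precisely the structural difference from the convex-case analogue, where the corresponding statement is evaluated only at the optimum $\boldsymbol{x}^*$ under Assumption~\ref{assum:zeta}; the non-convex version here simply propagates the same masking estimate pointwise.
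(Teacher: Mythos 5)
Your proof is correct and is essentially the paper's own argument: the paper justifies this proposition (and its two companions) purely by the masking inequality $||\boldsymbol{p}\odot\boldsymbol{x}||\leq||\boldsymbol{x}||$ of Eq.~\ref{eq:p_rule}, applied with the binary mask $\boldsymbol{1}-\boldsymbol{p}$ to each $\nabla f_i(\boldsymbol{x})$, averaged over clients, and combined with Assumption~\ref{assum:zeta_hat}. Your added care in keeping $\boldsymbol{x}$ arbitrary so that the bound is uniform (and taking $\hat{\zeta}_{1-p}^2$ as the smallest such constant) makes explicit what the paper leaves implicit, but the route is the same.
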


\begin{proposition}[Implication of the bounded variance Assumption.~\ref{assum:sigma_bound}]
If Assumption.~\ref{assum:sigma_bound} holds, then:
\begin{subequations}
\begin{equation}
    \mathbb{E}_{\mathcal{D}_i}[||\boldsymbol{p}\odot(g_i(\boldsymbol{x}) - \nabla f_i(\boldsymbol{x}))||^2] \leq \sigma_p^2, \quad \forall \boldsymbol{x} \in \mathbb{R}^d, i\in [N]
\end{equation}
\begin{equation}
    \sigma_p^2\leq\sigma^2  \,.
\end{equation}
\end{subequations}
\end{proposition}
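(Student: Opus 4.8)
The plan is to reduce both inequalities to a single application of the elementary masking bound already recorded as Eq.~\eqref{eq:p_rule}, namely $\|\boldsymbol{p}\odot\boldsymbol{v}\|\leq\|\boldsymbol{v}\|$, which holds for any vector $\boldsymbol{v}$ because $\boldsymbol{p}\in\{0,1\}^d$ merely zeros out a subset of coordinates and hence cannot increase the Euclidean norm. First I would apply this bound pointwise to the noise vector $\boldsymbol{v}:=g_i(\boldsymbol{x})-\nabla f_i(\boldsymbol{x})$, which gives the deterministic inequality $\|\boldsymbol{p}\odot(g_i(\boldsymbol{x})-\nabla f_i(\boldsymbol{x}))\|^2\leq\|g_i(\boldsymbol{x})-\nabla f_i(\boldsymbol{x})\|^2$ for every realization of $\mathcal{D}_i$. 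Taking the expectation $\mathbb{E}_{\mathcal{D}_i}[\cdot]$ and using monotonicity of the expectation together with Assumption~\ref{assum:sigma_bound} then yields
\begin{equation}
\mathbb{E}_{\mathcal{D}_i}\bigl[\|\boldsymbol{p}\odot(g_i(\boldsymbol{x})-\nabla f_i(\boldsymbol{x}))\|^2\bigr]\leq\mathbb{E}_{\mathcal{D}_i}\bigl[\|g_i(\boldsymbol{x})-\nabla f_i(\boldsymbol{x})\|^2\bigr]\leq\sigma^2,
\end{equation}
uniformly over all $\boldsymbol{x}\in\mathbb{R}^d$ and $i\in[N]$.

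To formalize the two-part conclusion, I would define the constant explicitly as $\sigma_p^2:=\sup_{\boldsymbol{x}\in\mathbb{R}^d,\,i\in[N]}\mathbb{E}_{\mathcal{D}_i}[\|\boldsymbol{p}\odot(g_i(\boldsymbol{x})-\nabla f_i(\boldsymbol{x}))\|^2]$. With this choice the first displayed inequality of the proposition holds by construction (it is the defining supremum bound), while the second inequality $\sigma_p^2\leq\sigma^2$ follows immediately by taking the supremum over $\boldsymbol{x}$ and $i$ on the left-hand side of the chain above, since the right-hand bound $\sigma^2$ is a constant independent of $\boldsymbol{x}$ and $i$. This makes the argument self-contained: the masked stochastic gradient remains an unbiased estimator of $\boldsymbol{p}\odot\nabla f_i$ with variance controlled by the same $\sigma^2$ that bounds the full gradient noise.

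I do not anticipate any genuine obstacle, as the statement is a direct consequence of the norm-contraction property of coordinate masking. The only point requiring a small amount of care is the quantifier ordering in the definition of $\sigma_p^2$: one must ensure that a \emph{single} constant works uniformly across all $\boldsymbol{x}$ and $i$ rather than a pointwise bound, which is precisely why the supremum is taken in its definition. If desired, this mirrors exactly the treatment of the analogous smoothness and heterogeneity implications in the preceding three propositions, so the same proof template (apply Eq.~\eqref{eq:p_rule}, take expectation, then supremum) applies verbatim.
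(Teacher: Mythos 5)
Your proposal is correct and takes essentially the same route as the paper: the paper derives this proposition (like the three preceding ones) as an immediate consequence of the masking inequality $\|\boldsymbol{p}\odot\boldsymbol{x}\|\leq\|\boldsymbol{x}\|$ from Eq.~\eqref{eq:p_rule}, applied pointwise and combined with monotonicity of expectation, exactly as you do. Your explicit supremum definition of $\sigma_p^2$ is a harmless formalization of the uniform constant that the paper leaves implicit.
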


\subsection{Some technical lemmas}
\label{sec:lemma}
We summarize some of the well-known lemmas in this subsection.

\begin{lemma}[triangle inequality]
\label{lemma:triangle}
For arbitrary set of $n$ vectors $\{\boldsymbol{v}_i\}_{i=1}^N$ with $\boldsymbol{v}_i \in \mathbb{R}^d$, then the following are true:
\begin{equation}
||\sum_{i=1}^{N}\boldsymbol{v}_i||^2 \leq N\sum_{i=1}^N||\boldsymbol{v}_i||^2 \,.   
\end{equation}
\begin{equation}
    ||\boldsymbol{v}_i + \boldsymbol{v}_j||^2 \leq (1+\alpha)||\boldsymbol{v}_i||^2 + (1+\alpha^{-1})||\boldsymbol{v}_j||^2, \quad \forall \alpha > 0 \,.
    \label{eq:triangle_2}
\end{equation}
\end{lemma}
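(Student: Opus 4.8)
The plan is to prove the two displayed inequalities separately, since they are classical consequences of, respectively, convexity of the squared norm and Young's inequality, and neither requires anything beyond elementary manipulation.

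For the first inequality, I would invoke convexity of the squared Euclidean norm. Since the map $\boldsymbol{x}\mapsto||\boldsymbol{x}||^2$ is convex, Jensen's inequality applied to the uniform average of the $\boldsymbol{v}_i$ gives $||\frac{1}{N}\sum_{i=1}^N\boldsymbol{v}_i||^2 \leq \frac{1}{N}\sum_{i=1}^N||\boldsymbol{v}_i||^2$; multiplying both sides by $N^2$ yields exactly $||\sum_{i=1}^N\boldsymbol{v}_i||^2 \leq N\sum_{i=1}^N||\boldsymbol{v}_i||^2$. An equivalent route, which I would mention as an alternative, is to combine the ordinary triangle inequality $||\sum_{i=1}^N\boldsymbol{v}_i|| \leq \sum_{i=1}^N||\boldsymbol{v}_i||$ with the scalar Cauchy--Schwarz bound $\bigl(\sum_{i=1}^N ||\boldsymbol{v}_i||\bigr)^2 \leq N\sum_{i=1}^N||\boldsymbol{v}_i||^2$, obtained by pairing the sequence $(||\boldsymbol{v}_i||)_i$ with the all-ones vector.

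For the second inequality, I would expand the squared norm and control the cross term. Expansion gives $||\boldsymbol{v}_i+\boldsymbol{v}_j||^2 = ||\boldsymbol{v}_i||^2 + 2\langle\boldsymbol{v}_i,\boldsymbol{v}_j\rangle + ||\boldsymbol{v}_j||^2$. The key step is the weighted bound $2\langle\boldsymbol{v}_i,\boldsymbol{v}_j\rangle \leq \alpha||\boldsymbol{v}_i||^2 + \alpha^{-1}||\boldsymbol{v}_j||^2$, valid for every $\alpha>0$, which follows from Cauchy--Schwarz $\langle\boldsymbol{v}_i,\boldsymbol{v}_j\rangle \leq ||\boldsymbol{v}_i||\,||\boldsymbol{v}_j||$ together with the nonnegativity of $\bigl(\sqrt{\alpha}\,||\boldsymbol{v}_i|| - \alpha^{-1/2}||\boldsymbol{v}_j||\bigr)^2$. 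Substituting this bound into the expansion and grouping the $||\boldsymbol{v}_i||^2$ and $||\boldsymbol{v}_j||^2$ terms produces exactly $(1+\alpha)||\boldsymbol{v}_i||^2 + (1+\alpha^{-1})||\boldsymbol{v}_j||^2$.

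Since both statements are standard, there is no genuine obstacle; the only points requiring minor care are orienting Young's inequality so that the weight $\alpha$ lands on the $\boldsymbol{v}_i$ factor rather than $\boldsymbol{v}_j$, and noting that the constant $N$ in the first bound is sharp, with equality precisely when all the $\boldsymbol{v}_i$ coincide. I would present each part as a single short chain of inequalities.
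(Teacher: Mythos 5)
Your proof is correct: Jensen's inequality (or equivalently Cauchy--Schwarz against the all-ones vector) gives the first bound, and expanding the square and controlling the cross term via Young's inequality $2\langle\boldsymbol{v}_i,\boldsymbol{v}_j\rangle \leq \alpha||\boldsymbol{v}_i||^2 + \alpha^{-1}||\boldsymbol{v}_j||^2$ gives the second, exactly as you describe. The paper states this lemma without proof, listing it among the ``well-known'' technical lemmas, so there is no authorial argument to diverge from; your derivation is the standard one the paper implicitly relies on, and your side remarks (sharpness of the constant $N$ with equality when all $\boldsymbol{v}_i$ coincide, and the orientation of the weight $\alpha$ onto the $\boldsymbol{v}_i$ term) are accurate.
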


\begin{lemma}[separating the mean and variance, {\cite[Lemma.4]{DBLP:journals/corr/abs-1910-06378}}]
\label{lemma:separate_mean_var}

Let $\{\boldsymbol{v}_1, ..., \boldsymbol{v}_\tau\}$ be $\tau$ random variables in $\mathbb{R}^d$ which are not necessarily dependent. First, suppose that their mean is $\mathbb{E}[\boldsymbol{v}_i]=\boldsymbol{\Xi}_i$ and the variance is bounded as $\mathbb{E}||\boldsymbol{v}_i - \boldsymbol{\Xi}_i||^2 \leq \sigma^2$, then the following holds:
\begin{equation}
    \mathbb{E}||\sum_{i=1}^\tau \boldsymbol{v}_i||^2 \leq ||\sum_{i=1}^\tau \boldsymbol{\Xi}_i||^2 + \tau^2\sigma^2 \,.
\end{equation}

Now, suppose their condition mean is $\mathbb{E}[\boldsymbol{v}_i|\boldsymbol{v}_{i-1}, \boldsymbol{v}_{i-1}, .., v_1] = \boldsymbol{\Xi}_i$, the variance is bounded as $\mathbb{E}||\boldsymbol{v}_i-\boldsymbol{\Xi}_i||^2 \leq \sigma^2$, then we can show tighter bounds:
\begin{equation}
    \mathbb{E}||\sum_{i=1}^\tau \boldsymbol{v}_i||^2 \leq 2||\sum_{i=1}^\tau \boldsymbol{\Xi}_i||^2 + 2\tau\sigma^2\,.
\end{equation}
\begin{proof}
For any random variables $X$, $\mathbb{E}[X^2] = (\mathbb{E}[X-\mathbb{E}[X]])^2 + (\mathbb{E}[X])^2$, this implies:
\begin{equation*}
    \mathbb{E}||\sum_{i=1}^\tau \boldsymbol{v}_i||^2 \leq \mathbb{E}||\sum_{i=1}^\tau \boldsymbol{v}_i - \boldsymbol{\Xi}_i||^2 + \mathbb{E}||\sum_{i=1}^\tau \boldsymbol{\Xi}_i||^2 \leq \tau^2\sigma^2 + ||\sum_{i=1}^\tau \boldsymbol{\Xi}_i||^2
\end{equation*}

For the second statement, $\boldsymbol{\Xi}_i$ is not determinant and is dependent on $[v_{i-1}, v_{i-2}, ..., v_1]$. Based on Lemma~\ref{lemma:triangle}:
\begin{subequations}
\begin{equation*}
    \mathbb{E}||\sum_{i=1}^\tau \boldsymbol{v}_i||^2 \leq 2\mathbb{E}||\sum_{i=1}^\tau \boldsymbol{v}_i - \boldsymbol{\Xi}_i||^2 + 2\mathbb{E}||\sum_{i=1}^\tau\boldsymbol{\Xi}_i||^2
\end{equation*}
\begin{equation*}
    \mathbb{E}||\sum_{i=1}^\tau \boldsymbol{v}_i-\boldsymbol{\Xi}_i||^2 = \sum_{i=1}^\tau \mathbb{E}||\boldsymbol{v}_i-\boldsymbol{\Xi}_i||^2 +2\sum_{i,j}\mathbb{E}\langle \boldsymbol{v}_i-\boldsymbol{\Xi}_i, \boldsymbol{v}_j - \boldsymbol{\Xi}_j \rangle \leq \tau \sigma^2
\end{equation*}
\end{subequations}
\end{proof}
\end{lemma}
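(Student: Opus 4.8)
The plan is to prove both inequalities through the vector bias--variance (Pythagorean) decomposition of the second moment, treating the two parts separately according to how much structure the hypotheses provide. For the first (unconditional) bound I would set $S := \sum_{i=1}^\tau \boldsymbol{v}_i$ and observe that by linearity of expectation $\mathbb{E}[S] = \sum_{i=1}^\tau \boldsymbol{\Xi}_i$. Applying the identity $\mathbb{E}||S||^2 = ||\mathbb{E}[S]||^2 + \mathbb{E}||S - \mathbb{E}[S]||^2$ (the scalar fact $\mathbb{E}[X^2] = (\mathbb{E}[X])^2 + \mathbb{E}[(X-\mathbb{E}[X])^2]$ summed over coordinates) reduces the task to bounding the fluctuation term $\mathbb{E}||\sum_{i=1}^\tau (\boldsymbol{v}_i - \boldsymbol{\Xi}_i)||^2$. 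Since no independence is available here, I would control this crudely by the first inequality of Lemma~\ref{lemma:triangle}, $||\sum_i \boldsymbol{a}_i||^2 \leq \tau \sum_i ||\boldsymbol{a}_i||^2$, so the fluctuation term is at most $\tau \sum_{i=1}^\tau \mathbb{E}||\boldsymbol{v}_i - \boldsymbol{\Xi}_i||^2 \leq \tau^2 \sigma^2$. Adding the two pieces gives the claimed bound with the lossy $\tau^2\sigma^2$ variance term.

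For the second (martingale) bound, the sharpening from $\tau^2$ to $\tau$ comes entirely from using the conditional-mean hypothesis to annihilate the cross terms. I would first apply the relaxed triangle inequality (Eq.~\ref{eq:triangle_2} with $\alpha = 1$) to the split $\sum_i \boldsymbol{v}_i = \sum_i \boldsymbol{\Xi}_i + \sum_i (\boldsymbol{v}_i - \boldsymbol{\Xi}_i)$, obtaining $\mathbb{E}||\sum_i \boldsymbol{v}_i||^2 \leq 2||\sum_i \boldsymbol{\Xi}_i||^2 + 2\,\mathbb{E}||\sum_i (\boldsymbol{v}_i - \boldsymbol{\Xi}_i)||^2$. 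The heart of the argument is then expanding the fluctuation term into diagonal contributions plus cross terms $\mathbb{E}\langle \boldsymbol{v}_i - \boldsymbol{\Xi}_i, \boldsymbol{v}_j - \boldsymbol{\Xi}_j\rangle$ and showing each cross term with $j < i$ vanishes. Conditioning on the history $\sigma(\boldsymbol{v}_1, \ldots, \boldsymbol{v}_{i-1})$ and invoking the tower property, the factor $\boldsymbol{v}_j - \boldsymbol{\Xi}_j$ is measurable with respect to that history and may be pulled out, leaving the inner factor $\mathbb{E}[\boldsymbol{v}_i - \boldsymbol{\Xi}_i \mid \boldsymbol{v}_1, \ldots, \boldsymbol{v}_{i-1}] = 0$. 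Only the diagonal survives, so $\mathbb{E}||\sum_i (\boldsymbol{v}_i - \boldsymbol{\Xi}_i)||^2 = \sum_i \mathbb{E}||\boldsymbol{v}_i - \boldsymbol{\Xi}_i||^2 \leq \tau \sigma^2$, which substituted above yields $2||\sum_i \boldsymbol{\Xi}_i||^2 + 2\tau\sigma^2$.

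The main obstacle, and the only nonroutine step, is precisely this vanishing of the cross terms in the conditional case. It rests on the observation that $\boldsymbol{\Xi}_i = \mathbb{E}[\boldsymbol{v}_i \mid \boldsymbol{v}_1, \ldots, \boldsymbol{v}_{i-1}]$ is a function of the strict past, so that for every $j < i$ the increment $\boldsymbol{v}_j - \boldsymbol{\Xi}_j$ is deterministic given the conditioning $\sigma$-algebra and can be taken outside the inner conditional expectation. This measurability point is exactly what separates the tight $\tau\sigma^2$ martingale bound from the $\tau^2\sigma^2$ bound of the first part, and is where I would take the most care to state the filtration and the use of iterated expectation explicitly.
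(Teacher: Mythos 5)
Your proposal is correct and follows essentially the same route as the paper's proof: the bias--variance decomposition with the $\tau\sum_i\|\cdot\|^2$ triangle bound for the unconditional case, and the relaxed triangle inequality ($\alpha=1$) plus vanishing cross terms for the conditional case. The only difference is that you spell out explicitly the tower-property/measurability argument for why $\mathbb{E}\langle \boldsymbol{v}_i-\boldsymbol{\Xi}_i, \boldsymbol{v}_j-\boldsymbol{\Xi}_j\rangle=0$ for $j<i$, a step the paper's proof asserts without justification, so your write-up is if anything more complete.
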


\begin{lemma}[contrastive mapping, {\cite[Lemma.6]{DBLP:journals/corr/abs-1910-06378}}]
\label{lemma:contrastive_mapping}
For any $\beta$-smooth and $\mu$-strongly convex function $h$, points $\boldsymbol{x}$ and $\boldsymbol{y}$ in the domain of h, and step-size $\eta \leq \frac{1}{\beta}$, the following is true:
\begin{equation}
    ||\boldsymbol{x} - \eta \nabla h(\boldsymbol{x}) + \eta \nabla h(\boldsymbol{y}) + \boldsymbol{y}|| \leq (1-\mu\eta)||\boldsymbol{x}-\boldsymbol{y}||^2 \,.
    \label{eq:contrastive_mapping} 
\end{equation}
\begin{proof}
\begin{equation*}
    \begin{split}
        ||\boldsymbol{x} - \eta\nabla h(\boldsymbol{x}) - y + \eta\nabla h(\boldsymbol{y})||^2 &= ||\boldsymbol{x} - \boldsymbol{y}||^2 + \eta^2||\nabla h(\boldsymbol{x}) - \nabla h(\boldsymbol{y})||^2 - 2\eta \langle \nabla h(\boldsymbol{x}) - \nabla h(\boldsymbol{y}), \boldsymbol{x}-\boldsymbol{y}\rangle \\
        &\leq ||\boldsymbol{x}-\boldsymbol{y}||^2 + (\eta^2\beta-2\eta)\langle \nabla h(\boldsymbol{x}) - \nabla h(\boldsymbol{y}), \boldsymbol{x}-\boldsymbol{y}\rangle\\
        &\leq (1-\eta\mu)||\boldsymbol{x}-\boldsymbol{y}||^2 \,.
    \end{split}
\end{equation*}
The second step uses the smoothness Assumption.~\ref{assum:beta_smooth}. The last step uses our bound on the step size $\eta \leq \frac{1}{\beta}$ (implies $\eta^2\beta - 2\eta \leq -\eta$)
\end{proof}
\end{lemma}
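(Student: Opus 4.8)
The plan is to read the claim as the statement that the gradient-descent map $\boldsymbol{z}\mapsto \boldsymbol{z}-\eta\nabla h(\boldsymbol{z})$ is a contraction, i.e. $||(\boldsymbol{x}-\eta\nabla h(\boldsymbol{x})) - (\boldsymbol{y}-\eta\nabla h(\boldsymbol{y}))||^2 \leq (1-\mu\eta)||\boldsymbol{x}-\boldsymbol{y}||^2$; the squared norm and the sign before $\boldsymbol{y}$ are forced by the need to match the squared right-hand side. First I would abbreviate $\boldsymbol{a}:=\boldsymbol{x}-\boldsymbol{y}$ and $\boldsymbol{b}:=\nabla h(\boldsymbol{x})-\nabla h(\boldsymbol{y})$ and expand the left-hand side as $||\boldsymbol{a}-\eta\boldsymbol{b}||^2 = ||\boldsymbol{a}||^2 - 2\eta\langle\boldsymbol{b},\boldsymbol{a}\rangle + \eta^2||\boldsymbol{b}||^2$. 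This isolates the cross term $\langle\boldsymbol{b},\boldsymbol{a}\rangle = \langle\nabla h(\boldsymbol{x})-\nabla h(\boldsymbol{y}),\boldsymbol{x}-\boldsymbol{y}\rangle$, which the two structural hypotheses on $h$ will control from opposite directions.

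The central ingredient is the co-coercivity of the gradient of a $\beta$-smooth convex function, $||\boldsymbol{b}||^2 \leq \beta\langle\boldsymbol{b},\boldsymbol{a}\rangle$, which I would derive from Assumption~\ref{assum:beta_smooth} together with convexity (the $\mu\geq 0$ case of Assumption~\ref{assum:mu_convex}) --- for instance by applying the smoothness descent inequality to the auxiliary convex function $\boldsymbol{z}\mapsto h(\boldsymbol{z})-\langle\nabla h(\boldsymbol{x}),\boldsymbol{z}\rangle$, which is minimized at $\boldsymbol{x}$, and to its symmetric counterpart centred at $\boldsymbol{y}$. Substituting $\eta^2||\boldsymbol{b}||^2 \leq \eta^2\beta\langle\boldsymbol{b},\boldsymbol{a}\rangle$ into the expansion merges the two gradient-dependent terms, leaving $||\boldsymbol{a}||^2 + (\eta^2\beta-2\eta)\langle\boldsymbol{b},\boldsymbol{a}\rangle$.

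Next I would use the step-size restriction $\eta\leq 1/\beta$, which gives $\eta^2\beta-2\eta = \eta(\eta\beta-2) \leq -\eta \leq 0$, so the coefficient of the cross term is negative. Since $\mu$-convexity (Assumption~\ref{assum:mu_convex}) yields the monotonicity estimate $\langle\boldsymbol{b},\boldsymbol{a}\rangle \geq \mu||\boldsymbol{a}||^2 \geq 0$, multiplying this nonnegative lower bound by the negative coefficient reverses the inequality and bounds the cross term above by $-\eta\mu||\boldsymbol{a}||^2$. Collecting the two remaining terms gives $(1-\eta\mu)||\boldsymbol{a}||^2 = (1-\eta\mu)||\boldsymbol{x}-\boldsymbol{y}||^2$, which is exactly the claim.

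The main obstacle is the co-coercivity bound $||\boldsymbol{b}||^2 \leq \beta\langle\boldsymbol{b},\boldsymbol{a}\rangle$: although it is tempting to attribute this step to smoothness alone, it genuinely requires convexity as well, so the cleanest route is to establish it as a short standalone claim that combines Assumption~\ref{assum:beta_smooth} and Assumption~\ref{assum:mu_convex}. The remaining pieces --- the algebraic expansion, the sign analysis of $\eta^2\beta-2\eta$ under $\eta\leq 1/\beta$, and the monotonicity estimate $\langle\boldsymbol{b},\boldsymbol{a}\rangle\geq\mu||\boldsymbol{a}||^2$ --- are routine bookkeeping that follows directly from the stated assumptions.
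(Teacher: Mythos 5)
Your proposal is correct and follows essentially the same route as the paper's proof: the identical expansion of $\|\boldsymbol{x}-\eta\nabla h(\boldsymbol{x})-\boldsymbol{y}+\eta\nabla h(\boldsymbol{y})\|^2$, the co-coercivity substitution $\|\nabla h(\boldsymbol{x})-\nabla h(\boldsymbol{y})\|^2\leq\beta\langle\nabla h(\boldsymbol{x})-\nabla h(\boldsymbol{y}),\boldsymbol{x}-\boldsymbol{y}\rangle$, and the combination of $\eta^2\beta-2\eta\leq-\eta$ (from $\eta\leq 1/\beta$) with strong monotonicity $\langle\nabla h(\boldsymbol{x})-\nabla h(\boldsymbol{y}),\boldsymbol{x}-\boldsymbol{y}\rangle\geq\mu\|\boldsymbol{x}-\boldsymbol{y}\|^2$. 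Your two side remarks are also accurate: the lemma's displayed statement contains typos (the left-hand side should be squared and should read $-\boldsymbol{y}$) which the paper's own proof silently corrects, and the co-coercivity step genuinely requires convexity in addition to smoothness, a point the paper's attribution to ``the smoothness Assumption'' alone elides.
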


\begin{lemma}[Perturbed strong convexity, {\cite[Lemma.5]{DBLP:journals/corr/abs-1910-06378}}]
\label{lemma:perturbed_strong_convexity}
The following holds for any $\beta$-smooth and $\mu$-strongly convex function $h$ and any $\boldsymbol{x}, \boldsymbol{y}, \boldsymbol{z}$ in the domain of $h$:
\begin{equation}
    \langle \nabla h(\boldsymbol{x}), \boldsymbol{z}-\boldsymbol{y}\rangle \geq h(\boldsymbol{z}) - h(\boldsymbol{y}) + \frac{\mu}{4}||\boldsymbol{y}-\boldsymbol{z}||^2 - \beta||\boldsymbol{z}-\boldsymbol{y}||^2\,.
\end{equation}
\begin{proof}
Given any $\boldsymbol{x}, \boldsymbol{y}$, and $\boldsymbol{z}$, we can get the following inequalities using smoothness and strong convexity of $h$:
\begin{subequations}
\begin{equation*}
    \langle \nabla h(\boldsymbol{x}), \boldsymbol{z}-\boldsymbol{x} \rangle \geq h(\boldsymbol{z}) - h(\boldsymbol{x}) - \frac{\beta}{2}||\boldsymbol{z}-\boldsymbol{x}||^2,
\end{equation*}
\begin{equation*}
    \langle \nabla h(\boldsymbol{x}), \boldsymbol{x}-\boldsymbol{y} \rangle \geq h(\boldsymbol{x}) - h(\boldsymbol{y}) - \frac{\mu}{2}||\boldsymbol{x} - \boldsymbol{y}||^2 \,.
\end{equation*}
\end{subequations}
Applying relaxed triangle inequality:
\begin{equation*}
    \frac{\mu}{2}||\boldsymbol{y}-\boldsymbol{x}||^2 \geq \frac{\mu}{4}||\boldsymbol{y}-\boldsymbol{z}||^2 - \frac{\mu}{2}||\boldsymbol{x}-\boldsymbol{z}||^2\,.
\end{equation*}
Combining the above three equations together:
\begin{equation*}
    \langle \nabla h(\boldsymbol{x}), \boldsymbol{z}-\boldsymbol{y}\rangle \geq h(\boldsymbol{z}) - h(\boldsymbol{y})+\frac{\mu}{4}||\boldsymbol{y}-\boldsymbol{z}||^2 - \frac{\beta+\mu}{2}||\boldsymbol{z}-\boldsymbol{x}||^2 \,.
\end{equation*}
The lemma follows since $\beta \geq \mu$
\end{proof}
\end{lemma}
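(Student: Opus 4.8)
The plan is to control $\langle \nabla h(\boldsymbol{x}), \boldsymbol{z}-\boldsymbol{y}\rangle$ by routing the displacement through the point $\boldsymbol{x}$ at which the gradient is taken, writing $\langle \nabla h(\boldsymbol{x}), \boldsymbol{z}-\boldsymbol{y}\rangle = \langle \nabla h(\boldsymbol{x}), \boldsymbol{z}-\boldsymbol{x}\rangle + \langle \nabla h(\boldsymbol{x}), \boldsymbol{x}-\boldsymbol{y}\rangle$ and then lower bounding each piece with a one-sided quadratic inequality. For the first piece I would use $\beta$-smoothness (Assumption~\ref{assum:beta_smooth}): the quadratic upper bound $h(\boldsymbol{z}) \leq h(\boldsymbol{x}) + \langle \nabla h(\boldsymbol{x}), \boldsymbol{z}-\boldsymbol{x}\rangle + \tfrac{\beta}{2}||\boldsymbol{z}-\boldsymbol{x}||^2$ rearranges to $\langle \nabla h(\boldsymbol{x}), \boldsymbol{z}-\boldsymbol{x}\rangle \geq h(\boldsymbol{z}) - h(\boldsymbol{x}) - \tfrac{\beta}{2}||\boldsymbol{z}-\boldsymbol{x}||^2$. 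For the second piece I would use $\mu$-convexity (Assumption~\ref{assum:mu_convex}), which after negating the stated inequality gives $\langle \nabla h(\boldsymbol{x}), \boldsymbol{x}-\boldsymbol{y}\rangle \geq h(\boldsymbol{x}) - h(\boldsymbol{y}) + \tfrac{\mu}{2}||\boldsymbol{x}-\boldsymbol{y}||^2$.

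Adding the two bounds, the $\pm h(\boldsymbol{x})$ terms cancel and I obtain $\langle \nabla h(\boldsymbol{x}), \boldsymbol{z}-\boldsymbol{y}\rangle \geq h(\boldsymbol{z}) - h(\boldsymbol{y}) + \tfrac{\mu}{2}||\boldsymbol{x}-\boldsymbol{y}||^2 - \tfrac{\beta}{2}||\boldsymbol{z}-\boldsymbol{x}||^2$. The only remaining task is to convert the favourable $+\tfrac{\mu}{2}||\boldsymbol{x}-\boldsymbol{y}||^2$, which is measured across the wrong pair of points, into the target $+\tfrac{\mu}{4}||\boldsymbol{y}-\boldsymbol{z}||^2$. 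For this I would apply the relaxed triangle inequality of Lemma~\ref{lemma:triangle} (Eq.~\ref{eq:triangle_2} with $\alpha=1$), namely $||\boldsymbol{y}-\boldsymbol{z}||^2 \leq 2||\boldsymbol{y}-\boldsymbol{x}||^2 + 2||\boldsymbol{x}-\boldsymbol{z}||^2$, which yields $\tfrac{\mu}{2}||\boldsymbol{x}-\boldsymbol{y}||^2 \geq \tfrac{\mu}{4}||\boldsymbol{y}-\boldsymbol{z}||^2 - \tfrac{\mu}{2}||\boldsymbol{x}-\boldsymbol{z}||^2$. Substituting collects all the square-norm error onto $||\boldsymbol{z}-\boldsymbol{x}||^2$ with coefficient $-\tfrac{\beta+\mu}{2}$; finally, since any $\mu$-convex $\beta$-smooth function satisfies $\beta \geq \mu$, I bound $\tfrac{\beta+\mu}{2} \leq \beta$ to arrive at the claimed inequality.

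I expect the difficulty here to be entirely a matter of careful bookkeeping rather than of technique: the estimate holds for \emph{arbitrary} $\boldsymbol{x},\boldsymbol{y},\boldsymbol{z}$ precisely because the gradient sits at the free anchor $\boldsymbol{x}$, so the smoothness and convexity bounds push in compatible directions only after the triangle-inequality step realigns the norms onto $\boldsymbol{y}-\boldsymbol{z}$ and $\boldsymbol{z}-\boldsymbol{x}$. I would also flag what appears to be a typographical slip in the statement: the derivation places the negative error term on $||\boldsymbol{z}-\boldsymbol{x}||^2$, the displacement between the evaluation point and $\boldsymbol{z}$, so the final $||\boldsymbol{z}-\boldsymbol{y}||^2$ should read $||\boldsymbol{z}-\boldsymbol{x}||^2$. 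With that reading the chain of inequalities closes exactly, and no sharper control of the constants is required.
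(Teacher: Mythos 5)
Your proof is correct and follows essentially the same route as the paper's own: the same decomposition of $\langle \nabla h(\boldsymbol{x}), \boldsymbol{z}-\boldsymbol{y}\rangle$ through the anchor $\boldsymbol{x}$, the same smoothness and strong-convexity one-sided bounds, the same relaxed triangle inequality, and the same final use of $\beta \geq \mu$. You are also right on both bookkeeping points: the paper's own derivation (and the cited SCAFFOLD Lemma 5) ends with $-\frac{\beta+\mu}{2}||\boldsymbol{z}-\boldsymbol{x}||^2$, so the last term in the lemma statement should indeed read $-\beta||\boldsymbol{z}-\boldsymbol{x}||^2$ rather than $-\beta||\boldsymbol{z}-\boldsymbol{y}||^2$, and the paper's intermediate strong-convexity display carries a sign slip ($-\frac{\mu}{2}||\boldsymbol{x}-\boldsymbol{y}||^2$ should be $+\frac{\mu}{2}||\boldsymbol{x}-\boldsymbol{y}||^2$) which your version corrects.
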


\begin{lemma}[Tunning the stepsize, {\cite[Lemma.17]{DBLP:journals/corr/abs-2003-10422}}]
\label{lemma:tune_stepsize}
For any parameters $r_0\geq 0, b\geq 0, e\geq 0, \gamma\geq 0$, there exists constant step size $\eta \leq \frac{1}{\gamma}$ such that:
\begin{equation}
    \Psi_T := \frac{r_0}{\eta(T+1)} + b\eta + e\eta^2 \leq 2\left(\frac{br_0}{T+1}\right)^{\frac{1}{2}} + 2e^{1/3}\left(\frac{r_0}{T+1}\right)^{\frac{2}{3}} + \frac{\gamma r_0}{T+1}\,.
\end{equation}
\begin{proof}
Choosing $\eta = \text{min}\left\{\left(\frac{r_0}{b(T+1)}\right)^{\frac{1}{2}}, \left(\frac{r_0}{e(T+1)} \right)^{\frac{1}{3}}, \frac{1}{\gamma} \right\} \leq \frac{1}{\gamma}$, we have three cases:
\begin{itemize}
    \item $\eta=\frac{1}{\gamma}$ and is smaller than the other two terms, then
    \begin{equation*}
        \Psi_T \leq \frac{dr_0}{T+1}+\frac{b}{\gamma}+\frac{e}{d^2} \leq \left(\frac{br_0}{T+1} \right)^{\frac{1}{2}} + \frac{dr_0}{T+1} + e^{1/3}\left(\frac{r_0}{T+1}\right)^{\frac{2}{3}},
    \end{equation*}
    \item $\eta = \left(\frac{r_0}{b(T+1)}\right)^{\frac{1}{2}} < \left(\frac{r_0}{e(T+1)} \right)^{\frac{1}{3}}$, then:
    \begin{equation*}
        \Psi_T \leq 2\left(\frac{br_0}{T+1} \right)^{\frac{1}{2}} + e\left(\frac{r_0}{b(T+1)}\right) \leq 2\left(\frac{br_0}{T+1} \right)^{\frac{1}{2}} + e^{1/3}\left(\frac{r_0}{T+1}\right)^{\frac{2}{3}},
    \end{equation*}
    \item The last case $\eta=\left(\frac{r_0}{e(T+1)} \right)^{\frac{1}{3}} < \left(\frac{r_0}{b(T+1)}\right)^{\frac{1}{2}}$, then
    \begin{equation*}
        \Psi_T \leq 2e^{1/3}\left(\frac{r_0}{T+1} \right) + b\left(\frac{r_0}{e(T+1)}\right)^{\frac{1}{3}} \leq 2e^{1/3}\left(\frac{r_0}{T+1}\right)^{\frac{2}{3}} + \left(\frac{br_0}{T+1} \right)^{\frac{1}{2}} \,.
    \end{equation*}
\end{itemize}

\end{proof}

\end{lemma}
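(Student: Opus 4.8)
The plan is to exhibit an explicit constant stepsize that balances the three terms of $\Psi_T$ against one another and then verify the bound term by term. Concretely, I would choose
\[
\eta = \min\left\{\left(\frac{r_0}{b(T+1)}\right)^{1/2}, \left(\frac{r_0}{e(T+1)}\right)^{1/3}, \frac{1}{\gamma}\right\}.
\]
The first two candidates are exactly the values of $\eta$ that equalise the decreasing term $r_0/(\eta(T+1))$ with the increasing terms $b\eta$ and $e\eta^2$ respectively, while the third enforces the constraint. Taking the minimum guarantees $\eta \leq 1/\gamma$ for free, so admissibility is immediate.

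Next I would control the two increasing terms directly from the definition of $\eta$ as a minimum. Since $\eta$ is at most the first candidate, $b\eta \leq (br_0/(T+1))^{1/2}$, and since $\eta$ is at most the second candidate, $e\eta^2 \leq e\,(r_0/(e(T+1)))^{2/3} = e^{1/3}(r_0/(T+1))^{2/3}$. These two inequalities supply one copy each of the first two summands on the right-hand side.

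The remaining term is the decreasing one. Here I would use that $1/\eta$ is the \emph{maximum} of the three reciprocal candidates and bound a maximum of nonnegative quantities by their sum, giving
\[
\frac{r_0}{\eta(T+1)} \leq \left(\frac{br_0}{T+1}\right)^{1/2} + e^{1/3}\left(\frac{r_0}{T+1}\right)^{2/3} + \frac{\gamma r_0}{T+1}.
\]
Adding this to the two bounds from the previous step reproduces exactly $2(br_0/(T+1))^{1/2} + 2e^{1/3}(r_0/(T+1))^{2/3} + \gamma r_0/(T+1)$, which is the claim. Equivalently, one may split into three cases according to which candidate attains the minimum, as the cited source does; the ``maximum $\leq$ sum'' step collapses those cases into a single line.

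The computation is elementary and presents no serious obstacle; the only point that genuinely matters is the choice of the three balance points together with the decision to take their minimum, since no single stepsize can simultaneously match both increasing terms and respect the $1/\gamma$ constraint. I would finish by checking the degenerate cases $b=0$, $e=0$, or $\gamma=0$: there a candidate becomes $+\infty$ and drops out of the minimum, but the corresponding summand also vanishes from both $\Psi_T$ and the target bound, so consistency is preserved.
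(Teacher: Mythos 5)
Your proposal is correct, and it rests on exactly the same stepsize choice as the paper's proof, $\eta = \min\bigl\{\bigl(\tfrac{r_0}{b(T+1)}\bigr)^{1/2}, \bigl(\tfrac{r_0}{e(T+1)}\bigr)^{1/3}, \tfrac{1}{\gamma}\bigr\}$, but the verification is genuinely streamlined. The paper splits into three cases according to which candidate attains the minimum and bounds $\Psi_T$ separately in each case; you instead bound the two increasing terms once and for all from $\eta$ being at most each balancing candidate ($b\eta \leq (\tfrac{br_0}{T+1})^{1/2}$ and $e\eta^2 \leq e^{1/3}(\tfrac{r_0}{T+1})^{2/3}$), and dispose of the decreasing term in one stroke via the observation that $1/\eta$ is the maximum of the three nonnegative reciprocals and hence at most their sum, giving $\tfrac{r_0}{\eta(T+1)} \leq (\tfrac{br_0}{T+1})^{1/2} + e^{1/3}(\tfrac{r_0}{T+1})^{2/3} + \tfrac{\gamma r_0}{T+1}$. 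Summing the three bounds yields the claimed right-hand side with the exact coefficients $2$, $2$, $1$, with no casework. What the paper's case analysis buys is slightly sharper per-case constants (in each branch some of the three summands appear with coefficient $1$ rather than $2$) and a presentation that mirrors the cited source; what your uniform argument buys is brevity, a single inequality chain valid in all regimes at once, and robustness against the kind of bookkeeping slips that casework invites (indeed, the paper's first case writes $d$ where $\gamma$ is meant, and its third case drops an exponent, neither of which can arise in your one-line bound). Your closing check of the degenerate parameters $b=0$, $e=0$, $\gamma=0$ — where a candidate leaves the minimum and the corresponding summand vanishes from both sides — is a sensible addition that the paper omits.
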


\begin{lemma}[tunning the stepsize, {\cite[Lemma.2]{DBLP:journals/corr/abs-1907-04232}}]
\label{lemma:stich_stepsize}
If there exists two non-negative sequences $\{r_t\}_{t\geq0}$, $\{s_t\}_{t\geq0}$ and $a>0$ that satisfy the relation:
\begin{equation}
    r_{t+1} \leq (1-a\mu_t)r_t - b\mu_ts_t + c\mu_t^2,
\end{equation}
Then there exists a constant step size $\mu_t \equiv \mu \leq \frac{1}{\gamma}$ such that for weight $w_t:=(1-a\mu)^{-t-1}$ and $W_T:=\sum_{t=0}^T w_t$, it holds:
\begin{equation}
    \frac{b}{W_t}\sum_{t=0}^T s_tw_t + ar_{T+1} = \tilde{\mathcal{O}}\left(\gamma r_0\exp\left[-\frac{aT}{b}\right] + \frac{c}{aT} \right) \,.
\end{equation}
\begin{proof}
Choosing $\eta = \text{min}\left\{\frac{\ln{(\max\{2, a^2r_0T^2/c\})}}{aT}, \frac{1}{\gamma} \right\}$, we have two cases:
\begin{itemize}
    \item If $\frac{1}{\gamma} \geq \frac{\ln{(\max\{2, a^2r_0T^2/c\})}}{aT}$, then we choose $\eta=\frac{\ln{(\max\{2, a^2r_0T^2/c\})}}{aT}$:
    \begin{equation*}
       \frac{b}{W_t}\sum_{t=0}^T s_tw_t + ar_{T+1} =\tilde{\mathcal{O}}\left(\frac{c}{aT}\right) ,
    \end{equation*}
    \item If  $\frac{1}{\gamma} < \frac{\ln{(\max\{2, a^2r_0T^2/c\})}}{aT}$, then we chose $\eta = \frac{1}{\gamma}$:
    \begin{equation*}
        \frac{b}{W_t}\sum_{t=0}^T s_tw_t + ar_{T+1} =\tilde{\mathcal{O}}\left(\gamma r_0\exp\left[-\frac{aT}{\gamma}\right] +\frac{c}{aT} \right) .
    \end{equation*}
\end{itemize}

\end{proof}

\end{lemma}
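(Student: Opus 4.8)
The plan is to turn the one-step contraction into a telescoping sum by multiplying through by the exponential weights $w_t := (1-a\mu)^{-(t+1)}$, which are engineered precisely so that $w_t(1-a\mu)=w_{t-1}$. First I would fix the constant step size $\mu\le\tfrac1\gamma$ (so that $1-a\mu>0$, which forces the natural assumption $\gamma\ge a$) and multiply the hypothesis $r_{t+1}\le(1-a\mu)r_t-b\mu_t s_t+c\mu^2$ by $w_t$, obtaining $w_t r_{t+1}\le w_{t-1}r_t-b\mu\,w_t s_t+c\mu^2 w_t$. Summing over $t=0,\dots,T$ collapses the $r_t$-chain, and using $w_{-1}=1$ at the left endpoint leaves
\[
w_T r_{T+1}-r_0\;\le\;-\,b\mu\sum_{t=0}^T w_t s_t+c\mu^2 W_T,\qquad W_T=\sum_{t=0}^T w_t .
\]

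The second step is to evaluate the geometric sum: a direct computation gives the identity $a\mu\,W_T=(1-a\mu)^{-(T+1)}-1=w_T-1$, which is the engine of the whole argument. Rearranging the telescoped inequality to $b\mu\sum_t w_t s_t+w_T r_{T+1}\le r_0+c\mu^2 W_T$, dividing by $\mu W_T$, and using both $\tfrac{r_0}{\mu W_T}=\tfrac{a r_0}{w_T-1}$ and $\tfrac{w_T}{\mu W_T}=\tfrac{a w_T}{w_T-1}\ge a$ to lower-bound the last-iterate term, I reach the normalized bound
\[
\frac{b}{W_T}\sum_{t=0}^T w_t s_t+a\,r_{T+1}\;\le\;\frac{a\,r_0}{w_T-1}+c\mu .
\]
This is already the exact left-hand combination appearing in the claim, so no loose estimate on the $r_{T+1}$ term is ever needed.

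Third, I would extract the exponential decay from the denominator. Whenever $w_T\ge2$ one has $w_T-1\ge\tfrac12 w_T$, hence $\tfrac{1}{w_T-1}\le2(1-a\mu)^{T+1}\le2\exp(-a\mu(T+1))$ via $1-x\le e^{-x}$; when $w_T<2$ the denominator is $\Theta(1)$ and the estimate is trivial. Together with $a r_0\le\gamma r_0$ (from $\gamma\ge a$) this reduces the statement to controlling a quantity of the form $\Psi_T\lesssim\gamma r_0\exp(-a\mu T)+c\mu$ as a function of the single free parameter $\mu$, which is exactly the regime the step-size tuning is designed for. Finally I would plug in $\mu=\min\{\tfrac{\ln(\max\{2,\,a^2 r_0 T^2/c\})}{aT},\ \tfrac1\gamma\}$ and split into the two cases of the stated proof: when $\mu$ equals the log-term, the exponential collapses to $a r_0/\max\{2,\,a^2 r_0 T^2/c\}=\tilde{\mathcal O}(c/(aT))$ and $c\mu=\tilde{\mathcal O}(c/(aT))$; when $\mu=\tfrac1\gamma$, the exponential retains the factor $\exp(-aT/\gamma)$ while $c/\gamma$ is dominated by $c\ln(\cdot)/(aT)$.

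The main obstacle I anticipate is the bookkeeping in the last two steps rather than any single inequality. The logarithm inside the step size must be chosen so finely that the decaying/bias term drops \emph{strictly below} the $c/(aT)$ noise floor in \emph{both} sub-cases of the $\max$ (the $M=2$ branch and the $M=a^2 r_0 T^2/c$ branch), and this is where the $(T+1)/T\ge1$ exponent and the $\max$ with $2$ do the decisive work. The only algebraic subtlety is that the clean collapse to $\tfrac{b}{W_T}\sum_t w_t s_t+a r_{T+1}$ relies entirely on the identity $a\mu W_T=w_T-1$ and the monotone factor $\tfrac{w_T}{w_T-1}\ge1$; once that identity is in hand, the remaining manipulations are routine, and matching the claimed prefactor $\gamma r_0$ follows from $a\le\gamma$ at the coarse $\tilde{\mathcal O}$ level.
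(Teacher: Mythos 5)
Your proposal is correct and takes essentially the same route as the paper: the paper's own proof consists only of the step-size choice $\mu=\min\left\{\frac{\ln(\max\{2,\,a^2r_0T^2/c\})}{aT},\,\frac{1}{\gamma}\right\}$ and the two-case split (deferring the underlying derivation to the cited Lemma~2 of Stich), and your weighted telescoping via $w_t(1-a\mu)=w_{t-1}$ together with the identity $a\mu W_T=w_T-1$, yielding $\frac{b}{W_T}\sum_{t=0}^{T}w_ts_t+a\,r_{T+1}\le\frac{a\,r_0}{w_T-1}+c\mu$, is exactly the standard argument the paper omits. Note that your final bound carries the exponent $\exp(-aT/\gamma)$, which matches the paper's second case and the original reference; the $\exp[-aT/b]$ in the displayed lemma statement is a typo, so your derivation in fact confirms the corrected form.
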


\newpage
\section{Convergence of \texttt{FedPVR}}
\label{sec:proof}
We first state the convergence theorem, then provide the proof for the convergence rate. 

\begin{remark}
Proving convergence for a randomly picked iterate $\bar{\boldsymbol{x}}^R \in \{\boldsymbol{x}^r\}_{r=0}^{R}$ is equivalent to show the convergence of a (weighted) average of the output criterion, e.g. $\frac{1}{R+1}\sum_{r=0}^{R}\mathbb{E}[f(\boldsymbol{x}^r)] - f^*$ for convex functions{\cite[Remark 17]{DBLP:journals/corr/abs-1909-05350}}. Following this, we assume there exists some weights of $\{w_r\}$ such that:
\begin{equation*}
    \bar{\boldsymbol{x}}^R = \boldsymbol{x}^{r-1} \quad \text{with probability} \quad  \frac{w_{r}}{\sum_{\tau} w_{\tau}} \quad \text{for} \quad  r\in \{1, ..., R+1\}
\end{equation*}
\end{remark}

\begin{theoremp}{II}
\label{theorem:general}
Suppose functions $\{f_i\}$ satisfies Assumption~\ref{assum:beta_smooth} and~\ref{assum:sigma_bound}. Then in each of the cases, there exist weights $\{w_r\}$ and local step size $\eta_l$ such that for any $\eta_g\geq 1$, the output of \texttt{FedPVR}, satisfies:
\begin{itemize}
    \item \textbf{Strongly convex}: $f_i$ satisfies Assumption.~\ref{assum:mu_convex} and~\ref{assum:zeta} for $\mu > 0$, $\eta_l \leq \min\left(\frac{1}{80K\eta_g\beta}, \frac{26}{20\mu K\eta_g}\right)$, $R\geq\max\left(\frac{20}{13}, \frac{160\beta}{\mu}\right)$, then
    \begin{equation}
        \mathbb{E}[f(\bar{\boldsymbol{x}}^R)] - f(\boldsymbol{x}^*) \leq \tilde{\mathcal{O}}\left(\frac{\sigma^2}{\mu NKR}(1+\frac{N}{\eta_g^2}) + \frac{\zeta_{1-p}^2}{\mu R} + \mu D\exp\left(-\min\left\{\frac{13}{20}, \frac{\mu}{160\beta}\right\}R \right) \right) \,.
    \end{equation}
    \vspace*{-4mm}
    \item \textbf{General convex}: $f_i$ satisfies Assumption.~\ref{assum:mu_convex} and~\ref{assum:zeta} for $\mu=0$, $\eta_l \leq \frac{1}{80K\eta_g\beta}$, then:
    \begin{equation}
        \mathbb{E}[f(\bar{\boldsymbol{x}}^R)] - f(\boldsymbol{x}^*) \leq \mathcal{O}\left(\frac{\sigma \sqrt{D}}{\sqrt{RKN}}\sqrt{1+\frac{N}{\eta_g^2}} + \frac{\zeta_{1-p}\sqrt{D}}{\sqrt{R}} + \frac{\beta D}{R} + F\right)  \,.  
    \end{equation}
    \item \textbf{Non convex}: $f_i$ satisfies Assumption.~\ref{assum:zeta_hat}, $\eta_l \leq \frac{1}{26K\eta_g\beta}$, and $R\geq 1$, then:
    \begin{equation}
        \mathbb{E}||\nabla f(\bar{\boldsymbol{x}}^R)||^2\leq \mathcal{O}\left(\frac{\sigma\sqrt{F}}{\sqrt{KNR}}\sqrt{\beta(\frac{N}{\eta_g^2} + 1)} + \frac{\hat{\zeta}_{1-p}\sqrt{F}}{\sqrt{R}}\sqrt{\frac{\beta}{\eta_g^2}} + \frac{\beta F}{R} \right)
    \end{equation}
\end{itemize}
\vspace*{-4mm}
Here $D:= ||\boldsymbol{x}^0 - \boldsymbol{x}^*||^2$ and $F:= f(\boldsymbol{x}^0) - f(\boldsymbol{x}^*)$
\end{theoremp}

Instead of showing the convergence dependent on the number of round $R$, we can also state the convergence dependent on the expected error $\epsilon$ with choosing $\eta_g = \sqrt{N}$. 

\begin{corollaryp}{I}
\label{theorem_convergence_appendix}
Suppose function $\{f_i\}$ satisfy Assumption~\ref{assum:beta_smooth} and Assumption~\ref{assum:sigma_bound}. Then the output of \texttt{FedPVR} has expected error smaller than $\epsilon$ for $\eta_g = \sqrt{N}$ and some values of $\eta_l$, $R$ satisfying:
\begin{itemize}
    \item \textbf{Strongly convex:} $f_i$ satisfies Assumption.~\ref{assum:mu_convex} and~\ref{assum:zeta} for $\mu > 0$, $\eta_l \leq \min\left(\frac{1}{80K\eta_g\beta}, \frac{26}{20\mu K\eta_g}\right)$
    \begin{equation}
        R = \tilde{\mathcal{O}}\left(\frac{\sigma^2}{\mu NK\epsilon} + \frac{\zeta_{1-p}^2}{\mu\epsilon} + \frac{\beta}{\mu}\right),
    \end{equation}
    \vspace*{-4mm}
    \item \textbf{General convex:} ($\mu=0$):  $f_i$ satisfies Assumption.~\ref{assum:mu_convex} and~\ref{assum:zeta} for $\mu=0$, $\eta_l \leq \frac{1}{80K\eta_g\beta}$,
    \begin{equation}
        R = \mathcal{O}\left(\frac{\sigma^2D}{KN\epsilon^2} + \frac{\zeta_{1-p}^2D}{\epsilon^2} + \frac{\beta D}{\epsilon}  + F\right),
    \end{equation}
    \item \textbf{Non-convex}: $f_i$ satisfies Assumption.~\ref{assum:zeta_hat}, $\eta_l \leq \frac{1}{26K\eta_g\beta}$, and $R\geq 1$, then:
    \begin{equation}
        R=\mathcal{O}\left(\frac{\beta\sigma^2F}{KN\epsilon^2} + \frac{\beta\hat{\zeta}_{1-p}^2F}{N\epsilon^2} + \frac{\beta F}{\epsilon} \right),
    \end{equation}
\end{itemize}
\vspace*{-4mm}
Where $D := ||\boldsymbol{x}^0 - \boldsymbol{x}^*||^2$ and $F:=f(\boldsymbol{x}^0) - f^*$.
\end{corollaryp}

In the special case $\boldsymbol{p}=\boldsymbol{1}$ ($\zeta_{1-p}^2 = 0, \hat{\zeta}_{1-p}^2 = 0$), then FedPVR is identical to Scaffold and we recover their convergence guarantees. In the strongly convex case, the effect of the heterogeneity of the block of weights that are not variance-reduced ($\zeta_{1-p}^2$) becomes negligible if $\tilde{\mathcal{O}}\left(\frac{\zeta_{1-p}^2}{\epsilon}\right)$ is sufficiently smaller than $\tilde{\mathcal{O}}\left(\frac{\sigma^2}{NK\epsilon}\right)$. In such case, our rate becomes $\frac{\sigma^2}{\mu NK\epsilon} + \frac{1}{\mu}$, which recovers the SCAFFOLD in the strongly convex without sampling and further matches the SGD (with mini-batch size $K$ on each worker), proving that it is at least as fast as the SGD.

\begin{remark}[heterogeneity-diversity]
Theoretically, in the non-convex case, the heterogeneity of the block of the weights that are not variance-reduced $\hat{\zeta}_{1-p}:=\frac{1}{N}\sum_{i=1}^N||(\boldsymbol{1}-\boldsymbol{p})\odot \nabla f_i(\boldsymbol{x})||^2$  may slow down the convergence. In the manuscript, we observe that the diversity of the block of weights that are not variance-reduced $\xi^r:=\frac{\sum_{i=1}^N||(\boldsymbol{1}-\boldsymbol{p})\odot(\boldsymbol{y}_{i,K}^r -  \boldsymbol{x}^{r-1})||^2}{||\sum_{i=1}^N (\boldsymbol{1}-\boldsymbol{p})\odot(\boldsymbol{y}_{i,K}^r - \boldsymbol{x}^{r-1})||^2}$ tends to increase the convergence speed. The above two observations do not necessarily disagree as we define $\hat{\zeta}_{1-p}$ and $\xi$ differently. The theory has not yet captured the phenomenon of the diversity of the non-variance-reduced weights improving convergence. 
\end{remark}

\vspace*{-0.5cm}

\subsection{Algorithm and additional definitions}

We write our algorithm using the following notions: $\{\boldsymbol{y}_i\}$ represents the client model, $\boldsymbol{x}$ is the aggregated server model, $\boldsymbol{c_i}$, and $\boldsymbol{c}$ are the client and server control variate. The server maintains a global control variate $\boldsymbol{c}$ and each client maintains its own control variate $\boldsymbol{c_i}$. $N$ is the total number of clients. All the clients participate each round.

To give the proof for the convergence rate and simplify the algorithm, we rewrite it as shown in Algorithm II. The algorithm is identical to Algorithm-1 in the main paper except that $\boldsymbol{c}_i$ and $\boldsymbol{c}$ are now vectors with the same length as the model parameters. However, we only update the block of weights where the corresponding value in $\boldsymbol{p}$ equals 1. This version of the algorithm may consume more bits during the communication between the client and server, but the convergence rate is identical to Algorithm-1 in the main manuscript. 

\begin{algorithm}[ht!]
\floatname{algorithm}{Algorithm II}
\small
\caption{Partial variance reduction (FedPVR)}
\algcomment{This algorithm is identical to Algorithm I in the manuscript in terms of convergence rate but may consume more bits during the communication between the clients and server.}
\label{append:alg}
\hspace*{\algorithmicindent} \textbf{server}: initialise the server model $\boldsymbol{x}$, control variate $\boldsymbol{c}=\boldsymbol{0}$, and global step size $\eta_g$

\hspace*{\algorithmicindent} \textbf{client}: initialise the control variate $\boldsymbol{c}_i=\boldsymbol{0}$, and local step size $\eta_l$ 

\hspace*{\algorithmicindent} \textbf{general}: set a binary mask $\boldsymbol{p} \in \{0, 1\}^d$  

\label{Algorithm:l_scaffold_appendix}
\begin{algorithmic}[1]
\Procedure{Model updating}{}
    \For {$r = 1 \to R $}
    \State \textbf{Communicate} $\boldsymbol{x}$ \textbf{and} $\boldsymbol{c}$ \textbf{to all clients} $i \in [N]$
    \For {\textbf{client $i=1\rightarrow N$ in parallel}}
        \State $\boldsymbol{y}_i \leftarrow \boldsymbol{x}$
        \For {$k = 1 \to K$}
            \State compute minibatch gradient $g_i(\boldsymbol{y}_i)$
            \State $\boldsymbol{y}_i \leftarrow \boldsymbol{y}_{i} - \eta_l(g_i(\boldsymbol{y}_{i}) - \boldsymbol{c_i} + \boldsymbol{c})$
        \EndFor
        \State $\boldsymbol{c}_{i} \leftarrow \boldsymbol{c}_{i} - \boldsymbol{c} + \frac{1}{K\eta_l}\boldsymbol{p}\odot(\boldsymbol{x} - \boldsymbol{y}_{i})$
        \State \textbf{Communicate} $\boldsymbol{y}_{i}, \boldsymbol{c}_i$ \textbf{to the server}
    \EndFor
    \State $\boldsymbol{x} \leftarrow (1-\eta_g)\boldsymbol{x} + \frac{1}{N}\sum_{i=1}^N\boldsymbol{y}_{i}$
    \State $\boldsymbol{c} \leftarrow \frac{1}{N}\sum_{i=1}^N\boldsymbol{c}_{i}$
    \EndFor
\EndProcedure
\Statex
\end{algorithmic}
  \vspace{-0.4cm}%
\end{algorithm}

Given the above algorithm, every client performs the following updates:
\begin{itemize}
    \item Starting from the shared global parameters $\boldsymbol{y}_{i,0}^r = \boldsymbol{x}^{r-1}$, we update the local parameters:
    \begin{equation}
        \boldsymbol{y}_{i, k}^r = \boldsymbol{y}_{i, k-1}^r - \eta_l \boldsymbol{v}_{i,k}^r, \quad \text{where} \quad \boldsymbol{v}_{i,k}^r:=g_i(\boldsymbol{y}_{i,k-1}^r) - \boldsymbol{c}_i^{r-1} + \boldsymbol{c}^{r-1} \,.
    \label{eq:update_0}
    \end{equation}
    
    \item update the control variate
    \vspace*{-0.1cm}
     \begin{equation}
        \boldsymbol{c}_i^r = \boldsymbol{c}_i^{r-1} - \boldsymbol{c}^{r-1} + \frac{1}{K\eta_l}\boldsymbol{p}\odot(\boldsymbol{x}^{r-1} - \boldsymbol{y}_{i, K}^r) \,.
        \label{eq:update_1}
    \end{equation}
    
    \item compute the new global model and global control variate:
    \vspace*{-0.1cm}
    \begin{equation}
        \boldsymbol{x}^{r} = \boldsymbol{x}^{r-1} + \frac{\eta_g}{N}\sum_{i=1}^N(\boldsymbol{y}_{i, K}^r - \boldsymbol{x}^{r-1}) \quad \text{and} \quad \boldsymbol{c}^{r} = \frac{1}{N}\sum_{i=1}^N\boldsymbol{c}_i^r \,.
        \label{eq:update_2}
    \end{equation}
\end{itemize}

\begin{definition}
We define the client drift $\mathcal{E}_r$ to be the amount of the movement between a client model $\boldsymbol{y}_{i,k}^r$ and the starting server model $\boldsymbol{x}^{r-1}$
\begin{equation}
    \mathcal{E}_r := \frac{1}{NK}\sum_{i=1}^N\sum_{k=1}^K\mathbb{E}||\boldsymbol{y}_{i,k}^r - \boldsymbol{x}^{r-1}||^2\,.
\end{equation}
\end{definition}

\begin{definition}
We define $C_r$ as:
\begin{equation}
    C_r := \frac{1}{N}\sum_{i=1}^N\mathbb{E}||\mathbb{E}[\boldsymbol{c}_i^r]- \nabla f_i(\boldsymbol{x}^*)||^2 \,.
\end{equation}
\end{definition}

\begin{definition}
We define the effective learning rate $\tilde{\eta}$ as:
\begin{equation}
    \tilde{\eta} = K\eta_l\eta_g \,.
\end{equation}
\end{definition}

\vspace*{-2mm}

\subsection{Proof for the convergence rate for convex functions}
\begin{lemma}
For updates~\ref{eq:update_0} to~\ref{eq:update_2}, we can bound the variance of the server update in any round $r$ and any $\tilde{\eta}:=\eta_l\eta_gK \geq 0$ as follows
\begin{equation}
\mathbb{E}[||\boldsymbol{x}^r - \boldsymbol{x}^{r-1}||^2] \leq 8\tilde{\eta}^2\beta^2\mathcal{E}_r + 8\tilde{\eta}^2\beta(\mathbb{E}[f(\boldsymbol{x}^{r-1}) - f(\boldsymbol{x}^*)]) + 16\tilde{\eta}^2C_{r-1} + \frac{8\tilde{\eta}^2\sigma^2}{KN} + \frac{16\tilde{\eta}^2\sigma_p^2}{KN} \,.
\end{equation}
\label{lemma1_variance}
\end{lemma}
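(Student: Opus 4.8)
The plan is to reduce everything to a second-moment estimate on the aggregated local drift. Telescoping the local steps~\eqref{eq:update_0} and substituting into the server update~\eqref{eq:update_2} gives $\boldsymbol{x}^r-\boldsymbol{x}^{r-1}=-\frac{\eta_g\eta_l}{N}\sum_{i=1}^N\sum_{k=1}^K\boldsymbol{v}_{i,k}^r=-\frac{\tilde{\eta}}{KN}\sum_{i,k}\boldsymbol{v}_{i,k}^r$, where $\tilde{\eta}=K\eta_l\eta_g$. Hence $\mathbb{E}\|\boldsymbol{x}^r-\boldsymbol{x}^{r-1}\|^2=\frac{\tilde{\eta}^2}{(KN)^2}\,\mathbb{E}\|\sum_{i,k}\boldsymbol{v}_{i,k}^r\|^2$, and the whole proof amounts to controlling $\mathbb{E}\|\sum_{i,k}\boldsymbol{v}_{i,k}^r\|^2$, which I would do by separating its conditional mean from its fluctuation.

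First I would split $\boldsymbol{v}_{i,k}^r=(g_i(\boldsymbol{y}_{i,k-1}^r)-\nabla f_i(\boldsymbol{y}_{i,k-1}^r))+\boldsymbol{\Xi}_{i,k}^r$ with conditional mean $\boldsymbol{\Xi}_{i,k}^r:=\nabla f_i(\boldsymbol{y}_{i,k-1}^r)-\boldsymbol{c}_i^{r-1}+\boldsymbol{c}^{r-1}$, and invoke Lemma~\ref{lemma:separate_mean_var}. The fresh round-$r$ minibatch noise is conditionally zero-mean given the past iterates, so over the $K$ local steps it is additive (martingale orthogonality) and across the $N$ independent clients it is additive again; with Assumption~\ref{assum:sigma_bound} this contributes at most $NK\sigma^2$, which after the $\frac{\tilde{\eta}^2}{(KN)^2}$ prefactor yields the $\frac{\tilde{\eta}^2\sigma^2}{KN}$ term. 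The second, $\sigma_p^2$, term comes from the fact that $\boldsymbol{c}_i^{r-1}$ is itself random from the previous round and supported only on the masked coordinates. Using the control-variate identity that follows from cancelling the $\boldsymbol{c}_i^{r-1},\boldsymbol{c}^{r-1}$ terms in~\eqref{eq:update_1}, namely $\boldsymbol{c}_i^{r-1}=\frac{1}{K}\sum_{k}\boldsymbol{p}\odot g_i(\boldsymbol{y}_{i,k-1}^{r-1})$ on the masked block and $\boldsymbol{0}$ elsewhere, the variance of $\boldsymbol{c}_i^{r-1}$ about its mean is bounded by $\frac{1}{K^2}\sum_k\mathbb{E}\|\boldsymbol{p}\odot(g_i-\nabla f_i)\|^2\leq\frac{\sigma_p^2}{K}$ (masked bounded-variance Proposition plus martingale additivity); combined with client independence this gives the $\frac{\tilde{\eta}^2\sigma_p^2}{KN}$ contribution.

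Next I would bound the mean part $\|\sum_{i,k}\boldsymbol{\Xi}_{i,k}^r\|^2$. Inserting $\pm\nabla f_i(\boldsymbol{x}^{r-1})$ and $\pm\nabla f_i(\boldsymbol{x}^*)$ and applying the relaxed triangle inequality (Lemma~\ref{lemma:triangle}) splits it into three pieces: a drift piece $\nabla f_i(\boldsymbol{y}_{i,k-1}^r)-\nabla f_i(\boldsymbol{x}^{r-1})$, a suboptimality piece $\nabla f_i(\boldsymbol{x}^{r-1})-\nabla f_i(\boldsymbol{x}^*)$, and a control-variate piece $\nabla f_i(\boldsymbol{x}^*)-\mathbb{E}[\boldsymbol{c}_i^{r-1}]+\mathbb{E}[\boldsymbol{c}^{r-1}]$. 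The first is controlled by $\beta\|\boldsymbol{y}_{i,k-1}^r-\boldsymbol{x}^{r-1}\|$ via $\beta$-smoothness (Assumption~\ref{assum:beta_smooth}) and summed into $\beta^2\mathcal{E}_r$; the second is handled by the smooth-convex consequence $\frac{1}{N}\sum_i\|\nabla f_i(\boldsymbol{x}^{r-1})-\nabla f_i(\boldsymbol{x}^*)\|^2\leq 2\beta(f(\boldsymbol{x}^{r-1})-f^*)$ from the Remark, producing $\beta\,\mathbb{E}[f(\boldsymbol{x}^{r-1})-f(\boldsymbol{x}^*)]$; and the mean part of the third collapses to $C_{r-1}$ by its definition, using $\boldsymbol{c}^{r-1}=\frac{1}{N}\sum_i\boldsymbol{c}_i^{r-1}$ and $\sum_i\nabla f_i(\boldsymbol{x}^*)=\boldsymbol{0}$. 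Collecting the two noise pieces and the three mean pieces, and absorbing the numerical factors generated by the repeated triangle inequalities, reproduces the stated coefficients $8,8,16$.

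I expect the main obstacle to be the clean bookkeeping of the two distinct variance scales and their respective contraction factors. One must keep the fresh round-$r$ minibatch noise (scaling with the full $\sigma^2$) separate from the inherited round-$(r-1)$ control-variate noise (scaling with the masked $\sigma_p^2$), and argue that \emph{both} contract by $\frac{1}{KN}$: the former because the per-step noise is a conditional martingale difference that is additive across the $K$ steps and the $N$ clients, and the latter because the control-variate identity turns $\boldsymbol{c}_i^{r-1}$ into a $K$-step average of masked stochastic gradients, giving it variance $\frac{\sigma_p^2}{K}$ before the $\frac{1}{N}$ from client independence. Justifying these reductions requires careful conditioning on the filtration so that Lemma~\ref{lemma:separate_mean_var} can be applied in both its independent and its conditionally-dependent forms; once that is in place the remaining estimates are routine smoothness and triangle-inequality calculations.
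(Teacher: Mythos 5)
Your proposal is correct and follows essentially the same route as the paper's proof: the same server-update identity $\boldsymbol{x}^r-\boldsymbol{x}^{r-1}=-\tfrac{\tilde{\eta}}{KN}\sum_{i,k}\boldsymbol{v}_{i,k}^r$, the same decomposition into drift, suboptimality, and control-variate pieces anchored at $\nabla f_i(\boldsymbol{x}^{r-1})$ and $\nabla f_i(\boldsymbol{x}^*)$, and the same ingredients (Lemma~\ref{lemma:triangle}, Lemma~\ref{lemma:separate_mean_var}, the smooth-convex bound $\frac{1}{N}\sum_i\|\nabla f_i(\boldsymbol{x})-\nabla f_i(\boldsymbol{x}^*)\|^2\le 2\beta(f(\boldsymbol{x})-f^*)$, the identity $\boldsymbol{c}_i=\frac{1}{K}\sum_k\boldsymbol{p}\odot g_i(\boldsymbol{y}_{i,k-1})$ with masked variance $\sigma_p^2/K$). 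The only difference is cosmetic ordering — you peel off the fresh minibatch noise before splitting the conditional mean, whereas the paper applies the four-way triangle inequality first and separates mean and variance inside the resulting terms — which yields the same coefficients up to the same slack.
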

\vspace*{-8mm}

\begin{proof}
We know that $\boldsymbol{y}_{i,k}^r = \boldsymbol{x}^{r-1} - \eta_l \sum_k\boldsymbol{v}_{i,k}^r$, where $\boldsymbol{v}_{i,k}^r = g_i(\boldsymbol{y}_{i,k-1}^r) - \boldsymbol{c}_i + \boldsymbol{c}$, and $\boldsymbol{x}^r = \boldsymbol{x}^{r-1} + \frac{\eta_g}{N} \sum_i (\boldsymbol{x}^{r-1} - \eta_l \sum_k\boldsymbol{v}_{i,k}^r - \boldsymbol{x}^{r-1})$, so given $\tilde{\eta} = \eta_l\eta_g K$
\begin{equation}
\begin{split}
    \mathbb{E}[||\boldsymbol{x}^r - \boldsymbol{x}^{r-1}||^2] &= \mathbb{E}[||\frac{\eta_g\eta_l}{N}\sum_{i, k}\boldsymbol{v}_{i,k}^r||^2] = \mathbb{E}[||\frac{\tilde{\eta}}{KN}\sum_{i, k} g_i(\boldsymbol{y}_{i,k-1}^r) - \boldsymbol{c}_i^{r-1} + \boldsymbol{c}^{r-1}||^2]\,.
\end{split}
\end{equation}
We drop the round index $r$ and $r-1$ everywhere, we have:
\vspace*{-0.1mm}
\begin{equation}
\begin{split}
    \mathbb{E}[||\Delta \boldsymbol{x}||^2] 
    &\leq 4\tilde{\eta}^2\mathbb{E}||\frac{1}{KN}\sum_{i, k} g_i(\boldsymbol{y}_{i,k-1}) - \nabla f_i(\boldsymbol{x})||^2 + 4\tilde{\eta}^2\mathbb{E}||\frac{1}{N}\sum_i\nabla f_i(\boldsymbol{x}) - \nabla f_i(\boldsymbol{x}^*)||^2 \\
    &\quad + 4\tilde{\eta}^2\mathbb{E}||\frac{1}{N}\sum_i\boldsymbol{c}_i - \nabla f_i(\boldsymbol{x}^*)||^2 + 4\tilde{\eta}^2\mathbb{E}||\boldsymbol{c}||^2 \\
    &\leq 8\tilde{\eta}^2\mathbb{E}||\frac{1}{NK}\sum_{i, k} \nabla f_i(\boldsymbol{y}_{i,k-1}) - \nabla f_i(\boldsymbol{x})||^2 + \frac{4\tilde{\eta}^2}{N}\sum_i\mathbb{E}||\nabla f_i(\boldsymbol{x}) - \nabla f_i(\boldsymbol{x}^*)||^2 \\
    &\quad + 4\tilde{\eta}^2\mathbb{E}||\frac{1}{N}\sum_i\boldsymbol{c}_i-\nabla f_i(\boldsymbol{x}^*)||^2 + 4\tilde{\eta}^2\mathbb{E}||\boldsymbol{c}||^2 + \frac{8\tilde{\eta}^2\sigma^2}{KN}\\
    &\leq \frac{8\tilde{\eta}^2}{NK}\sum_{i, k}\mathbb{E}||\nabla f_i(\boldsymbol{y}_{i,k-1}) - \nabla f_i(\boldsymbol{x})||^2 + \frac{4\tilde{\eta}^2}{N}\sum_{i}\mathbb{E}||\nabla f_i(\boldsymbol{x}) - \nabla f_i(\boldsymbol{x}^*)||^2 \\
    &\quad + 8\tilde{\eta}^2\mathbb{E}||\frac{1}{N}\sum_i\boldsymbol{c}_i - \nabla f_i(\boldsymbol{x}^*)||^2 + \frac{8\tilde{\eta}^2\sigma^2}{KN} \\
    &\leq 8\tilde{\eta}^2\beta^2\mathcal{E}_r + 8\tilde{\eta}^2\beta(\mathbb{E}[f(\boldsymbol{x})] - f(\boldsymbol{x}^*)) + 16\tilde{\eta}^2\frac{1}{N}\sum_i\mathbb{E}||\mathbb{E}[\boldsymbol{c}_i] - \nabla f_i(\boldsymbol{x}^*)||^2 \\
    &\quad + \frac{8\tilde{\eta}^2\sigma^2}{KN} + \frac{16\tilde{\eta}^2\sigma_p^2}{KN} \\
    &\leq 8\tilde{\eta}^2\beta^2\mathcal{E}_r + 8\tilde{\eta}^2\beta(\mathbb{E}[f(\boldsymbol{x}) - f(\boldsymbol{x}^*)]) + 16\tilde{\eta}^2C_{r-1} + \frac{8\tilde{\eta}^2\sigma^2}{KN} + \frac{16\tilde{\eta}^2\sigma_p^2}{KN}\,. 
\end{split}    
\end{equation}

The second step uses Jensen inequality. The third step uses Lemma~\ref{lemma:separate_mean_var} for separating the mean and variance and Lemma~\ref{lemma:triangle}. The variance of $\frac{1}{KN}\sum_{i, k}g_i(\boldsymbol{y}_{i,k})$ is bounded by $\frac{\sigma^2}{KN}$, the variance of $\boldsymbol{c}_i = \frac{1}{K}\sum_k\boldsymbol{p}\odot g_i(\boldsymbol{y}_{i,k})$ is bounded by $\frac{\sigma_p^2}{K}$. The fourth step uses Lemma.~\ref{lemma:triangle} and by definition $\boldsymbol{c} = \frac{1}{N}\sum_i\boldsymbol{c}_i$. the last second step uses the smoothness and convexity of the function. The last step uses the definition of $C_r$ and completes the proof.
\end{proof}

\begin{lemma}
For updates (~\ref{eq:update_0}) - (~\ref{eq:update_2}) with control variate update~\ref{eq:update_1} and Assumption.~\ref{assum:mu_convex} to Assumption.~\ref{assum:zeta}, the following holds true for any $\tilde{\eta} := \eta_l\eta_gK \in [0, \frac{1}{\beta}]$
\begin{equation}
    C_r \leq 4\beta_p^2\mathcal{E}_r + 8\beta_p(\mathbb{E}[f(\boldsymbol{x})] - f(\boldsymbol{x}^*)) + 2\zeta_{1-p}^2
\end{equation}
\label{lemma:cr}
\end{lemma}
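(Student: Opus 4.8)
The plan is to first reduce the control variate to an explicit mask-restricted form, and then bound its deviation from $\nabla f_i(\boldsymbol{x}^*)$ by repeated use of the relaxed triangle inequality (Lemma~\ref{lemma:triangle}) together with smoothness. Starting from the update~\eqref{eq:update_1} and substituting $\boldsymbol{x}^{r-1}-\boldsymbol{y}_{i,K}^r=\eta_l\sum_{k=1}^K\boldsymbol{v}_{i,k}^r$ with $\boldsymbol{v}_{i,k}^r=g_i(\boldsymbol{y}_{i,k-1}^r)-\boldsymbol{c}_i^{r-1}+\boldsymbol{c}^{r-1}$, the $\boldsymbol{c}_i^{r-1}$ and $\boldsymbol{c}^{r-1}$ terms cancel on the $\boldsymbol{p}$-block, giving $\boldsymbol{c}_i^r=(\boldsymbol{1}-\boldsymbol{p})\odot(\boldsymbol{c}_i^{r-1}-\boldsymbol{c}^{r-1})+\boldsymbol{p}\odot\frac1K\sum_{k=1}^K g_i(\boldsymbol{y}_{i,k-1}^r)$. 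Since $\boldsymbol{c}_i^0=\boldsymbol{c}^0=\boldsymbol{0}$, a short induction shows $(\boldsymbol{1}-\boldsymbol{p})\odot\boldsymbol{c}_i^r=\boldsymbol{0}$ for every $r$, hence $\boldsymbol{c}_i^r=\boldsymbol{p}\odot\frac1K\sum_k g_i(\boldsymbol{y}_{i,k-1}^r)$, and taking the conditional expectation over the sampling noise and using unbiasedness (Assumption~\ref{assum:sigma_bound}) yields $\mathbb{E}[\boldsymbol{c}_i^r]=\boldsymbol{p}\odot\frac1K\sum_k\nabla f_i(\boldsymbol{y}_{i,k-1}^r)$.

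Next I would decompose $\mathbb{E}[\boldsymbol{c}_i^r]-\nabla f_i(\boldsymbol{x}^*)$ across the two complementary blocks. Writing $\nabla f_i(\boldsymbol{x}^*)=\boldsymbol{p}\odot\nabla f_i(\boldsymbol{x}^*)+(\boldsymbol{1}-\boldsymbol{p})\odot\nabla f_i(\boldsymbol{x}^*)$ and applying Lemma~\ref{lemma:triangle} (with $\alpha=1$) gives $\mathbb{E}\|\mathbb{E}[\boldsymbol{c}_i^r]-\nabla f_i(\boldsymbol{x}^*)\|^2\le 2\,\mathbb{E}\|\boldsymbol{p}\odot\frac1K\sum_k(\nabla f_i(\boldsymbol{y}_{i,k-1}^r)-\nabla f_i(\boldsymbol{x}^*))\|^2+2\|(\boldsymbol{1}-\boldsymbol{p})\odot\nabla f_i(\boldsymbol{x}^*)\|^2$. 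Averaging the second summand over $i$ produces exactly $2\zeta_{1-p}^2$ by definition. For the first summand I would pull the average over $k$ outside the norm by convexity of $\|\cdot\|^2$, and then split each term through the round anchor $\boldsymbol{x}^{r-1}$, namely $\nabla f_i(\boldsymbol{y}_{i,k-1}^r)-\nabla f_i(\boldsymbol{x}^*)=\bigl(\nabla f_i(\boldsymbol{y}_{i,k-1}^r)-\nabla f_i(\boldsymbol{x}^{r-1})\bigr)+\bigl(\nabla f_i(\boldsymbol{x}^{r-1})-\nabla f_i(\boldsymbol{x}^*)\bigr)$, applying Lemma~\ref{lemma:triangle} once more.

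The two resulting pieces are then controlled by the two smoothness notions. The first piece obeys $\|\boldsymbol{p}\odot(\nabla f_i(\boldsymbol{y}_{i,k-1}^r)-\nabla f_i(\boldsymbol{x}^{r-1}))\|^2\le\beta_p^2\|\boldsymbol{y}_{i,k-1}^r-\boldsymbol{x}^{r-1}\|^2$ by $\beta_p$-smoothness; after the accumulated factor $4$ and averaging over $i,k$ it collapses to $4\beta_p^2\mathcal{E}_r$, where I would note the harmless reindexing $k\mapsto k-1$ (the $\boldsymbol{y}_{i,0}^r=\boldsymbol{x}^{r-1}$ term vanishes, so the shifted sum is bounded by $\mathcal{E}_r$). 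The second piece requires a masked analogue of the smoothness remark, namely $\frac1N\sum_i\|\boldsymbol{p}\odot(\nabla f_i(\boldsymbol{x}^{r-1})-\nabla f_i(\boldsymbol{x}^*))\|^2\le 2\beta_p(\mathbb{E}[f(\boldsymbol{x}^{r-1})]-f^*)$; with the factor $4$ this produces the $8\beta_p(\mathbb{E}[f(\boldsymbol{x})]-f^*)$ term, and collecting the three contributions gives the claim.

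The main obstacle is precisely the masked smoothness remark used for the second piece, since $\boldsymbol{p}\odot\nabla f_i$ is not itself the gradient of a smooth convex function, and the naive bound $\|\boldsymbol{p}\odot\boldsymbol{z}\|\le\|\boldsymbol{z}\|$ only recovers the weaker constant $\beta$ rather than $\beta_p$. To obtain $\beta_p$ I would prove a masked co-coercivity per client: fixing $i$, set $g(\boldsymbol{z}):=f_i(\boldsymbol{z})-\langle\nabla f_i(\boldsymbol{x}^*),\boldsymbol{z}\rangle$, which is convex and minimized at $\boldsymbol{x}^*$, and analyze the one-dimensional map $\gamma\mapsto g(\boldsymbol{x}-\gamma\,\boldsymbol{d})$ with $\boldsymbol{d}:=\boldsymbol{p}\odot\nabla g(\boldsymbol{x})$. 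Because the search direction lies in the $\boldsymbol{p}$-block, the $\beta_p$-smoothness bound makes this map have a $\beta_p\|\boldsymbol{d}\|^2$-Lipschitz derivative, and optimizing $\gamma=1/\beta_p$ yields $\|\boldsymbol{p}\odot\nabla g(\boldsymbol{x})\|^2\le 2\beta_p(g(\boldsymbol{x})-g(\boldsymbol{x}^*))$; summing over $i$ and using $\frac1N\sum_i\nabla f_i(\boldsymbol{x}^*)=\boldsymbol{0}$ gives the required inequality. I expect this one-dimensional descent argument to be the only genuinely non-routine step, with everything else amounting to bookkeeping with Lemma~\ref{lemma:triangle} and the definitions of $\mathcal{E}_r$ and $\zeta_{1-p}^2$.
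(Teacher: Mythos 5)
Your proposal is correct and follows essentially the same route as the paper's proof: the same telescoping simplification $\boldsymbol{c}_i^r=\frac{1}{K}\sum_k\boldsymbol{p}\odot g_i(\boldsymbol{y}_{i,k-1}^r)$, the same two applications of the relaxed triangle inequality (splitting off $(\boldsymbol{1}-\boldsymbol{p})\odot\nabla f_i(\boldsymbol{x}^*)$ to produce $2\zeta_{1-p}^2$, then passing through the anchor $\boldsymbol{x}^{r-1}$), and the same final bounds giving $4\beta_p^2\mathcal{E}_r+8\beta_p(\mathbb{E}[f(\boldsymbol{x})]-f(\boldsymbol{x}^*))+2\zeta_{1-p}^2$. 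The only place you go beyond the paper is the one-dimensional descent argument proving the masked co-coercivity $\frac{1}{N}\sum_i\|\boldsymbol{p}\odot(\nabla f_i(\boldsymbol{x})-\nabla f_i(\boldsymbol{x}^*))\|^2\le 2\beta_p\,(f(\boldsymbol{x})-f(\boldsymbol{x}^*))$ with constant $\beta_p$ rather than $\beta$; the paper simply invokes ``smoothness and convexity'' at that step, so your argument makes explicit a justification the paper leaves implicit.
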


\vspace*{-1cm}

\begin{proof}
We first simplify the expression for $\boldsymbol{c}_i$.

\begin{equation}
\begin{split}
    \boldsymbol{c}_i^r &= \boldsymbol{c}_i^{r-1} - \boldsymbol{c}^{r-1} + \frac{1}{K\eta_l}\boldsymbol{p}\odot(\boldsymbol{x}^{r-1} - \boldsymbol{y}_{i,K}^r) \\
    &= \boldsymbol{c}_i^{r-1} - \boldsymbol{c}^{r-1} + \frac{1}{K\eta_l}\boldsymbol{p}\odot(\eta_l\sum_{k}(g_i(\boldsymbol{y}_{i,k}^r) - \boldsymbol{c}_i^{r-1} + \boldsymbol{c}^{r-1})) \\
    &= (\boldsymbol{1}-\boldsymbol{p})\odot(\boldsymbol{c}_i^{r-1} - \boldsymbol{c}^{r-1}) + \frac{1}{K}\sum_{k}\boldsymbol{p}\odot g_i(\boldsymbol{y}_{i,k}^r) \\
    &= \frac{1}{K}\sum_{k}\boldsymbol{p}\odot g_i(\boldsymbol{y}_{i,k-1}^r)\,.
\end{split}
\end{equation}

The zero element in $(\boldsymbol{1} - \boldsymbol{p})$ is corresponding non-zero element in $\boldsymbol{c}_i$, so the element-wise multiplication results in zero. Taking the expectation on both side, we obtain: $\mathbb{E}[\boldsymbol{c}_i] = \frac{1}{K}\sum_{k}\nabla f_i(\boldsymbol{y}_{i,k})$

\vspace*{-4mm}
\begin{equation}
    \begin{split}
        C_r &= \frac{1}{N}\sum_{i}\mathbb{E}||\frac{1}{K}\sum_{k}\boldsymbol{p}\odot\nabla f_i(\boldsymbol{y}_{i,k}) - \nabla f_i(\boldsymbol{x}^*)||^2 \\
        &\leq \frac{2}{NK}\sum_{i, k}\mathbb{E}||\boldsymbol{p}\odot(\nabla f_i(\boldsymbol{y}_{i,k}) - \nabla f_i(\boldsymbol{x}^*))||^2 + \frac{2}{N}\sum_{i}\mathbb{E}||(\boldsymbol{1} - \boldsymbol{p})\odot\nabla f_i(\boldsymbol{x}^*)||^2 \\
        &\leq \frac{4}{NK}\sum_{i, k}\mathbb{E}||\boldsymbol{p}\odot(\nabla f_i(\boldsymbol{y}_{i,k}) - \nabla f_i(\boldsymbol{x}))||^2 + \frac{4}{N}\sum_{i}\mathbb{E}||\boldsymbol{p}\odot(\nabla f_i(\boldsymbol{x}) - \nabla f_i(\boldsymbol{x}^*))||^2 \\
        &\quad + \frac{2}{N}\sum_{i}\mathbb{E}||(\boldsymbol{1} - \boldsymbol{p})\odot\nabla f_i(\boldsymbol{x}^*)||^2 \\
        &\leq 4\beta_p^2\mathcal{E}_r + 8\beta_p(\mathbb{E}[f(\boldsymbol{x})] - f(\boldsymbol{x}^*)) + 2\zeta_{1-p}^2 \,. 
    \end{split}
\end{equation}
The first and second step uses Jensen inequality. The last step uses the smoothness and convexity of the function. The definition of the function heterogeneity $\zeta_{1-p}^2$ completes the proof.

\end{proof}

\vspace*{-6mm}

\begin{lemma}
Suppose $f_i$ satisfies the Assumption.~\ref{assum:mu_convex} to Assumption.~\ref{assum:sigma_bound}, then for any global $\eta_g\geq1$, we can bound the drift as:
\begin{equation}
    \mathcal{E}_r \leq \frac{36\tilde{\eta}^2}{\eta_g^2}C_{r-1} + \frac{18\tilde{\eta}^2\beta(\mathbb{E}[f(\boldsymbol{x}^{r-1})] - f(\boldsymbol{x}^*))}{\eta_g^2} + \frac{36\tilde{\eta}^2\sigma_p^2}{K\eta_g^2} + \frac{3\tilde{\eta}^2\sigma^2}{K\eta_g^2} \,.
\end{equation}
\label{lemma2_client_drift}
\end{lemma}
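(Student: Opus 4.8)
The plan is to bound the per-step quantity $\mathbb{E}\|\boldsymbol{y}_{i,k}^r - \boldsymbol{x}^{r-1}\|^2$ by a geometric recursion in $k$, then unroll it and average over $i$ and $k$ to recover $\mathcal{E}_r$. Throughout I drop the round index and write $\boldsymbol{x}=\boldsymbol{x}^{r-1}$, $\boldsymbol{c}_i=\boldsymbol{c}_i^{r-1}$, $\boldsymbol{c}=\boldsymbol{c}^{r-1}$. Starting from the local update $\boldsymbol{y}_{i,k}=\boldsymbol{y}_{i,k-1}-\eta_l(g_i(\boldsymbol{y}_{i,k-1})-\boldsymbol{c}_i+\boldsymbol{c})$, I would first split off the stochastic noise: since $g_i$ is unbiased with variance $\leq\sigma^2$ (Assumption~\ref{assum:sigma_bound}) and $\boldsymbol{c}_i$ is a sum of masked stochastic gradients from the previous round with variance $\leq\sigma_p^2/K$ (as computed inside the proof of Lemma~\ref{lemma:cr}), Lemma~\ref{lemma:separate_mean_var} lets me write $\mathbb{E}\|\boldsymbol{y}_{i,k}-\boldsymbol{x}\|^2$ as the squared norm of the deterministic (mean) update plus $O(\eta_l^2\sigma^2)$ and $O(\eta_l^2\sigma_p^2)$ noise contributions.

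Next I would apply the relaxed triangle inequality (Eq.~\ref{eq:triangle_2}) with $\alpha=\tfrac{1}{K-1}$ to the mean update, which keeps the coefficient of the previous iterate near $1$ while paying a factor $K$ on the step, producing
\begin{equation*}
\mathbb{E}\|\boldsymbol{y}_{i,k}-\boldsymbol{x}\|^2 \leq \left(1+\tfrac{1}{K-1}\right)\mathbb{E}\|\boldsymbol{y}_{i,k-1}-\boldsymbol{x}\|^2 + K\eta_l^2\,\mathbb{E}\big\|\nabla f_i(\boldsymbol{y}_{i,k-1})-\mathbb{E}[\boldsymbol{c}_i]+\mathbb{E}[\boldsymbol{c}]\big\|^2 + (\text{noise}).
\end{equation*}
The core of the argument is then to control the effective-gradient term. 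I would decompose $\nabla f_i(\boldsymbol{y}_{i,k-1})-\mathbb{E}[\boldsymbol{c}_i]+\mathbb{E}[\boldsymbol{c}]$ into (i) $\nabla f_i(\boldsymbol{y}_{i,k-1})-\nabla f_i(\boldsymbol{x})$, bounded by $\beta^2\|\boldsymbol{y}_{i,k-1}-\boldsymbol{x}\|^2$ via $\beta$-smoothness; (ii) $\nabla f_i(\boldsymbol{x})-\nabla f_i(\boldsymbol{x}^*)$, bounded after averaging over $i$ by a term of order $\beta(f(\boldsymbol{x})-f^*)$ using the Remark following Assumption~\ref{assum:beta_smooth}; and (iii) $\nabla f_i(\boldsymbol{x}^*)-\mathbb{E}[\boldsymbol{c}_i]+\mathbb{E}[\boldsymbol{c}]$, which collapses, using $\frac{1}{N}\sum_i\nabla f_i(\boldsymbol{x}^*)=0$ and $\boldsymbol{c}=\frac{1}{N}\sum_i\boldsymbol{c}_i$, into a quantity controlled by $C_{r-1}$.

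Finally I would unroll the recursion over $k=1,\dots,K$. The smoothness piece $\beta^2\|\boldsymbol{y}_{i,k-1}-\boldsymbol{x}\|^2$ is absorbed into the geometric factor, because $\eta_l\leq\frac{1}{80K\eta_g\beta}$ together with $\eta_g\geq1$ forces $K\eta_l^2\beta^2\ll\frac{1}{K-1}$; the geometric prefactor $\big(1+\tfrac{1}{K-1}\big)^{k-1}\leq e<3$ then turns each per-step contribution into a bounded constant, and averaging over $k\leq K$ and $i$ reassembles $\mathcal{E}_r$ on the left and collects the $C_{r-1}$, $\beta(f-f^*)$, $\sigma_p^2$ and $\sigma^2$ terms on the right. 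Rewriting $\eta_l^2K^2=\tilde{\eta}^2/\eta_g^2$ gives exactly the form stated in the lemma.

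The main obstacle I anticipate is the bookkeeping in step (iii): handling the two coupled noise sources ($\sigma^2$ from $g_i$ and $\sigma_p^2$ from $\boldsymbol{c}_i$) simultaneously, and correctly invoking the averaging identities $\boldsymbol{c}=\frac{1}{N}\sum_i\boldsymbol{c}_i$ and $\frac{1}{N}\sum_i\nabla f_i(\boldsymbol{x}^*)=0$ so that the per-client control-variate discrepancies combine into precisely $C_{r-1}$ rather than a larger quantity. Everything else is standard SCAFFOLD-style drift-recursion algebra, so the only real care is verifying that the constants ($36$, $18$, $36$, $3$) emerge with enough slack under the given step-size constraint.
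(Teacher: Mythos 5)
Your proposal follows the same overall skeleton as the paper's proof of Lemma~\ref{lemma2_client_drift} (per-step recursion, relaxed triangle inequality with $\alpha=\frac{1}{K-1}$, decomposition of the effective gradient around $\nabla f_i(\boldsymbol{x})$ and $\nabla f_i(\boldsymbol{x}^*)$, use of $\frac{1}{N}\sum_i\nabla f_i(\boldsymbol{x}^*)=0$ and $\boldsymbol{c}=\frac{1}{N}\sum_i\boldsymbol{c}_i$ to produce $C_{r-1}$, geometric sum bounded by $3K$, then average), but it differs at one genuine point. The paper never lets the smoothness term $\|\nabla f_i(\boldsymbol{y}_{i,k-1})-\nabla f_i(\boldsymbol{x})\|^2$ appear additively: it keeps the gradient difference attached to the iterate difference, i.e.\ bounds $\|\boldsymbol{y}_{i,k-1}-\eta_l\nabla f_i(\boldsymbol{y}_{i,k-1})+\eta_l\nabla f_i(\boldsymbol{x})-\boldsymbol{x}\|^2$ by the contraction Lemma~\ref{lemma:contrastive_mapping} (which needs convexity and $\eta_l\le 1/\beta$), and only the control-variate discrepancy $\mathcal{T}_3=\frac{1}{N}\sum_i\mathbb{E}\|\boldsymbol{c}_i-\nabla f_i(\boldsymbol{x})-\boldsymbol{c}\|^2$ is expanded around $\nabla f_i(\boldsymbol{x}^*)$, yielding $12C_{r-1}+6\beta(\mathbb{E}[f(\boldsymbol{x})]-f(\boldsymbol{x}^*))+12\sigma_p^2/K$. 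You instead detach the smoothness piece as $O(K\eta_l^2\beta^2)\|\boldsymbol{y}_{i,k-1}-\boldsymbol{x}\|^2$ and absorb it into the geometric factor using the step-size bound; this is exactly the device the paper itself uses in the \emph{non-convex} drift bound, Lemma~\ref{lemma:client_drift_non_convex}, where no contraction lemma is available. Your variant works: with $\eta_l\le\frac{1}{80K\eta_g\beta}$ the inflated factor $\left(1+\frac{1}{K-1}+3K\eta_l^2\beta^2\right)^K$ stays below $3$, and your three-way split reproduces the same per-step coefficients ($3\cdot 4C_{r-1}=12C_{r-1}$ and $3\cdot 2\beta(f-f^*)$), hence the same constants $36,18$ after unrolling. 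What your route buys is uniformity with the non-convex proof and no reliance on convexity at this step; what the paper's route buys is a cleaner recursion factor (and the unused $(1-\mu\eta_l)$ contraction).

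One point in your plan needs repair, though, and it is precisely the one you flagged as "bookkeeping": you cannot split off the $\boldsymbol{c}_i$-noise \emph{before} the triangle inequality. For $k\ge 2$ the iterate $\boldsymbol{y}_{i,k-1}$ is correlated with $\boldsymbol{c}_i$ (it was produced using it), so there is no conditional orthogonality, and the applicable form of Lemma~\ref{lemma:separate_mean_var} puts a factor $2$ on the \emph{mean} part — a factor $2$ multiplying the recursion coefficient compounds to $2^K$ and destroys the argument (a relaxed triangle inequality instead inflates the geometric factor and overshoots the stated constants). The fix is what the paper does: only the fresh $g_i(\boldsymbol{y}_{i,k-1})$ noise is peeled off up front (by conditioning, costing exactly $\eta_l^2\sigma^2$ with no factor on the mean), while $\boldsymbol{c}_i$ and $\boldsymbol{c}$ are carried intact through the triangle step and their mean--variance separation is performed \emph{inside} the $K\eta_l^2\,\mathcal{T}_3$ term, where the factor $2$ hits only $C_{r-1}$ and $\sigma_p^2/K$ (giving the $12$'s, hence $36$ and $3$ after the $3K$ sum). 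Also handle $K=1$ separately, since $\alpha=\frac{1}{K-1}$ is undefined there; as in the paper, the statement is trivial in that case.
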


\begin{proof}
\vspace*{-10mm}
If $K=1$, the lemma is trivially true as the left-hand side $\mathcal{E}_r$ = 0 and the right hand size are positive. For $K > 1$, we build the bound for $\mathcal{E}_r$ recursively with dropping the round index:
\begin{equation}
    \begin{split}
        \mathcal{T}_1 &= \frac{1}{N}\sum_{i}\mathbb{E}[||\boldsymbol{y}_{i,k} - \boldsymbol{x}||^2] = \frac{1}{N}\sum_{i}\mathbb{E}[||\boldsymbol{y}_{i, k-1} - \eta_lg(\boldsymbol{y}_{i, k-1}) + \eta_l\boldsymbol{c}_i - \eta_l\boldsymbol{c} - \boldsymbol{x}||^2] \\
        &\leq \frac{1}{N}\sum_{i}\mathbb{E}[||\boldsymbol{y}_{i, k-1} - \eta_l\nabla f_i(\boldsymbol{y}_{i, k-1}) + \eta_l\nabla f_i(\boldsymbol{x}) - \eta_l\nabla f_i(\boldsymbol{x}) - \boldsymbol{x} +\eta_l \boldsymbol{c}_i - \eta_l\boldsymbol{c}||^2] + \eta_l^2\sigma^2 \\
        &\overset{\ref{eq:triangle_2}}{\leq} \underbrace{\frac{1+a}{N}\sum_{i}\mathbb{E}[||\boldsymbol{y}_{i, k-1} - \eta_l\nabla f_i(\boldsymbol{y}_{i, k-1}) + \eta_l \nabla f_i(\boldsymbol{x}) - \boldsymbol{x}||^2]}_{\mathcal{T}_2} \\
        &\quad + (1+a^{-1})\eta_l^2\underbrace{\frac{1}{N}\sum_{i}\mathbb{E}[||\boldsymbol{c}_i - \nabla f_i(\boldsymbol{x}) - \boldsymbol{c}||^2]}_{\mathcal{T}_3} + \eta_l^2\sigma^2\,.
    \end{split}
\end{equation}

\begin{equation}
    \mathcal{T}_2 \overset{~\ref{eq:contrastive_mapping}}{\leq} \frac{1+a}{N}\sum_{i}\mathbb{E}[||\boldsymbol{y}_{i, k-1} - \boldsymbol{x}||^2]\,.
\end{equation}

\begin{equation}
    \begin{split}
        \mathcal{T}_3 &\leq \frac{3}{N}\sum_{i}\mathbb{E}||\boldsymbol{c}_i - \nabla f_i(\boldsymbol{x}^*)||^2 + 3\mathbb{E}||\boldsymbol{c}||^2 + \frac{3}{N}\sum_{i}\mathbb{E}||\nabla f_i(\boldsymbol{x}) - \nabla f_i(\boldsymbol{x}^*)||^2 \\ 
        &\leq \frac{6}{N}\sum_{i}\mathbb{E}||\boldsymbol{c}_i - \nabla f_i(\boldsymbol{x}^*)||^2 + \frac{3}{N}\sum_{i}\mathbb{E}||\nabla f_i(\boldsymbol{x}) - \nabla f_i(\boldsymbol{x}^*)||^2 \\
        &\leq \frac{12}{N}\sum_{i}\mathbb{E}||\mathbb{E}[\boldsymbol{c}_i] - \nabla f_i(\boldsymbol{x}^*)||^2 + \frac{3}{N}\sum_{i}\mathbb{E}||\nabla f_i(\boldsymbol{x}) - \nabla f_i(\boldsymbol{x}^*)||^2 + \frac{12\sigma_p^2}{K} \\
        &\leq 12C_{r-1} + 6\beta(\mathbb{E}[f(\boldsymbol{x})] - f(\boldsymbol{x}^*)) + \frac{12\sigma_p^2}{K} \,.
    \end{split}
\end{equation}

The first step uses Jensen inequality. The second steps uses the definition of $\boldsymbol{c} = \frac{1}{N}\sum_{i}\boldsymbol{c}_i$. The last step uses the smoothness and convexity of the function and the definition of $C_{r-1}$. Then using $a = \frac{1}{K-1}$, we can simplify $\mathcal{T}_1$ as:
\begin{equation}
\begin{split}
    \mathcal{T}_1 &= \frac{1}{N}\sum_{i}\mathbb{E}||\boldsymbol{y}_{i,k} - \boldsymbol{x}||^2 \\
    &\leq (1+\frac{1}{K-1})\frac{1}{N}\sum_{i}\mathbb{E}||\boldsymbol{y}_{i,k-1} - \boldsymbol{x}||^2 + 12K\eta_l^2C_{r-1} + 6K\eta_l^2\beta(\mathbb{E}[f(\boldsymbol{x})] - f(\boldsymbol{x}^*)) \\
    &\quad + 12\eta_l^2\sigma_p^2 + \eta_l^2\sigma^2 \,.
\end{split}
\label{eq:lemma_2_unrol}
\end{equation}

Unrolling the recursion~\ref{eq:lemma_2_unrol}, we get the following for any $k\in \{1, 2, ..., K\}$
\begin{equation}
\begin{split}
    \mathcal{T}_1 &\leq (12K\eta_l^2C_{r-1} + 6K\eta_l^2\beta(\mathbb{E}[f(\boldsymbol{x})] - f(\boldsymbol{x}^*)) + 12\eta_l^2\sigma_p^2 + \eta_l^2\sigma^2)\sum_{\tau=0}^k(1+\frac{1}{K-1})^\tau \\
    &\leq (12K\eta_l^2C_{r-1} + 6K\eta_l^2\beta(\mathbb{E}[f(\boldsymbol{x})] - f(\boldsymbol{x}^*)) + 12\eta_l^2\sigma_p^2 + \eta_l^2\sigma^2)3K \\
    &\leq \frac{36\tilde{\eta}^2}{\eta_g^2}C_{r-1} + \frac{18\tilde{\eta}^2\beta(\mathbb{E}[f(\boldsymbol{x})] - f(\boldsymbol{x}^*))}{\eta_g^2} + \frac{36\tilde{\eta}^2\sigma_p^2}{K\eta_g^2} + \frac{3\tilde{\eta}^2\sigma^2}{K\eta_g^2} \,.
\end{split}
\end{equation}

Averaging over $K$ yields the lemma statement.
\vspace*{-1mm}
\end{proof}
\newpage
\begin{lemma}
\textbf{Progress in one round}: Assume assumptions~\ref{assum:mu_convex}-\ref{assum:beta_smooth} are true. The following hold for any step size satisfying $\tilde{\eta} \leq \text{min}\{\frac{1}{80\beta}, \frac{26}{20\mu} \}$, $\eta_g\geq 1$ and $\tilde{\eta} := K\eta_l\eta_g$ :

\begin{equation}
\begin{split}
    \mathbb{E}||\boldsymbol{x}^{r} - \boldsymbol{x}^*||^2 + 18\tilde{\eta}^2C_r &\leq (1-\frac{\mu\eta}{2})(\mathbb{E}||\boldsymbol{x}^{r-1} - \boldsymbol{x}^*||^2 + 18\tilde{\eta}^2C_{r-1}) - \tilde{\eta}(\mathbb{E}[f(\boldsymbol{x}^{r-1})] - f(\boldsymbol{x}^*)) \\
    &\quad + 36\tilde{\eta}^2\zeta_{1-p}^2 + \frac{\tilde{\eta}^2\sigma^2}{KN}(24 + \frac{117N}{80\eta_g^2}) \,.
\end{split}
\end{equation}
\label{lemma3_in_one_round}
\end{lemma}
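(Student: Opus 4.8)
The plan is to run the standard \textsc{Scaffold}-type Lyapunov argument with the potential $\Phi_r := \mathbb{E}\|\boldsymbol{x}^r - \boldsymbol{x}^*\|^2 + 18\tilde{\eta}^2 C_r$ and to establish a one-round contraction. First I would decompose the distance to the optimum using the server update~\eqref{eq:update_2}, writing $\boldsymbol{x}^r - \boldsymbol{x}^* = (\boldsymbol{x}^{r-1}-\boldsymbol{x}^*) + (\boldsymbol{x}^r - \boldsymbol{x}^{r-1})$ so that $\mathbb{E}\|\boldsymbol{x}^r-\boldsymbol{x}^*\|^2 = \mathbb{E}\|\boldsymbol{x}^{r-1}-\boldsymbol{x}^*\|^2 + 2\mathbb{E}\langle \boldsymbol{x}^r-\boldsymbol{x}^{r-1},\, \boldsymbol{x}^{r-1}-\boldsymbol{x}^*\rangle + \mathbb{E}\|\boldsymbol{x}^r-\boldsymbol{x}^{r-1}\|^2$. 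The key preliminary observation is that summing the control variates over clients cancels them: since $\boldsymbol{c}^{r-1}=\frac1N\sum_i\boldsymbol{c}_i^{r-1}$, the expected update collapses to $\mathbb{E}[\boldsymbol{x}^r-\boldsymbol{x}^{r-1}] = -\frac{\tilde\eta}{NK}\sum_{i,k}\nabla f_i(\boldsymbol{y}_{i,k-1}^r)$, i.e. the variates vanish in the mean drift direction.

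Next I would handle the two remaining terms separately. For the inner-product term, applying Lemma~\ref{lemma:perturbed_strong_convexity} to each $f_i$ with $\boldsymbol{z}=\boldsymbol{x}^{r-1}$, $\boldsymbol{y}=\boldsymbol{x}^*$, $\boldsymbol{x}=\boldsymbol{y}_{i,k-1}^r$ and averaging over $i,k$ produces the contraction factor $(1-\tfrac{\mu\tilde\eta}{2})$ on $\|\boldsymbol{x}^{r-1}-\boldsymbol{x}^*\|^2$, the negative progress term $-2\tilde\eta(\mathbb{E}[f(\boldsymbol{x}^{r-1})]-f^*)$, and a drift penalty $+2\tilde\eta\beta\mathcal{E}_r$ (using $\frac1{NK}\sum_{i,k}\|\boldsymbol{y}_{i,k-1}^r-\boldsymbol{x}^{r-1}\|^2 \le \mathcal{E}_r$). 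For the squared-update term I would invoke Lemma~\ref{lemma1_variance} verbatim, which supplies $8\tilde\eta^2\beta^2\mathcal{E}_r$, $8\tilde\eta^2\beta(\mathbb{E}[f]-f^*)$, the control-lag term $16\tilde\eta^2 C_{r-1}$, and the noise $\tfrac{8\tilde\eta^2\sigma^2}{KN}+\tfrac{16\tilde\eta^2\sigma_p^2}{KN}$.

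I would then add $18\tilde\eta^2 C_r$ to both sides and bound that fresh term with Lemma~\ref{lemma:cr}. Crucially this bound carries \emph{no} $C_{r-1}$ dependence, because $\boldsymbol{c}_i^r$ resets to $\frac1K\sum_k\boldsymbol{p}\odot g_i(\boldsymbol{y}_{i,k-1}^r)$; it contributes $72\tilde\eta^2\beta_p^2\mathcal{E}_r$, $144\tilde\eta^2\beta_p(\mathbb{E}[f]-f^*)$, and the heterogeneity term $36\tilde\eta^2\zeta_{1-p}^2$. Finally I would substitute Lemma~\ref{lemma2_client_drift} for every residual $\mathcal{E}_r$, converting the accumulated drift coefficient $B:=2\tilde\eta\beta+8\tilde\eta^2\beta^2+72\tilde\eta^2\beta_p^2$ into further $C_{r-1}$, function-gap, and noise contributions, each damped by a factor $1/\eta_g^2$.

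The remaining work, and the \emph{main obstacle}, is the constant collapse. Using $\beta_p\le\beta$ and $\tilde\eta\beta\le\tfrac1{80}$ gives $B\le 3\tilde\eta\beta\le\tfrac3{80}$; together with $\eta_g\ge1$ this makes the $\mathcal{E}_r$-routed control-lag contribution at most $\tfrac{108}{80}\tilde\eta^2 C_{r-1}$, so the total coefficient $16+\tfrac{108}{80}$ is dominated by $18(1-\tfrac{\mu\tilde\eta}{2})$ precisely because $\mu\le\beta$ forces $\mu\tilde\eta\le\tfrac1{80}$ (so the nominally separate ceiling $\tilde\eta\le\tfrac{26}{20\mu}$ is never the binding one). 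The identical $B\le\tfrac3{80}$ bound with $\sigma_p\le\sigma$ collapses the noise to exactly $\tfrac{\tilde\eta^2\sigma^2}{KN}\big(24+\tfrac{117N}{80\eta_g^2}\big)$, and the $\zeta$ term is just $18\tilde\eta^2\cdot 2\zeta_{1-p}^2$. The truly delicate point is the function-value gap: the constant $18$ that is forced on us to absorb the $16\tilde\eta^2 C_{r-1}$ of Lemma~\ref{lemma1_variance} simultaneously inflates the $144\tilde\eta^2\beta_p$ progress penalty entering through Lemma~\ref{lemma:cr}, so the step-size ceiling $\tilde\eta\beta\le\tfrac1{80}$ must be exactly what keeps $8\tilde\eta^2\beta+144\tilde\eta^2\beta_p$ plus the $\mathcal{E}_r$-routed gap term inside the $2\tilde\eta$ budget so that a net coefficient of $-\tilde\eta$ survives. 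Reconciling the contraction constant against this budget is where all the chosen numerical constants must be pinned down, and it is the step I expect to require the most care.
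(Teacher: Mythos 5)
Your proposal follows the paper's proof essentially step for step: the same Lyapunov potential $\mathbb{E}\|\boldsymbol{x}^r-\boldsymbol{x}^*\|^2 + 18\tilde{\eta}^2 C_r$, the same expansion of the server update with the control variates cancelling in the mean, Lemma~\ref{lemma:perturbed_strong_convexity} for the cross term (giving the contraction, the $-2\tilde{\eta}$ progress term, and the $2\tilde{\eta}\beta\mathcal{E}_r$ penalty), Lemma~\ref{lemma1_variance} for $\mathbb{E}\|\Delta\boldsymbol{x}\|^2$, Lemma~\ref{lemma:cr} for the fresh $18\tilde{\eta}^2C_r$, and Lemma~\ref{lemma2_client_drift} to convert the residual drift. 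Your bookkeeping variant --- bounding the total drift coefficient $B\leq 3\tilde{\eta}\beta$ first and routing all of it through Lemma~\ref{lemma2_client_drift}, instead of the paper's splitting off $3\tilde{\eta}\beta\mathcal{E}_r$ and discarding the non-positive remainder $(8\tilde{\eta}^2\beta^2+72\tilde{\eta}^2\beta_p^2-\tilde{\eta}\beta)\mathcal{E}_r$ --- is arithmetically identical, and your treatment of the $C_{r-1}$ contraction is actually more careful than the paper's: the true excess coefficient is $(9\tilde{\eta}\mu-\tfrac{13}{20})\tilde{\eta}^2$, which your argument via $\mu\leq\beta$ and $\tilde{\eta}\beta\leq\tfrac{1}{80}$ handles correctly, whereas the paper mis-tracks it as $(\tfrac{\tilde{\eta}\mu}{2}-\tfrac{13}{20})\tilde{\eta}^2$ and leans on the condition $\tilde{\eta}\leq\tfrac{26}{20\mu}$, which would not suffice for the corrected factor.

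The step you flagged as delicate and deferred is, however, exactly where the constants fail to close --- in your sketch and in the paper alike. With $\tilde{\eta}\beta\leq\tfrac{1}{80}$ and $\beta_p\leq\beta$ you get $144\tilde{\eta}^2\beta_p\leq\tfrac{144}{80}\tilde{\eta}=1.8\tilde{\eta}$ and $8\tilde{\eta}^2\beta\leq 0.1\tilde{\eta}$, so the inflation of the function-gap coefficient already totals $1.9\tilde{\eta}$; adding the $\mathcal{E}_r$-routed gap contribution ($\leq\tfrac{7}{800}\tilde{\eta}$) and subtracting from the $-2\tilde{\eta}$ budget leaves a net coefficient of about $-0.09\tilde{\eta}$, not $-\tilde{\eta}$. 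So your assertion that $\tilde{\eta}\beta\leq\tfrac{1}{80}$ ``must be exactly what keeps'' these terms inside the budget so that $-\tilde{\eta}$ survives is false as stated; solving $152(\tilde{\eta}\beta)+54(\tilde{\eta}\beta)^2\leq 1$ shows one needs roughly $\tilde{\eta}\beta\leq\tfrac{1}{160}$, i.e.\ a ceiling about twice as strict. Notably the paper commits the same slip: its justification only verifies $(8\tilde{\eta}^2\beta-2\tilde{\eta}+144\beta_p\tilde{\eta}^2)\leq 0$, which licenses \emph{discarding} the function-gap term, not retaining it with coefficient $-\tilde{\eta}$ as the lemma claims. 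The statement survives with $-\tilde{\eta}$ replaced by $-c\tilde{\eta}$ for $c\approx\tfrac{1}{11}$ (or under the tighter step size), and the downstream rates are unaffected up to absolute constants, but as written neither your argument nor the paper's actually yields the advertised $-\tilde{\eta}(\mathbb{E}[f(\boldsymbol{x}^{r-1})]-f(\boldsymbol{x}^*))$.
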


\vspace*{-3mm}

\begin{proof}

Starting from the server update equation
\begin{equation}
    \Delta x= -\frac{\tilde{\eta}}{KN}\sum_{i,k}(g_i(\boldsymbol{y}_{i, k-1}) - \boldsymbol{c}_i + \boldsymbol{c}), \quad \mathbb{E}[\Delta x] = -\frac{\tilde{\eta}}{KN}\sum_{i,k}g_i(\boldsymbol{y}_{i, k-1})\,.
\end{equation}

Using Lemma~\ref{lemma1_variance} that bounds $\mathbb{E}||\Delta x||^2$, we can write:
\begin{equation}
    \begin{split}
        \mathbb{E}||\boldsymbol{x} + \Delta \boldsymbol{x} - \boldsymbol{x}^*||^2 &= \mathbb{E}||\boldsymbol{x}-\boldsymbol{x}^*||^2 + \mathbb{E}||\Delta \boldsymbol{x}||^2 - \frac{2\tilde{\eta}}{KN}\mathbb{E}\sum_{i, k}<\boldsymbol{x}-\boldsymbol{x}^*, \nabla f_{i}(\boldsymbol{y}_{i, k-1})> \\
        &\leq \mathbb{E}||\boldsymbol{x}-\boldsymbol{x}^*||^2 \underbrace{- \frac{2\tilde{\eta}}{KN}\sum_{i, k}\mathbb{E}<\boldsymbol{x}-\boldsymbol{x}^*, \nabla f_{i}(\boldsymbol{y}_{i, k-1})>}_{\mathcal{T}_5} + 8\tilde{\eta}^2\beta^2\mathcal{E}_r \\
        &\quad + 8\tilde{\eta}^2\beta (\mathbb{E}[f(\boldsymbol{x})] - f(\boldsymbol{x}^*)) + 16\tilde{\eta}^2C_{r-1} + \frac{8\tilde{\eta}^2\sigma^2}{KN} + \frac{16\tilde{\eta}^2\sigma_p^2}{KN} \,.\\
    \end{split}
\end{equation}

Given Lemma.~\ref{lemma:perturbed_strong_convexity}, let $h=f_i, \boldsymbol{z}=\boldsymbol{x}^*, \boldsymbol{y}=\boldsymbol{x}, \boldsymbol{x}=\boldsymbol{y}_{i, k-1}^r$, then
\begin{equation}
    \begin{split}
    \mathcal{T}_5 &= \frac{2\tilde{\eta}}{KN}\mathbb{E}\sum_{i, k}\langle \nabla f_i(\boldsymbol{y}_{i, k}), \boldsymbol{x}^*-\boldsymbol{x}\rangle \\
    &\leq \frac{2\tilde{\eta}}{KN}\mathbb{E}\sum_{i, k} f_i(\boldsymbol{x}^*) - f_i(\boldsymbol{x}) - \frac{\mu}{4}||\boldsymbol{x} - \boldsymbol{x}^*||^2 + \beta||\boldsymbol{y}_{i, k-1}^r - \boldsymbol{x}||^2\\
        &= -2\tilde{\eta}(\mathbb{E}[f(\boldsymbol{x})] - f(\boldsymbol{x}^*)) - \frac{\tilde{\eta}\mu}{2}\mathbb{E}||\boldsymbol{x} - \boldsymbol{x}^*||^2 + \frac{2\tilde{\eta}\beta}{KN}\sum_{i, k}\mathbb{E}||\boldsymbol{y}_{i, k-1}^r - \boldsymbol{x}||^2 \\
        &= -2\tilde{\eta}(\mathbb{E}[f(\boldsymbol{x})] - f(\boldsymbol{x}^*)) - \frac{\tilde{\eta}\mu}{2}\mathbb{E}||\boldsymbol{x} - \boldsymbol{x}^*||^2 + 2\tilde{\eta}\beta\mathcal{E}_r\,.
    \end{split}
\end{equation}
Plugging $\mathcal{T}_5$ back to the equation, we obtain:
\begin{equation}
    \begin{split}
        \mathbb{E}||\boldsymbol{x}^r - \boldsymbol{x}^*||^2 
        &\leq (1-\frac{\tilde{\eta}\mu}{2})(\mathbb{E}[||\boldsymbol{x} - \boldsymbol{x}^*||^2]) + (8\tilde{\eta}^2\beta - 2\tilde{\eta})(\mathbb{E}[f(\boldsymbol{x})] - f(\boldsymbol{x}^*)) \\
        &\quad + (2\tilde{\eta}\beta + 8\tilde{\eta}^2\beta^2)\mathcal{E}_r + 16\tilde{\eta}^2C_{r-1} + \frac{8\tilde{\eta}^2\sigma^2}{KN} + \frac{16\tilde{\eta}^2\sigma_p^2}{KN}\,. \\
    \end{split}
\end{equation}

According to Lemma~\ref{lemma:cr}, 
\begin{equation}
    18\tilde{\eta}^2C_{r} \leq 72\beta_p^2\tilde{\eta}^2\mathcal{E}_r + 144\beta_p\tilde{\eta}^2(\mathbb{E}[f(\boldsymbol{x})] - f(\boldsymbol{x}^*)) + 36\tilde{\eta}^2\zeta_{1-p}^2\,.
\end{equation}

According to Lemma~\ref{lemma2_client_drift}, 
\begin{equation}
    \begin{split}
        3\tilde{\eta}\beta\mathcal{E}_r &\leq 3\tilde{\eta}\beta(\frac{36\tilde{\eta}^2}{\eta_g^2}C_{r-1} + \frac{18\tilde{\eta}^2\beta(\mathbb{E}[f(\boldsymbol{x})] - f(\boldsymbol{x}^*))}{\eta_g^2} + \frac{36\tilde{\eta}^2\sigma_p^2}{K\eta_g^2} + \frac{3\tilde{\eta}^2\sigma^2}{K\eta_g^2}) \\
        &\leq \frac{108\tilde{\eta}^2\tilde{\eta}\beta}{\eta_g^2}C_{r-1} + \frac{54\tilde{\eta}\tilde{\eta}^2\beta^2(\mathbb{E}[f(\boldsymbol{x})] - f(\boldsymbol{x}^*))}{\eta_g^2} + \frac{108\tilde{\eta}^2\tilde{\eta}\beta \sigma_p^2}{K\eta_g^2} + \frac{9\tilde{\eta}^2\tilde{\eta}\beta\sigma^2}{K\eta_g^2} \\
        &\leq \frac{27\tilde{\eta}^2C_{r-1}}{20} + \frac{7\tilde{\eta}(\mathbb{E}[f(\boldsymbol{x})] - f(\boldsymbol{x}^*))}{800\eta_g^2} + \frac{27\tilde{\eta}^2\sigma_p^2}{20K\eta_g^2} + \frac{9\tilde{\eta}^2\sigma^2}{80K\eta_g^2}\,.
    \end{split}
\end{equation}

The last step follows $\tilde{\eta} \leq \frac{1}{80\beta}$.

Add the above three equations together:
\begin{equation}
    \begin{split}
        \mathbb{E}||\boldsymbol{x}^{r} - \boldsymbol{x}^*||^2 + 18\tilde{\eta}^2C_r &\leq (1-\frac{\tilde{\eta}\mu}{2})(\mathbb{E}||\boldsymbol{x}^{r-1} - \boldsymbol{x}^*||^2 + 18\tilde{\eta}^2C_{r-1}) \\
        &\quad + (8\tilde{\eta}^2\beta - 2\tilde{\eta} + 144\beta_p\tilde{\eta}^2 + \frac{7\tilde{\eta}}{800\eta_g^2})(\mathbb{E}[f(\boldsymbol{x}^{r-1})] - f(\boldsymbol{x}^*))\\
        &\quad + (8\tilde{\eta}^2\beta^2 + 72\beta_p^2\tilde{\eta}^2-\tilde{\eta}\beta)\mathcal{E}_r + 36\tilde{\eta}^2\zeta_{1-p}^2 \\
        &\quad + (\frac{\tilde{\eta}\mu}{2} - \frac{13}{20})\tilde{\eta}^2C_{r-1} \\
        &\quad + \frac{\tilde{\eta}^2\sigma^2}{KN}(8 + \frac{9N}{80\eta_g^2}) + \frac{\tilde{\eta}^2\sigma_p^2}{KN}(16 + \frac{27N}{20\eta_g^2}) \\
        &\leq (1-\frac{\tilde{\eta}\mu}{2})(\mathbb{E}||\boldsymbol{x}^{r-1} - \boldsymbol{x}^*||^2 + 18\tilde{\eta}^2C_{r-1}) - \tilde{\eta}(\mathbb{E}[f(\boldsymbol{x}^{r-1})] - f(\boldsymbol{x}^*)) \\
        &\quad + 36\tilde{\eta}^2\zeta_{1-p}^2 + \frac{\tilde{\eta}^2\sigma^2}{KN}(24 + \frac{117N}{80\eta_g^2}) \,.
    \end{split}
\end{equation}

As $0 \leq \beta_p \leq \beta$, the lemma follows from noting that $\tilde{\eta} \leq \frac{1}{80\beta}$ implies $(8\tilde{\eta}^2\beta^2 + 72\beta_p^2\tilde{\eta}^2 - \tilde{\eta}\beta) \leq 0$, $(8\tilde{\eta}^2\beta - 2\tilde{\eta} + 144\beta_p\tilde{\eta}^2) \leq 0$ and $\tilde{\eta} \leq \frac{26}{20\mu}$ implies $(\frac{\tilde{\eta}\mu}{2} - \frac{13}{20}) \leq 0$. The last step uses the preposition that $\sigma_p^2 \leq \sigma^2$.

\end{proof}
\textbf{The final rate for the general convex case} 
For the general convex case ($\mu=0$), we proceed by unrolling the recursive bound in Lemma.~\ref{lemma3_in_one_round}:
\begin{equation}
    \begin{split}
        \frac{1}{R}\sum_{r=1}^R\mathbb{E}[f(\boldsymbol{x}^{r-1})] - f(\boldsymbol{x}^*) &\leq \frac{||\boldsymbol{x}^0 - \boldsymbol{x}^*||^2}{\tilde{\eta}R} + \frac{18\tilde{\eta}C_0}{R} + 36\tilde{\eta}\zeta_{1-p}^2 + \frac{\tilde{\eta}\sigma^2}{KN}(24 + \frac{117N}{80\eta_g^2}) \,. \\
    \end{split}
\end{equation}

Using Lemma~\ref{lemma:tune_stepsize}, let $D:=||\boldsymbol{x}^0-\boldsymbol{x}^*||^2$, $F:=f(\boldsymbol{x}^0) - f(\boldsymbol{x}^*)$, and $\tilde{\eta} \leq \frac{1}{80\beta}$ :
\begin{equation}
\mathbb{E}[f(\bar{\boldsymbol{x}}^R)] - f(\boldsymbol{x}^*) \leq \mathcal{O}\left(\frac{\sigma \sqrt{D}}{\sqrt{RKN}}\sqrt{1+\frac{N}{\eta_g^2}} + \frac{\zeta_{1-p}\sqrt{D}}{\sqrt{R}} + \frac{\beta D}{R} + F\right)  \,.  
\end{equation}

\textbf{The final rate for the strongly convex case}. For the strongly convex case ($\mu>0$), we use Lemma~\ref{lemma:stich_stepsize} letting $r^{t} = ||\boldsymbol{x}^t - \boldsymbol{x}^*||^2$, $a = \frac{\mu}{2}$, $b=1$, $c=\zeta_{1-p}^2 + \frac{\sigma^2}{KN}(1+\frac{N}{\eta_g^2})$, and $\frac{1}{d}=\text{min}\left(\frac{1}{80\beta}, \frac{26}{20\mu}\right)$, $R\geq \text{max}\{\frac{20}{13}, \frac{160\beta}{\mu}\}$, $D=||\boldsymbol{x}^0-\boldsymbol{x}^*||^2$ we obtain:

\begin{equation}
    \mathbb{E}[f(\bar{\boldsymbol{x}}^R)] - f(\boldsymbol{x}^*) \leq \tilde{\mathcal{O}}\left(\frac{\sigma^2}{\mu NKR}(1+\frac{N}{\eta_g^2}) + \frac{\zeta_{1-p}^2}{\mu R} + \mu D\text{exp}\left(-\text{min}\left\{\frac{13}{20}, \frac{\mu}{160\beta}\right\}R \right) \right) \,.
\end{equation}

\begin{remark}
Following~\cite{DBLP:journals/corr/abs-1910-06378}, we can bound $C_0$ with using warm-up strategy by $f(\boldsymbol{x^0}) - f(\boldsymbol{x}^*)$ and $\sigma^2$, so to simplify the convergence rate, we simply represent $C_0$ with $F$ in the general convex case. For the strongly convex case, as the exponential term has a bigger influence on the convergence rate, we simply omit $C_0$ in the coefficient of the exponential term.  
\end{remark}

\subsection{Proof for the convergence rate for non-convex functions}
We follow the same procedure to derive the convergence rate for the non-convex functions. We first bound the variance of the server update in Lemma.~\ref{lemma:var_non_convex} and then the client drift in Lemma.~\ref{lemma:client_drift_non_convex}. We then combine the results from these two lemmas to get the bound for the progress in one round in Lemma.~\ref{lemma:progress_in_one_round_non_convex}. Following that, we give the convergence rate. To proceed the proof, we make the following definitions:

Recall the definition for client drift $\mathcal{E}_r$ and $\boldsymbol{c}_i$:
\begin{subequations}
\begin{equation*}
    \mathcal{E}_r := \frac{1}{NK}\sum_{i=1}^N\sum_{k=1}^K\mathbb{E}||\boldsymbol{y}_{i,k}^r - \boldsymbol{x}^{r-1}||^2 
\end{equation*}
\begin{equation*}
    \boldsymbol{c}_i^r = \frac{1}{K}\sum_{k=1}^K\boldsymbol{p}\odot g_i(\boldsymbol{y}_{i,k}^r)
\end{equation*}
\end{subequations}

\begin{definition}
We define $\Theta_r$ as:
\begin{equation}
    \Theta_r := \frac{1}{N}\sum_{i=1}^N \mathbb{E}||\mathbb{E}[\boldsymbol{c}_i] - \boldsymbol{p}\odot\nabla f_i(\boldsymbol{x})||^2 \,.
\end{equation}
\end{definition}

\begin{lemma}
\label{lemma:var_non_convex}
For updates from~\ref{eq:update_0} to~\ref{eq:update_2}, the following holds true for any $\tilde{\eta} := \eta_l\eta_gK$
\begin{equation}
    \mathbb{E}||\boldsymbol{x}^r - \boldsymbol{x}^{r-1}||^2 \leq 8\tilde{\eta}^2\beta^2\mathcal{E}_r + 16\tilde{\eta}^2\Theta_{r-1} + 8\tilde{\eta}^2\mathbb{E}||\nabla f(\boldsymbol{x})||^2 + \frac{6\tilde{\eta}^2\sigma^2}{NK} + \frac{12\tilde{\eta}^2\sigma_p^2}{NK} \,.
\end{equation}
\end{lemma}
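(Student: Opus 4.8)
The plan is to mirror the proof of Lemma~\ref{lemma1_variance} (the strongly-convex analogue of this statement), replacing the optimum anchor $\nabla f_i(\boldsymbol{x}^*)$ by the \emph{masked} gradient $\boldsymbol{p}\odot\nabla f_i(\boldsymbol{x})$ and keeping the averaged-gradient term $\nabla f(\boldsymbol{x})$ explicit, since in the non-convex regime there is no convexity to convert it into a function-value gap. First I would expand the server step from updates~\eqref{eq:update_0}--\eqref{eq:update_2} as $\boldsymbol{x}^r - \boldsymbol{x}^{r-1} = -\tfrac{\tilde{\eta}}{KN}\sum_{i,k}\bigl(g_i(\boldsymbol{y}_{i,k-1}^r) - \boldsymbol{c}_i^{r-1} + \boldsymbol{c}^{r-1}\bigr)$ with $\tilde{\eta}=K\eta_l\eta_g$. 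Although full participation makes $\sum_i(-\boldsymbol{c}_i^{r-1}+\boldsymbol{c}^{r-1})=0$, I would \emph{deliberately retain} the control-variate contributions, exactly as in Lemma~\ref{lemma1_variance}: this intentional over-bound is precisely what generates the $\Theta_{r-1}$ and $\sigma_p^2$ terms and lets the estimate combine with the drift bound of Lemma~\ref{lemma:client_drift_non_convex} afterwards.

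Next I would insert $\pm\nabla f_i(\boldsymbol{y}_{i,k-1}^r)$, $\pm\nabla f_i(\boldsymbol{x})$, and $\pm\boldsymbol{p}\odot\nabla f_i(\boldsymbol{x})$ to split the summand into four groups: (i) the stochastic deviation $g_i(\boldsymbol{y}_{i,k-1}^r)-\nabla f_i(\boldsymbol{y}_{i,k-1}^r)$; (ii) the drift $\nabla f_i(\boldsymbol{y}_{i,k-1}^r)-\nabla f_i(\boldsymbol{x})$; (iii) the averaged gradient $\tfrac1N\sum_i\nabla f_i(\boldsymbol{x})=\nabla f(\boldsymbol{x})$; and (iv) the control-variate residual $\tfrac1N\sum_i\bigl(\boldsymbol{c}_i^{r-1}-\boldsymbol{p}\odot\nabla f_i(\boldsymbol{x})\bigr)$ together with $\boldsymbol{c}^{r-1}$. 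Applying the relaxed triangle inequality (Lemma~\ref{lemma:triangle}, with weights chosen via~\eqref{eq:triangle_2}) bounds $\mathbb{E}\|\boldsymbol{x}^r-\boldsymbol{x}^{r-1}\|^2$ by a weighted sum of the squared norms of these four groups.

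I would then bound each group in turn. For (i) the separating-mean-and-variance lemma (Lemma~\ref{lemma:separate_mean_var}) applies: each $g_i-\nabla f_i$ has vanishing conditional mean and per-sample variance at most $\sigma^2$, so the $KN$-fold average contributes order $\sigma^2/(KN)$. For (ii) Jensen's inequality and $\beta$-smoothness (Assumption~\ref{assum:beta_smooth}) turn $\tfrac1{KN}\sum_{i,k}\|\nabla f_i(\boldsymbol{y}_{i,k-1}^r)-\nabla f_i(\boldsymbol{x})\|^2$ into $\beta^2\mathcal{E}_r$ by the definition of $\mathcal{E}_r$. Group (iii) is already $\|\nabla f(\boldsymbol{x})\|^2$. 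For (iv) I would again invoke Lemma~\ref{lemma:separate_mean_var}, now on $\boldsymbol{c}_i^{r-1}=\tfrac1K\sum_k\boldsymbol{p}\odot g_i(\boldsymbol{y}_{i,k-1}^{r-1})$, whose conditional mean is $\tfrac1K\sum_k\boldsymbol{p}\odot\nabla f_i(\boldsymbol{y}_{i,k-1}^{r-1})$; anchoring at $\boldsymbol{p}\odot\nabla f_i(\boldsymbol{x})$ collapses the bias to $\Theta_{r-1}$ by its definition, while the variance, carrying the mask $\boldsymbol{p}$, contributes order $\sigma_p^2/(KN)$ via the proposition $\mathbb{E}\|\boldsymbol{p}\odot(g_i-\nabla f_i)\|^2\le\sigma_p^2$. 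Collecting the four contributions with the chosen relaxed-triangle weights and the factor two from each mean/variance split reproduces the constants $8$, $16$, $8$, $6$, $12$ in the statement.

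The hard part will be group (iv). With no optimum to anchor against, one must anchor the (redundant) control variate at the masked gradient $\boldsymbol{p}\odot\nabla f_i(\boldsymbol{x})$ rather than at $\nabla f_i(\boldsymbol{x}^*)$, and must attribute its randomness to round $r-1$, so that it appears as the $\sigma_p$-variance (not $\sigma$) and its bias is measured by $\Theta_{r-1}$ rather than $\Theta_r$. Keeping the $\boldsymbol{p}$-masking consistent throughout---so that the control-variate noise genuinely surfaces as $\sigma_p^2$ and not $\sigma^2$---and tuning the weights in Lemma~\ref{lemma:triangle} so that the separately over-counted $\boldsymbol{c}_i^{r-1}$ and $\boldsymbol{c}^{r-1}$ pieces yield exactly the coefficients $16$ and $12$ (while the noise and drift, which must sit in distinct groups to carry the different constants $6$ and $8$, come out correctly) is the delicate bookkeeping; everything else follows the routine mean/variance and smoothness arguments already established for Lemma~\ref{lemma1_variance}.
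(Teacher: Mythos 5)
Your proposal follows the paper's own proof in all essentials: you retain the (identically cancelling) control variates, you decompose the server update into the drift, the control-variate residuals anchored at the masked gradient $\boldsymbol{p}\odot\nabla f_i(\boldsymbol{x})$, the full gradient $\nabla f(\boldsymbol{x})$, and the stochastic noise, and you invoke exactly the tools the paper uses (Lemma~\ref{lemma:triangle}, Lemma~\ref{lemma:separate_mean_var}, Jensen's inequality, $\beta$-smoothness) to convert these into $\mathcal{E}_r$, $\Theta_{r-1}$, $\mathbb{E}\|\nabla f(\boldsymbol{x})\|^2$, and the $\sigma^2$, $\sigma_p^2$ terms. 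Your identification of the delicate points (anchoring at $\boldsymbol{p}\odot\nabla f_i(\boldsymbol{x})$ rather than $\nabla f_i(\boldsymbol{x}^*)$, attributing the control-variate randomness to round $r-1$, keeping the mask consistent so the noise surfaces as $\sigma_p^2$) is exactly right.

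The one place your plan deviates from the paper --- and where it would not deliver the stated constants --- is the \emph{order} of the two splittings. The paper applies Lemma~\ref{lemma:separate_mean_var} once, globally, to the full sum $\frac{1}{NK}\sum_{i,k}\bigl(g_i(\boldsymbol{y}_{i,k})-\boldsymbol{c}_i+\boldsymbol{c}\bigr)$: this strips out all randomness (gradient noise and both control-variate noises) first, producing $\frac{6\tilde{\eta}^2\sigma^2}{NK}+\frac{12\tilde{\eta}^2\sigma_p^2}{NK}$, and only then decomposes the remaining conditional mean into four groups, each inheriting the factor $2\times 4=8$, two of which collapse to $\Theta_{r-1}$ and hence give $16$. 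You instead decompose first and separate mean and variance inside each group. Then the control-variate group necessarily yields its $\Theta_{r-1}$ part and its $\sigma_p^2$ part with the \emph{same} multiplier (both arise from the single factor $2$ of Lemma~\ref{lemma:separate_mean_var} applied inside that group), so no choice of relaxed-triangle weights can produce $16\tilde{\eta}^2\Theta_{r-1}$ and $\frac{12\tilde{\eta}^2\sigma_p^2}{NK}$ simultaneously; with natural uniform weights your route gives roughly $4\tilde{\eta}^2\beta^2\mathcal{E}_r + 32\tilde{\eta}^2\Theta_{r-1} + 4\tilde{\eta}^2\mathbb{E}\|\nabla f(\boldsymbol{x})\|^2 + \frac{4\tilde{\eta}^2\sigma^2}{NK} + \frac{32\tilde{\eta}^2\sigma_p^2}{NK}$. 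This costs only absolute constants --- the bound has the same form, and Lemmas~\ref{lemma:client_drift_non_convex} and~\ref{lemma:progress_in_one_round_non_convex} would go through with adjusted step-size restrictions --- but as written your argument proves a weaker inequality than the one stated. To recover the paper's exact constants, perform the mean/variance separation first and the add--subtract decomposition second.
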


\begin{proof}
\begin{equation}
    \begin{split}
        \mathbb{E}||\Delta \boldsymbol{x}||^2 &= \mathbb{E}||\frac{\tilde{\eta}}{NK}\sum_{i,k}g_i(\boldsymbol{y}_{i,k}^{r}) - \boldsymbol{c}_i + \boldsymbol{c}||^2 \\
        &\leq 2\mathbb{E}||\frac{\tilde{\eta}}{NK}\sum_{i,k}\nabla f_i(\boldsymbol{y}_{i,k}) - \mathbb{E}[\boldsymbol{c}_i] + \mathbb{E}[\boldsymbol{c}]||^2 + 2(\frac{3\tilde{\eta}^2\sigma^2}{KN} + \frac{6\tilde{\eta}^2\sigma_p^2}{KN}) \\
        &\leq 2\mathbb{E}||\frac{\tilde{\eta}}{NK}\sum_{i,k}\nabla f_i(\boldsymbol{y}_{i,k}) - \nabla f_i(\boldsymbol{x}) + \boldsymbol{p}\odot\nabla f_i(\boldsymbol{x}) - \mathbb{E}[\boldsymbol{c}_i] \\
        &\quad +\mathbb{E}[\boldsymbol{c}] - \boldsymbol{p}\odot\nabla f(\boldsymbol{x}) + (\boldsymbol{1} -\boldsymbol{p})\odot\nabla f_i(\boldsymbol{x}) + \boldsymbol{p}\odot \nabla f(\boldsymbol{x})||^2 + \frac{6\tilde{\eta}^2\sigma^2}{NK}+\frac{12\tilde{\eta}^2\sigma_p^2}{NK} \\
        &\leq \frac{8\tilde{\eta}^2}{NK}\sum_{i,k}\mathbb{E}||\nabla f_i(\boldsymbol{y}_{i,k}) - \nabla f_i(\boldsymbol{x})||^2 \\
        &\quad + \frac{16\tilde{\eta}^2}{N}\sum_{i}\mathbb{E}||\mathbb{E}[\boldsymbol{c}_i] - \boldsymbol{p}\odot\nabla f_i(\boldsymbol{x})||^2 + 8\tilde{\eta}^2\mathbb{E}||\nabla f(\boldsymbol{x})||^2 + \frac{6\tilde{\eta}^2\sigma^2}{NK}+\frac{12\tilde{\eta}^2\sigma_p^2}{NK} \\
        &\leq 8\tilde{\eta}^2\beta^2\mathcal{E}_r + 16\tilde{\eta}^2\Theta_{r-1} + 8\tilde{\eta}^2\mathbb{E}||\nabla f(\boldsymbol{x})||^2 + \frac{6\tilde{\eta}^2\sigma^2}{NK} + \frac{12\tilde{\eta}^2\sigma_p^2}{NK} \,.
    \end{split}
\end{equation}
The second step uses Lemma~\ref{lemma:separate_mean_var}. The last second step uses Jensen inequality. Given the definition of $\mathcal{E}_r$ and $\Theta_{r}$, we complete the proof.

\end{proof}

\begin{lemma}
\label{lemma:client_drift_non_convex}
Suppose $f_i$ satisfy Assumption.~\ref{assum:beta_smooth},~\ref{assum:zeta_hat}, and Assumption.~\ref{assum:sigma_bound}, then, for any global $\eta_g \geq 1$, we can bound the drift as:
\begin{subequations}
\begin{equation}
    \mathcal{E}_r \leq \frac{60\tilde{\eta}^2\Theta_{r-1}}{\eta_g^2} + \frac{15\tilde{\eta}^2\mathbb{E}||\nabla f(\boldsymbol{x})||^2}{\eta_g^2} + \frac{15\tilde{\eta}^2\hat{\zeta}_{1-p}^2}{\eta_g^2} +  \frac{60\tilde{\eta}^2\sigma_p^2}{K\eta_g^2} + \frac{3\tilde{\eta}^2\sigma^2}{K\eta_g^2},
\end{equation}
\begin{equation}
    \Theta_r \leq \beta_p^2\mathcal{E}_r\,.
\end{equation}
\end{subequations}
\end{lemma}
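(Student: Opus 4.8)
The plan is to prove the two inequalities in turn, mirroring the convex drift bound of Lemma~\ref{lemma2_client_drift} but replacing every use of convexity (the contractive-mapping Lemma~\ref{lemma:contrastive_mapping} and the conversion of gradient norms into function-value suboptimality) by the smoothness Assumption~\ref{assum:beta_smooth} together with the non-convex heterogeneity Assumption~\ref{assum:zeta_hat}.

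For the second inequality $\Theta_r \leq \beta_p^2\mathcal{E}_r$ I would start from the simplified control variate $\boldsymbol{c}_i^r = \frac{1}{K}\sum_k \boldsymbol{p}\odot g_i(\boldsymbol{y}_{i,k}^r)$ that was already derived in the proof of Lemma~\ref{lemma:cr}, so that $\mathbb{E}[\boldsymbol{c}_i] = \frac{1}{K}\sum_k \boldsymbol{p}\odot\nabla f_i(\boldsymbol{y}_{i,k})$. Then $\Theta_r = \frac{1}{N}\sum_i \mathbb{E}\|\frac{1}{K}\sum_k \boldsymbol{p}\odot(\nabla f_i(\boldsymbol{y}_{i,k}) - \nabla f_i(\boldsymbol{x}))\|^2$, and applying Jensen across the $K$ inner steps followed by the masked smoothness bound $\|\boldsymbol{p}\odot(\nabla f_i(\boldsymbol{y}) - \nabla f_i(\boldsymbol{x}))\| \leq \beta_p\|\boldsymbol{y}-\boldsymbol{x}\|$ yields $\Theta_r \leq \frac{\beta_p^2}{NK}\sum_{i,k}\mathbb{E}\|\boldsymbol{y}_{i,k}-\boldsymbol{x}\|^2 = \beta_p^2\mathcal{E}_r$. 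This part is essentially immediate.

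For the drift bound I would set up the per-step recursion on $\mathcal{T}_1^{(k)} := \frac{1}{N}\sum_i \mathbb{E}\|\boldsymbol{y}_{i,k}-\boldsymbol{x}\|^2$, treating $K=1$ as the trivial base case where $\mathcal{E}_r=0$. For $K>1$ I substitute the local update $\boldsymbol{y}_{i,k} = \boldsymbol{y}_{i,k-1} - \eta_l(g_i(\boldsymbol{y}_{i,k-1}) - \boldsymbol{c}_i + \boldsymbol{c})$, peel off the stochastic-gradient noise with Lemma~\ref{lemma:separate_mean_var} (contributing $\eta_l^2\sigma^2$), and apply the relaxed triangle inequality~\ref{eq:triangle_2} with $\alpha = \frac{1}{K-1}$ to separate the contraction factor $(1+\frac{1}{K-1})$ on the previous drift from $K\eta_l^2$ times the deterministic deviation $\frac{1}{N}\sum_i \mathbb{E}\|\nabla f_i(\boldsymbol{y}_{i,k-1}) - \boldsymbol{c}_i + \boldsymbol{c}\|^2$. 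The crux is bounding this deviation: I would insert $\pm\nabla f_i(\boldsymbol{x})$, split $\nabla f_i(\boldsymbol{x}) = \boldsymbol{p}\odot\nabla f_i(\boldsymbol{x}) + (\boldsymbol{1}-\boldsymbol{p})\odot\nabla f_i(\boldsymbol{x})$, and decompose into five pieces: (i) $\nabla f_i(\boldsymbol{y}_{i,k-1}) - \nabla f_i(\boldsymbol{x})$, controlled by $\beta^2\|\boldsymbol{y}_{i,k-1}-\boldsymbol{x}\|^2$; (ii) $\boldsymbol{p}\odot\nabla f_i(\boldsymbol{x}) - \boldsymbol{c}_i$ and (iii) $\boldsymbol{c} - \boldsymbol{p}\odot\nabla f(\boldsymbol{x})$, each split into its mean (giving $\Theta_{r-1}$, using $\boldsymbol{c} = \frac{1}{N}\sum_i\boldsymbol{c}_i$ and Jensen for term (iii)) and its variance (giving $\sigma_p^2/K$); (iv) $\boldsymbol{p}\odot\nabla f(\boldsymbol{x})$, bounded by $\|\nabla f(\boldsymbol{x})\|^2$; and (v) $(\boldsymbol{1}-\boldsymbol{p})\odot\nabla f_i(\boldsymbol{x})$, bounded on average by $\hat{\zeta}_{1-p}^2$ via Assumption~\ref{assum:zeta_hat}. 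Unlike in the variance Lemma~\ref{lemma:var_non_convex}, the norm here is \emph{per client}, so the masked-heterogeneity piece does not cancel under $\frac{1}{N}\sum_i$, which is precisely why $\hat{\zeta}_{1-p}^2$ survives in this bound.

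Finally I would unroll the recursion over $k$ using $\sum_{\tau=0}^{k}(1+\frac{1}{K-1})^\tau \leq 3K$, multiply out the per-step constants, and convert $K^2\eta_l^2 = \tilde{\eta}^2/\eta_g^2$ and $K\eta_l^2 = \tilde{\eta}^2/(K\eta_g^2)$ before averaging over $K$ to reach the stated coefficients $60,15,15,60,3$. The main obstacle I anticipate is the circular appearance of the drift: piece (i) re-introduces a $\beta^2\|\boldsymbol{y}_{i,k-1}-\boldsymbol{x}\|^2$ term that the convex proof avoided because the contractive-mapping lemma made the gradient step non-expansive. I would absorb it by folding $K\eta_l^2\beta^2$ into the geometric contraction factor and invoking the step-size restriction $\eta_l \leq \frac{1}{26K\eta_g\beta}$ (equivalently $\tilde{\eta}\beta \leq \frac{1}{26}$), so that the effective growth factor stays close to $1+\frac{1}{K-1}$ and the geometric sum still collapses to $\mathcal{O}(K)$; carefully tracking the Jensen and relaxed-triangle factors (in particular the factor two arising from the two control-variate pieces) is what ultimately pins down the numerical constants.
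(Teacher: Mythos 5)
Your proposal is correct and follows essentially the same route as the paper's proof: the same per-step recursion via the relaxed triangle inequality with $\alpha = K-1$ and Lemma~\ref{lemma:separate_mean_var} to peel off the $\eta_l^2\sigma^2$ noise, the same five-way decomposition of $\nabla f_i(\boldsymbol{y}_{i,k-1}) - \boldsymbol{c}_i + \boldsymbol{c}$ with mean/variance splits producing the coefficients $20\Theta_{r-1}$, $5\mathbb{E}\|\nabla f(\boldsymbol{x})\|^2$, $5\hat{\zeta}_{1-p}^2$, $20\sigma_p^2/K$, the same geometric-sum collapse to $3K$ under the step-size restriction $\tilde{\eta}\beta \le \tfrac{1}{26}$, and the identical Jensen-plus-masked-smoothness argument for $\Theta_r \le \beta_p^2\mathcal{E}_r$. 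The only (immaterial, in fact slightly cleaner) deviations are that you keep the gradient-difference piece unmasked, so $\beta^2$ rather than $\beta_p^2$ enters the contraction factor, and that you evaluate the unmasked heterogeneity piece at the common point $\boldsymbol{x}$ rather than at the per-client iterates $\boldsymbol{y}_{i,k-1}$, which lets Assumption~\ref{assum:zeta_hat} be applied verbatim instead of at client-specific points as the paper does.
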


\begin{proof}
We first observe that if $K=1$, then $\mathcal{E}_r = 0$ since $\boldsymbol{y}_{i,0} = \boldsymbol{x}$ for all $i\in [N]$ and the right hand side are all positive. Thus, the lemma is trivially true if $K=1$ and we will assume $K>1$ for the following proof. Starting from the update rule for $i \in [N], k\in [K]$

\begin{equation}
    \begin{split}
        \mathbb{E}||\boldsymbol{y}_{i,k} - \boldsymbol{x}||^2 &= \mathbb{E}||\boldsymbol{y}_{i,k-1} - \eta_l(\nabla f_i(\boldsymbol{y}_{i,k-1}) - \boldsymbol{c}_i + \boldsymbol{c}) - \boldsymbol{x}||^2 + \eta_l^2\sigma^2 \\
        &\leq (1+\frac{1}{K-1})\mathbb{E}||\boldsymbol{y}_{i,k-1} - \boldsymbol{x}||^2 + K\eta_l^2\underbrace{\mathbb{E}||\nabla f_i(\boldsymbol{y}_{i,k}) - \boldsymbol{c}_i + \boldsymbol{c}||^2}_{\mathcal{T}_6} + \eta_l^2\sigma^2\,.
    \end{split}
\end{equation}
The first step uses the Lemma.~\ref{lemma:triangle} with $\alpha=K-1$

\begin{equation}
    \begin{split}
        \mathcal{T}_6 &= \mathbb{E}||\boldsymbol{p}\odot\nabla f_i(\boldsymbol{y}_{i,k-1}) - \boldsymbol{p}\odot\nabla f_i(\boldsymbol{x}) - \boldsymbol{c}_i + \boldsymbol{p}\odot \nabla f_i(\boldsymbol{x}) + \boldsymbol{c} - \boldsymbol{p}\odot \nabla f(\boldsymbol{x}) + \boldsymbol{p}\odot \nabla f(\boldsymbol{x}) \\
        &\quad + (\boldsymbol{1} - \boldsymbol{p})\odot \nabla f_i(\boldsymbol{y}_{i,k-1})||^2 \\
        &\leq 5\mathbb{E}||\boldsymbol{p}\odot(\nabla f_i(\boldsymbol{y}_{i,k-1}) - \nabla f_i(\boldsymbol{x}))||^2 + 5\mathbb{E}||c_i - \boldsymbol{p}\odot\nabla f_i(\boldsymbol{x})||^2 + \\
        &\quad+ 5\mathbb{E}||\boldsymbol{c} -  \boldsymbol{p}\odot \nabla f(\boldsymbol{x})||^2 + 5\mathbb{E}||\boldsymbol{p}\odot \nabla f(\boldsymbol{x})||^2 + 5\mathbb{E}||(\boldsymbol{1} - \boldsymbol{p})\odot\nabla f_i(\boldsymbol{y}_{i,k-1})||^2\\
        &\leq 5\mathbb{E}||\boldsymbol{p}\odot(\nabla f_i(\boldsymbol{y}_{i,k-1}) - \nabla f_i(\boldsymbol{x}))||^2 + 20\mathbb{E}||\mathbb{E}[\boldsymbol{c}_i] - \boldsymbol{p}\odot \nabla f_i(\boldsymbol{x})||^2 \\
        &\quad+ 5\mathbb{E}||\boldsymbol{p}\odot\nabla f(\boldsymbol{x})||^2 + 5\mathbb{E}||(\boldsymbol{1} - \boldsymbol{p})\odot\nabla f_i(\boldsymbol{y}_{i,k-1})||^2+ \frac{20\sigma_p^2}{K}\\ 
        &\leq 5\beta_p^2\mathbb{E}||\boldsymbol{y}_{i,k-1} - \boldsymbol{x}||^2 + 20\mathbb{E}||\mathbb{E}[\boldsymbol{c}_i] - \boldsymbol{p}\odot\nabla f_i(\boldsymbol{x})||^2 + 5\mathbb{E}||\nabla f(\boldsymbol{x})||^2 \\
        &\quad + 5\mathbb{E}||(\boldsymbol{1} - \boldsymbol{p})\odot\nabla f_i(\boldsymbol{y}_{i,k-1})||^2 + \frac{20\sigma_p^2}{K} \,.
    \end{split}
\end{equation}
The second step uses Jensen inequality. The third step uses the Lemma~\ref{lemma:separate_mean_var}. The last step uses the smoothness of the function. 

\begin{equation}
    \begin{split}
        \frac{1}{N}\sum_{i}\mathbb{E}||\boldsymbol{y}_{i,k} - \boldsymbol{x}||^2 &\leq (1+\frac{1}{K-1}+5K\eta_l^2\beta_p^2)\frac{1}{N}\sum_{i}\mathbb{E}||\boldsymbol{y}_{i,k-1} - \boldsymbol{x}||^2 \\
        &\quad + 20K\eta_l^2\underbrace{\frac{1}{N}\sum_{i}\mathbb{E}||\mathbb{E}[\boldsymbol{c}_i] -\boldsymbol{p}\odot \nabla f_i(\boldsymbol{x})||^2}_{\Theta_{r-1}} \\
        &\quad + 5K\eta_l^2\mathbb{E}||\nabla f(\boldsymbol{x})||^2 + 5K\eta_l^2\hat{\zeta}_{1-p}^2\\
        &\quad + 20\eta_l^2\sigma_p^2 + \eta_l^2\sigma^2 \,.
    \end{split}
\end{equation}
The above step uses the definition of $\hat{\zeta}_{1-p}^2$. We next bound $\Theta_{r}$ using the smoothness Assumption.~\ref{assum:beta_smooth}.
\begin{equation}
    \begin{split}
        \Theta_{r-1} &= \frac{1}{N}\sum_{i}\mathbb{E}||\frac{1}{K}\sum_{k}\boldsymbol{p}\odot\nabla f_i(\boldsymbol{y}_{i,k}) - \boldsymbol{p}\odot\nabla f_i(\boldsymbol{x})||^2 \\
        &\leq \frac{1}{NK}\sum_{i,k}\mathbb{E}||\boldsymbol{p}\odot(\nabla f_i(\boldsymbol{y}_{i,k}) - \nabla f_i(\boldsymbol{x}))||^2\\
        &\leq \beta_p^2\mathcal{E}_{r-1} \,.
    \end{split}
\end{equation}
Use the definition of $\Theta_r$, we can then bound $\frac{1}{N}\sum_{i}\mathbb{E}||\boldsymbol{y}_{i,k} - \boldsymbol{x}||^2$:
\begin{equation}
    \begin{split}
        \frac{1}{N}\sum_{i}\mathbb{E}||\boldsymbol{y}_{i,k} - \boldsymbol{x}||^2 &\leq \left(1+\frac{1}{K-1}+\frac{5\tilde{\eta}^2\beta_p^2}{K\eta_g^2}\right)\frac{1}{N}\sum_{i}\mathbb{E}||\boldsymbol{y}_{i,k-1} - \boldsymbol{x}||^2 + 20K\eta_l^2\Theta_{r-1} \\
        &\quad + 5K\eta_l^2\mathbb{E}||\nabla f(\boldsymbol{x})||^2 + 5K\eta_l^2\hat{\zeta}_{1-p}^2 + 20\eta_l^2\sigma_p^2 + \eta_l^2\sigma^2 \\
        &\leq (20K\eta_l^2\Theta_{r-1} + 5K\eta_l^2\mathbb{E}||\nabla f(\boldsymbol{x})||^2+5K\eta_l^2\hat{\zeta}_{1-p}^2 \\
        &\quad + 20\eta_l^2\sigma_p^2+\eta_l^2\sigma^2)\sum_{\tau=0}^{k-1}\left(1+\frac{1}{K-1}+\frac{5\tilde{\eta}^2\beta_p^2}{K\eta_g^2}\right)^\tau \\
        &\leq (20K\eta_l^2\Theta_{r-1} + 5K\eta_l^2\mathbb{E}||\nabla f(\boldsymbol{x})||^2+5K\eta_l^2\hat{\zeta}_{1-p}^2 + 20\eta_l^2\sigma_p^2+\eta_l^2\sigma^2)3K \\
        &\leq \frac{60\tilde{\eta}^2\Theta_{r-1}}{\eta_g^2} + \frac{15\tilde{\eta}^2\mathbb{E}||\nabla f(\boldsymbol{x})||^2}{\eta_g^2} + \frac{15\tilde{\eta}^2\hat{\zeta}_{1-p}^2}{\eta_g^2} +  \frac{60\tilde{\eta}^2\sigma_p^2}{K\eta_g^2} + \frac{3\tilde{\eta}^2\sigma^2}{K\eta_g^2} \,.
    \end{split}
\end{equation}
Averaging over $K$ yields the lemma statement. 
\end{proof}

\begin{lemma}
\label{lemma:progress_in_one_round_non_convex}
Suppose the updates~\ref{eq:update_0} to~\ref{eq:update_2} satisfy the smooth assumption and bounded variance. For any effective step size $\tilde{\eta}:=K\eta_l\eta_g$ satisfying $\tilde{\eta} \leq \frac{1}{26\beta}$:
\begin{equation}
    \begin{split}
        \mathbb{E}[f(\boldsymbol{x} + \Delta \boldsymbol{x})] + 9\beta\tilde{\eta}^2\Theta_r &\leq \mathbb{E}[f(\boldsymbol{x})] + 9\beta\tilde{\eta}^2\Theta_{r-1} -\tilde{\eta}\mathbb{E}||\nabla f(\boldsymbol{x})||^2 \\
        &\quad +\frac{\tilde{\eta}^2\beta\sigma^2}{KN}(\frac{63N}{26\eta_g^2} + 9) + \frac{15\tilde{\eta}^2\beta\hat{\zeta}_{1-p}^2}{26\eta_g^2} \,.
    \end{split}
\end{equation}

\end{lemma}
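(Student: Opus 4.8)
The plan is to turn the server step $\boldsymbol{x}^{r}=\boldsymbol{x}^{r-1}+\Delta\boldsymbol{x}$ into a one-round Lyapunov descent by applying the $\beta$-smoothness descent inequality to $f$ and then feeding in the two estimates already proved: Lemma~\ref{lemma:var_non_convex} for $\mathbb{E}\|\Delta\boldsymbol{x}\|^2$ and Lemma~\ref{lemma:client_drift_non_convex} for $\mathcal{E}_r$ and $\Theta_r$. By Assumption~\ref{assum:beta_smooth}, $\mathbb{E}[f(\boldsymbol{x}+\Delta\boldsymbol{x})]\leq \mathbb{E}[f(\boldsymbol{x})]+\langle\nabla f(\boldsymbol{x}),\mathbb{E}[\Delta\boldsymbol{x}]\rangle+\tfrac{\beta}{2}\mathbb{E}\|\Delta\boldsymbol{x}\|^2$, and from update~\ref{eq:update_2} the mean step is $\mathbb{E}[\Delta\boldsymbol{x}]=-\tilde{\eta}\,\bar{\boldsymbol{g}}$ with $\bar{\boldsymbol{g}}:=\tfrac{1}{KN}\sum_{i,k}\nabla f_i(\boldsymbol{y}_{i,k-1})$, so the cross term is $-\tilde{\eta}\langle\nabla f(\boldsymbol{x}),\bar{\boldsymbol{g}}\rangle$.

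First I would handle the cross term by the polarization identity $-\langle a,b\rangle=\tfrac12\|a-b\|^2-\tfrac12\|a\|^2-\tfrac12\|b\|^2$, which yields the favourable $-\tfrac{\tilde{\eta}}{2}\|\nabla f(\boldsymbol{x})\|^2-\tfrac{\tilde{\eta}}{2}\|\bar{\boldsymbol{g}}\|^2$ plus an error $\tfrac{\tilde{\eta}}{2}\|\nabla f(\boldsymbol{x})-\bar{\boldsymbol{g}}\|^2$. Writing $\nabla f(\boldsymbol{x})-\bar{\boldsymbol{g}}=\tfrac{1}{KN}\sum_{i,k}\bigl(\nabla f_i(\boldsymbol{x})-\nabla f_i(\boldsymbol{y}_{i,k-1})\bigr)$ and using Jensen together with per-client smoothness gives $\|\nabla f(\boldsymbol{x})-\bar{\boldsymbol{g}}\|^2\leq \beta^2\mathcal{E}_r$, so the cross term contributes $-\tfrac{\tilde{\eta}}{2}\|\nabla f(\boldsymbol{x})\|^2-\tfrac{\tilde{\eta}}{2}\|\bar{\boldsymbol{g}}\|^2+\tfrac{\tilde{\eta}\beta^2}{2}\mathcal{E}_r$. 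I then substitute Lemma~\ref{lemma:var_non_convex} into $\tfrac{\beta}{2}\mathbb{E}\|\Delta\boldsymbol{x}\|^2$, which injects $4\beta^3\tilde{\eta}^2\mathcal{E}_r$, $8\beta\tilde{\eta}^2\Theta_{r-1}$, a positive $4\beta\tilde{\eta}^2\|\nabla f(\boldsymbol{x})\|^2$, and the noise terms $\tfrac{3\beta\tilde{\eta}^2\sigma^2}{KN}+\tfrac{6\beta\tilde{\eta}^2\sigma_p^2}{KN}$.

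Next I would form the Lyapunov quantity on the left by adding $9\beta\tilde{\eta}^2\Theta_r$ to both sides, bounding $\Theta_r\leq\beta_p^2\mathcal{E}_r\leq\beta^2\mathcal{E}_r$ via Lemma~\ref{lemma:client_drift_non_convex}(b). Collecting all $\mathcal{E}_r$ contributions gives a coefficient $\tfrac{\tilde{\eta}\beta^2}{2}+13\beta^3\tilde{\eta}^2$, which the step-size budget $\tilde{\eta}\leq\tfrac{1}{26\beta}$ bounds by $\beta^2\tilde{\eta}$. I then replace this single $\beta^2\tilde{\eta}\,\mathcal{E}_r$ by the drift estimate Lemma~\ref{lemma:client_drift_non_convex}(a); each resulting term picks up the extra factor $\beta\tilde{\eta}$, so the harmful feedback into $\|\nabla f(\boldsymbol{x})\|^2$ and $\Theta_{r-1}$ stays small, the $\hat{\zeta}$ piece becomes $\beta^2\tilde{\eta}\cdot\tfrac{15\tilde{\eta}^2}{\eta_g^2}\hat{\zeta}_{1-p}^2\leq\tfrac{15\beta\tilde{\eta}^2\hat{\zeta}_{1-p}^2}{26\eta_g^2}$, and the non-positive $-\tfrac{\tilde{\eta}}{2}\|\bar{\boldsymbol{g}}\|^2$ is simply discarded. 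Finally I consolidate the noise with $\sigma_p^2\leq\sigma^2$: the $\tfrac{3\beta\tilde{\eta}^2\sigma^2}{KN}+\tfrac{6\beta\tilde{\eta}^2\sigma_p^2}{KN}$ from the variance bound and the $\tfrac{3\beta^2\tilde{\eta}^3\sigma^2}{K\eta_g^2}+\tfrac{60\beta^2\tilde{\eta}^3\sigma_p^2}{K\eta_g^2}$ from the drift substitution collapse, using $\beta\tilde{\eta}\leq\tfrac{1}{26}$, to exactly $\tfrac{\beta\tilde{\eta}^2\sigma^2}{KN}\bigl(9+\tfrac{63N}{26\eta_g^2}\bigr)$.

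I expect the delicate part to be the constant bookkeeping in this last calibration rather than any structural difficulty: the step-size threshold $\tilde{\eta}\leq\tfrac{1}{26\beta}$ must be strong enough that the reinjected $\mathcal{E}_r$ contributions keep the net $\|\nabla f(\boldsymbol{x})\|^2$ coefficient negative (it descends from $-\tfrac{\tilde{\eta}}{2}$ after absorbing the $O(\beta\tilde{\eta}^2)$ corrections) while pinning the net $\Theta_{r-1}$ coefficient at the stated $9\beta\tilde{\eta}^2$. The interplay between the $\Theta_r$-to-$\mathcal{E}_r$ relaxation, the drift recursion, and the $\tfrac{\beta^2}{2}$ error from the cross term is what forces the precise value of the step-size constant, and getting these coefficients to close simultaneously with the exact constants in the statement is the main obstacle; everything else is a direct assembly of the preceding lemmas.
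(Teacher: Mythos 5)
Your proposal is correct and follows the paper's proof essentially step for step: the $\beta$-smoothness descent inequality, the polarization bound on the cross term together with $\mathbb{E}\|\tfrac{1}{KN}\sum_{i,k}\nabla f_i(\boldsymbol{y}_{i,k})-\nabla f(\boldsymbol{x})\|^2\le\beta^2\mathcal{E}_r$, substitution of Lemma~\ref{lemma:var_non_convex}, the relaxation $9\beta\tilde{\eta}^2\Theta_r\le 9\beta\beta_p^2\tilde{\eta}^2\mathcal{E}_r$, reinjection of the drift bound of Lemma~\ref{lemma:client_drift_non_convex} for the collected $\tilde{\eta}\beta^2\mathcal{E}_r$ term, and the same $\sigma_p^2\le\sigma^2$ noise consolidation yielding $\tfrac{\tilde{\eta}^2\beta\sigma^2}{KN}\bigl(\tfrac{63N}{26\eta_g^2}+9\bigr)$. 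The constant bookkeeping you single out as the main obstacle is indeed the one weak point, but it is equally the weak point of the paper's own write-up (which asserts $\tfrac{15\tilde{\eta}}{169}-\beta\tilde{\eta}^2\le 0$ and a net gradient coefficient of $-\tilde{\eta}$, neither of which follows for all $\tilde{\eta}\le\tfrac{1}{26\beta}$ with $\eta_g\ge 1$; the honest constants are a $\Theta_{r-1}$ coefficient slightly above $9\beta\tilde{\eta}^2$ unless $\eta_g^2\ge 60/26$ and a gradient coefficient of roughly $-\tilde{\eta}/3$), so your attempt matches the paper's argument in both structure and level of rigor.
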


\begin{proof}
\begin{equation}
    \mathbb{E}_{r-1}[f(\boldsymbol{x}+\Delta \boldsymbol{x})] \leq f(\boldsymbol{x})+\langle \nabla f(\boldsymbol{x}), \mathbb{E}_{r-1}[\Delta \boldsymbol{x}]\rangle + \frac{\beta}{2}\mathbb{E}_{r-1}||\Delta \boldsymbol{x}||^2 \,.
\end{equation}
Recall that:
\begin{equation}
    \mathbb{E}[\Delta \boldsymbol{x}] = -\frac{\tilde{\eta}}{KN}\sum_{i,k}\nabla f_i(\boldsymbol{y}_{i,k})\,.
\end{equation}
Therefore
\begin{equation}
    \begin{split}
        \mathbb{E}[f(\boldsymbol{x}+\Delta \boldsymbol{x})] - f(\boldsymbol{x}) &\leq -\frac{\tilde{\eta}}{KN}\sum_{i,k}\langle \nabla f(\boldsymbol{x}), \nabla f_i(\boldsymbol{y}_{i, k})\rangle \\
        &\quad + \frac{\beta}{2}(8\tilde{\eta}^2\beta^2\mathcal{E}_r + 16\tilde{\eta}^2\Theta_{r-1} + 8\tilde{\eta}^2\mathbb{E}||\nabla f(\boldsymbol{x})||^2 + \frac{6\tilde{\eta}^2\sigma^2}{KN} + \frac{12\tilde{\eta}^2\sigma_p^2}{KN}) \\
        &\leq - \frac{\tilde{\eta}}{2}||\nabla f(\boldsymbol{x})||^2 + \frac{\tilde{\eta}}{2}\mathbb{E}||\frac{1}{NK}\sum_{i,k}\nabla f_i(\boldsymbol{y}_{i,k}) - \nabla f(\boldsymbol{x})||^2 \\
        &\quad + 4\tilde{\eta}^2\beta^3\mathcal{E}_r + 8\tilde{\eta}^2\beta \Theta_{r-1} + 4\tilde{\eta}^2\beta\mathbb{E}||\nabla f(\boldsymbol{x})||^2 + \frac{3\tilde{\eta}^2\beta\sigma^2}{KN} + \frac{6\tilde{\eta}^2\beta\sigma_p^2}{KN} \\
        &\leq (4\tilde{\eta}^2\beta - \frac{\tilde{\eta}}{2})\mathbb{E}||\nabla f(\boldsymbol{x})||^2 + (\frac{\tilde{\eta}\beta^2}{2} + 4\tilde{\eta}^2\beta^3)\mathcal{E}_r + 8\tilde{\eta}^2\beta \Theta_{r-1} + \frac{3\tilde{\eta}^2\beta\sigma^2}{KN} + \frac{6\tilde{\eta}^2\beta\sigma_p^2}{KN}\,.
    \end{split}
\end{equation}
The second step uses the inequality that $-ab = \frac{1}{2}((b-a)^2 - a^2) - \frac{1}{2}b^2 \leq \frac{1}{2}((b-a)^2 - a^2)$ for any $a, b\in \mathbb{R}$

\begin{equation}
    \begin{split}
        \tilde{\eta}\beta^2\mathcal{E}_r &\leq \tilde{\eta}\beta^2\left(\frac{60\tilde{\eta}^2\Theta_{r-1}}{\eta_g^2} + \frac{15\tilde{\eta}^2\mathbb{E}||\nabla f(\boldsymbol{x})||^2}{\eta_g^2} + \frac{15\tilde{\eta}^2\hat{\zeta}_{1-p}^2}{\eta_g^2} +  \frac{60\tilde{\eta}^2\sigma_p^2}{K\eta_g^2} + \frac{3\tilde{\eta}^2\sigma^2}{K\eta_g^2} \right) \\
        &\leq \frac{15\tilde{\eta}\Theta_{r-1}}{169} + \frac{4\tilde{\eta}\mathbb{E}||\nabla f(\boldsymbol{x})||^2}{169\eta_g^2} + \frac{15\tilde{\eta}^2\beta\hat{\zeta}_{1-p}^2}{26\eta_g^2} +  \frac{30\tilde{\eta}^2\beta\sigma_p^2}{13K\eta_g^2}+\frac{3\tilde{\eta}^2\beta\sigma^2}{26K\eta_g^2} \,.
    \end{split}
\end{equation}
The above step uses $\tilde{\eta} \leq \frac{1}{26\beta}$

\begin{equation}
    9\beta\tilde{\eta}^2\Theta_r \leq 9\beta\tilde{\eta}^2\beta_p^2\mathcal{E}_r \,.
\end{equation}

Adding the above three equations together:
\begin{equation}
    \begin{split}
        \mathbb{E}[f(\boldsymbol{x}+\Delta \boldsymbol{x})] + 9\beta\tilde{\eta}^2\Theta_r &\leq \mathbb{E}[f(\boldsymbol{x})] + 9\beta\tilde{\eta}^2\Theta_{r-1} \\
        &\quad + (4\tilde{\eta}^2\beta + \frac{4\tilde{\eta}}{169\eta_g^2} - \frac{\tilde{\eta}}{2})\mathbb{E}||\nabla f(\boldsymbol{x})||^2 \\
        &\quad + (4\tilde{\eta}^2\beta^3 + 9\beta\beta_p^2\tilde{\eta}^2-\frac{\tilde{\eta}\beta^2}{2})\mathcal{E}_r \\
        &\quad + (\frac{15\tilde{\eta}}{169} - \beta\tilde{\eta}^2)\Theta_{r-1} \\
        &\quad + \frac{\tilde{\eta}^2\beta\sigma^2}{KN}(\frac{3N}{26\eta_g^2} + 3) + \frac{\tilde{\eta}^2\beta\sigma_p^2}{KN}(\frac{30N}{13\eta_g^2} + 6) + \frac{15\tilde{\eta}^2\beta\hat{\zeta}_{1-p}^2}{26\eta_g^2} \\
        &\leq \mathbb{E}[f(\boldsymbol{x})] + 9\beta\tilde{\eta}^2\Theta_{r-1} -\tilde{\eta}\mathbb{E}||\nabla f(\boldsymbol{x})||^2\\
        &\quad +\frac{\tilde{\eta}^2\beta\sigma^2}{KN}(\frac{3N}{26\eta_g^2} + 3) + \frac{\tilde{\eta}^2\beta\sigma_p^2}{KN}(\frac{30N}{13\eta_g^2} + 6) + \frac{15\tilde{\eta}^2\beta\hat{\zeta}_{1-p}^2}{26\eta_g^2} \\ 
        &\leq \mathbb{E}[f(\boldsymbol{x})] + 9\beta\tilde{\eta}^2\Theta_{r-1} -\tilde{\eta}\mathbb{E}||\nabla f(\boldsymbol{x})||^2 + \frac{\tilde{\eta}^2\beta\sigma^2}{KN}(\frac{63N}{26\eta_g^2} + 9) + \frac{15\tilde{\eta}^2\beta\hat{\zeta}_{1-p}^2}{26\eta_g^2}\,.
    \end{split}
\end{equation}

From the first equation, we see $\tilde{\eta} \leq \frac{1}{26\beta}$ implies $(4\tilde{\eta}^2\beta^3 + 9\tilde{\eta}^2\beta\beta_p^2 - \frac{\tilde{\eta}\beta^2}{2}) \leq 0$, $4\tilde{\eta}^2\beta + \frac{4\tilde{\eta}}{169\eta_g^2} - \frac{\tilde{\eta}}{2} \leq 0$, and $\frac{15\tilde{\eta}}{169} - \beta\tilde{\eta}^2 \leq 0$. The last step uses the inequality $\sigma_p^2\leq \sigma^2$

\end{proof}

\vspace*{11mm}

\textbf{Convergence rate:} Average the above equation over $R$ rounds:

\begin{equation}
    \begin{split}
        \frac{1}{R}\sum_{r=1}^R \mathbb{E}||\nabla f(\boldsymbol{x}^r)||^2 &= \frac{\mathbb{E}[f(\boldsymbol{x}^R)] - \mathbb{E}[f(\boldsymbol{x}^0)]}{\tilde{\eta}R} + 9\beta\tilde{\eta}\Theta_0 + \frac{\tilde{\eta}\beta\sigma^2}{KN}(\frac{63N}{26\eta_g^2}+9) + \frac{15\tilde{\eta}\beta\hat{\zeta}_{1-p}^2}{26\eta_g^2}  \\
        &\leq \sigma\sqrt{\frac{\mathbb{E}[f(\boldsymbol{x}^R)] -\mathbb{E}[f(\boldsymbol{x}^0)]}{KNR}}\sqrt{\beta\left(\frac{63N}{\eta_g^2}+9\right)} + \hat{\zeta}_{1-p}\sqrt{\frac{\mathbb{E}[f(\boldsymbol{x}^R)] -\mathbb{E}[f(\boldsymbol{x}^0)]}{R}}\sqrt{\frac{15\beta}{26\eta_g^2}} \\
        &\quad + \frac{26\beta(\mathbb{E}[f(\boldsymbol{x}^R)] -\mathbb{E}[f(\boldsymbol{x}^0)])}{R}\,.
    \end{split}
\end{equation}

Note that if we initialize $\boldsymbol{c}_i^0 = g_i(\boldsymbol{x}^0), \Theta_0=\frac{1}{N}\sum_{i=1}^N\mathbb{E}||\mathbb{E}[\boldsymbol{c}_i^0] - \nabla f_i(\boldsymbol{x}^0)||^2 = 0$. The last step follows from using a stepsize $\tilde{\eta} = \text{min}\left(\frac{1}{26\beta}, \sqrt{\frac{\mathbb{E}[f(\boldsymbol{x}^R)] -\mathbb{E}[f(\boldsymbol{x}^0)]}{R}}\sqrt{\frac{1}{\frac{\beta\sigma^2}{KN}(\frac{63N}{\eta_g^2}+9) + \frac{15\beta\hat{\zeta}_{1-p}^2}{26\eta_g^2}}}\right)$ and Lemma.~\ref{lemma:tune_stepsize}.

Therefore, for non-convex functions:
\begin{equation}
    \mathbb{E}||\nabla f(\bar{\boldsymbol{x}}^R)||^2\leq \mathcal{O}\left(\frac{\sigma\sqrt{F}}{\sqrt{KNR}}\sqrt{\beta\left(\frac{N}{\eta_g^2} + 1\right)} + \frac{\hat{\zeta}_{1-p}\sqrt{F}}{\sqrt{R}}\sqrt{\frac{\beta}{\eta_g^2}} + \frac{\beta F}{R} \right)\,.
\end{equation}

\section{Extra experimental setup and results}
\label{sec:appendix_experiment}
\subsection{Additional experimental setups}

\textbf{Data distribution}
We follow the procedure as described in~\cite{DBLP:conf/nips/LinKSJ20} to simulate the data heterogeneity scenario. There are around $5000$ images per client. We use $1\% (500)$ of the images as the validation dataset to tune the hyperparameters (learning rate and schedule). Fig.~\ref{fig:distribution_of_data} shows data distribution across clients using CIFAR10 and CIFAR100. When $\alpha=0.1$, some clients may only have data from a single class. 
\begin{figure}[ht!]
    \centering
    \includegraphics{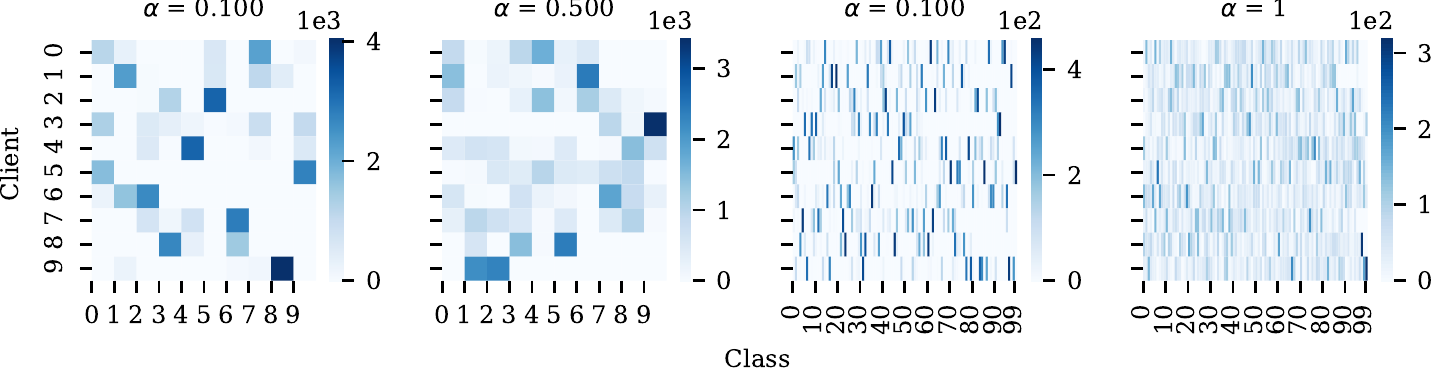}
    \caption{Different levels of data heterogeneity. The left two figures are from CIFAR10 and the right two figures are from CIFAR100.}
    \label{fig:distribution_of_data}
\end{figure}

\textbf{Conformal prediction}
Let $d_i$ denotes a data point and $l_i$ denotes the corresponding label. Conformal prediction aims to produce a predictive set $\mathcal{C}_{\kappa}(d_i)$ such that:
\begin{equation}
    P(l_i \in \mathcal{C}_{\kappa}(d_i)) \geq 1 - \kappa
\end{equation}
where $\kappa \in (0, 1)$ specifies the desired converge level. To find the predictive set $\mathcal{C}$, we use a threshold $\tau$ on the predicted probability to indicate which predictions are included in the prediction set. Specifically, we assume that the server can have a small portion of a dataset in the same domain as the datasets in each client (e.g. the validation dataset). As we only suggest conformal prediction as an effective tool when the task is sensitive and producing a wrong prediction is dangerous (e.g. chemical hazards detection), we think it is reasonable to make such an assumption to guarantee the performance. 

\textbf{Architecture}
We use two types of neural networks in our paper: VGG-11 and ResNet8. We simply adopt the architectures that are used in~\cite{DBLP:conf/nips/LinKSJ20}. Detailed information about these two architectures can be found at \url{https://github.com/epfml/federated-learning-public-code}.

\textbf{Hyperparameters} The learning rate and learning rate schedule are shown in the following two tables. Most of the experiments use a learning rate of 0.1 with a constant schedule, which means that each client always uses a learning rate of 0.1 along the communication rounds. To stabilize the training procedure, we apply momentum with a factor of 0.9 on the block of weights that are not variance-reduced. We use 10 clients with full participation. The number of local epochs per round is 10. For FedProx~\cite{DBLP:journals/corr/abs-1812-06127}, we tune the temperature parameter $\mu$ from $\{0.1, 0.5, 1.0\}$. For FedDyn~\cite{DBLP:journals/corr/abs-2111-04263}, the penalization parameter ($\alpha$ as used in~\cite{DBLP:journals/corr/abs-2111-04263}) is 0.001 for CIFAR10 and 0.01 for CIFAR100. We tune the centralised learning experiment with constant learning rate and momentum parameter from $\{0.9, 0.95\}$. 

\begin{table}[ht!]
\caption{The learning rate and learning rate schedule for CIFAR10 experiments where $c$ represents \texttt{constant} and $m$ represents multi-step decay}
\resizebox{0.85\textwidth}{!}
{\begin{minipage}{\textwidth}
\begin{tabular}{@{}lcccccccccccc@{}}
\toprule
 & \multicolumn{6}{c}{VGG-11} & \multicolumn{6}{c}{ResNet8} \\ \midrule
 & \multicolumn{2}{c}{FedAvg} & \multicolumn{2}{c}{SCAFFOLD} & \multicolumn{2}{c}{FedPVR} & \multicolumn{2}{c}{FedAvg} & \multicolumn{2}{c}{SCAFFOLD} & \multicolumn{2}{c}{FedPVR} \\
 & $\alpha=0.1$  & $\alpha=0.5$ &$\alpha=0.1$  & $\alpha=0.5$   & $\alpha=0.1$ & $\alpha=0.5$   & $\alpha=0.1$ & $\alpha=0.5$   &$\alpha=0.1$  & $\alpha=0.5$   &$\alpha=0.1$  & $\alpha=0.5$   \\
LR &0.05  &0.1  &0.05  &0.1  &0.05  &0.1  & 0.1 &0.2  &0.1  &0.3  &0.1  & 0.3 \\
LR-schedule &c  &c  &m  & c & c &c  &c &c  &c  &c  &c  & c \\ \bottomrule
\end{tabular}
\end{minipage}}
\end{table}

\begin{table}[ht!]
\caption{The learning rate and learning rate schedule for CIFAR100 experiments where $c$ represents \texttt{constant} and $m$ represents \texttt{multi-step decay} and $cos$ represents \texttt{cosine-decay}}
\resizebox{0.85\textwidth}{!}
{\begin{minipage}{\textwidth}
\begin{tabular}{@{}lcccccccccccc@{}}
\toprule
 & \multicolumn{6}{c}{VGG-11} & \multicolumn{6}{c}{ResNet8} \\ \midrule
 & \multicolumn{2}{c}{FedAvg} & \multicolumn{2}{c}{SCAFFOLD} & \multicolumn{2}{c}{FedPVR} & \multicolumn{2}{c}{FedAvg} & \multicolumn{2}{c}{SCAFFOLD} & \multicolumn{2}{c}{FedPVR} \\
 & $\alpha=0.1$  & $\alpha=1.0$ &$\alpha=0.1$  & $\alpha=1.0$ & $\alpha=0.1$ & $\alpha=1.0$  & $\alpha=0.1$ & $\alpha=1.0$   &$\alpha=0.1$  & $\alpha=1.0$   &$\alpha=0.1$  & $\alpha=1.0$   \\
LR &0.1  &0.05  &0.1  &0.1  &0.1  &0.1  & 0.1 & 0.1 &0.2  & 0.2 &0.1  &0.1  \\
LR-schedule &c  & c&c  &c& c &c  &c &c  &cos  &c  &c  & c \\ \bottomrule
\end{tabular}
\end{minipage}}
\end{table}

\subsection{Additional experimental results}

\begin{figure}[ht!]
    \centering
    \includegraphics{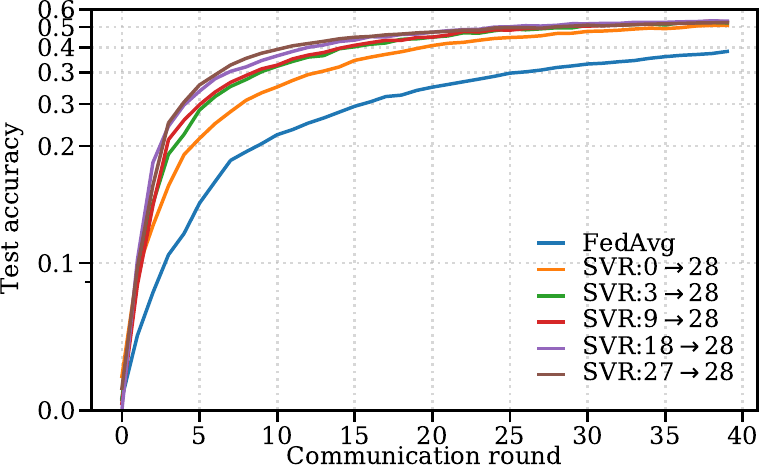}
    \caption{Influence of using variance reduction on layers that start from different positions in a neural network on the learning speed. \texttt{SVR:0$\rightarrow$28} applies variance reduction on the entire model, which corresponds to \texttt{SCAFFOLD}. \texttt{SVR:27$\rightarrow$28} applies variance reduction from the layer index 27 to 28, which corresponds to our method. The later we apply variance reduction, the better performance speedup we obtain. However, no variance reduction (\texttt{FedAvg}) performs the worst here.}
    \label{fig:resnet_svr}
\end{figure}

\begin{figure}[ht!]
    \centering
    \includegraphics{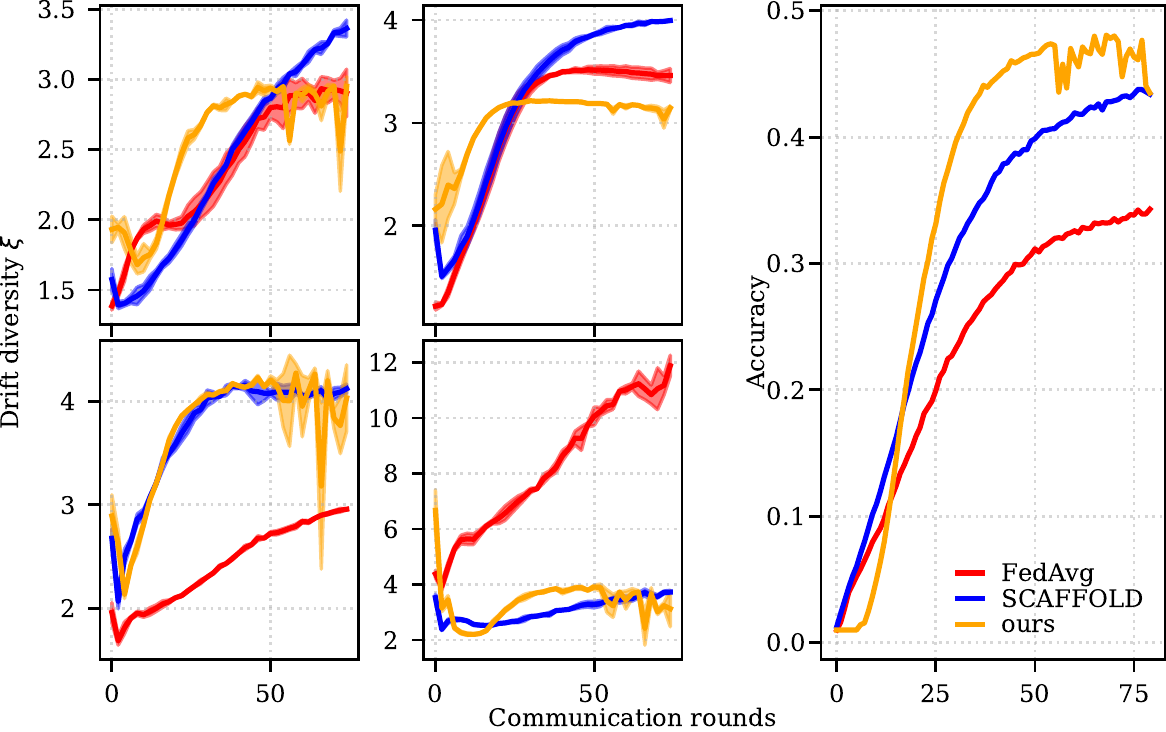}
    \caption{Drift diversity and learning curve for VGG-11 on CIFAR100 with $\alpha=0.1$. Compared to FedAvg, SCAFFOLD and our method can both improve the agreement between the classifiers. Compared to SCAFFOLD, our method results in a higher gradient diversity at the early stage of the communication, which tends to boost the learning speed as the curvature of the drift diversity seem to match the learning curve.}
    \label{fig:my_label}
\end{figure}

\begin{figure}[ht!]
    \centering
    \includegraphics{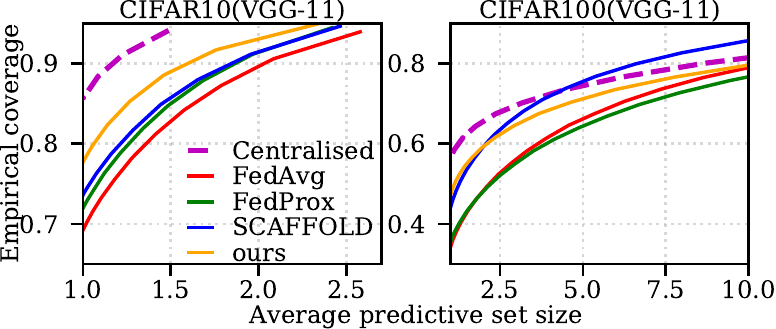}
    \caption{Relation between average predictive size and empirical coverage using centralised empirical coverage instead of centralised Top-1 accuracy.}
    \label{fig:conformal_with_centralised}
\end{figure}

\end{document}